\documentclass{siamart251216}
\usepackage[english]{babel}

\usepackage{placeins}
\usepackage{subcaption}
\usepackage{algorithm,algorithmic}
\usepackage{amsmath,amsfonts,amssymb}
\newtheorem{conjecture}{Conjecture}[theorem]
\usepackage{graphicx}
\usepackage{subcaption}
\usepackage{caption}
\title{The Frame Kernel Method for Multiscale Operator Learning\thanks{Submitted to the editors on August 17, 2026. \funding{BF, RMK, and VS were supported by Air Force Office of Scientific Research (AFOSR) LRIR grant FA9550-25-1-0042. RW was supported by AFOSR LRIR grant 24RXCOR005. VS was also supported by the U.S. National Science Foundation, Division of Mathematical Sciences, under award 2505986.}}}

\author{
Branden Frieden\thanks{Kahlert School of Computing, University of Utah,
Salt Lake City, UT 84112, USA.}
\and
Ryan Whitehead\thanks{U.S. Air Force Research Laboratory,
Wright-Patterson AFB, OH 45433, USA.}
\and
M. Keith Ballard\footnotemark[3]
\and
Robert M. Kirby\footnotemark[2]
\and
Varun Shankar\thanks{Kahlert School of Computing, University of Utah,
Salt Lake City, UT 84112, USA. Corresponding author.}
}
\begin{document}
\maketitle

\begin{abstract}
We present a natively multiscale operator learning method for the surrogate modeling of (numerical solvers for) multiscale partial differential equations (PDEs). The primary novelty of our method lies in a novel multiscale kernel frame function approximation technique. Leveraging this new kernel frame technique, we cast the operator learning problem as one of learning frame coefficients of output functions as a function of frame coefficients of input functions. The generalization step then automatically allows for a multiscale decomposition of the output functions. Our method is applicable to both tensor-product grids and point clouds. We present interpolation proofs, error estimates, and numerical convergence rates for our frame approximation. We the demonstrate the applicability of our method for the surrogate modeling of inherently multiscale PDEs. The new multiscale frame kernel method is significantly more accurate than popular neural operators on challenging problems from the literature, while simultaneously admitting an a posteriori multiscale decomposition upon generalization.
\end{abstract}

\begin{keywords}
Frame Kernel Method, multiscale kernel-frame approximation, scientific machine learning, multiscale operator learning 
\end{keywords}

\begin{MSCcodes}
68T05, 65J15, 41A99
\end{MSCcodes}

\section{Introduction}
\label{sec:intro}

Many applications require thousands or millions of solves for design, optimization, uncertainty quantification, inverse problems, control, and digital twins. Surrogate models replace these repeated solves with rapidly evaluable approximations trained from simulation or experimental data~\cite{SacksEtAl1989,KennedyOHagan2001,BennerGugercinWillcox2015,PeherstorferWillcoxGunzburger2018,BruntonNoackKoumoutsakos2020,KochkovEtAl2021}. Operator learning goes further by approximating maps between function spaces. Neural operators parametrize the map using neural networks and come in multiple varieties. Examples include deep operator networks (DeepONets)~\cite{LuEtAl2021DeepONet}; Fourier neural operators (FNOs), which parameterize nonlocal layers in Fourier space~\cite{LiEtAl2021FNO,KovachkiEtAl2023}; kernel neural operators (KNOs), which use trainable closed-form kernels and quadrature~\cite{LoweryEtAl2026KNO}; and Transolver and Transolver++, which use physics-aware attention on general, including million-point, geometries~\cite{WuEtAl2024Transolver,LuoEtAl2025TransolverPP}. Basis-to-Basis (B2B) operator learning makes the representation problem explicit by learning input and output bases together with a coefficient map~\cite{IngebrandEtAl2025B2B}.

Kernel methods offer a compelling alternative to neural operators. Batlle et al.~combine observation and recovery maps with finite-dimensional kernel regression and show that standard kernels can compete with widely used neural architectures~\cite{BatlleEtAl2024}. Turnage et al.~develop a complementary weighted least-squares theory with optimal sampling measures, stability guarantees, and explicit finite-dimensional operator spaces~\cite{TurnageEtAl2025}. These approaches build on scalar-, vector-, and operator-valued kernel theory~\cite{MicchelliPontil2005,KadriEtAl2016}, while retaining a crucial modeling choice: the user selects the input and output approximation spaces (much like in B2B or DeepONets). In each case, the representation directly controls the difficulty of the learned map.

This choice becomes central for multiscale PDEs. Heterogeneous media, singular perturbations, turbulence, porous flow, composite materials, and high-frequency waves generate interacting spatial scales that may differ by orders of magnitude~\cite{HouWu1997,EEngquist2003,WeinanE2011,EfendievGalvisHou2013}. Operator-learning methods have only recently begun to encode comparable structure. Multiwavelet and wavelet neural operators use repeated multiresolution transforms~\cite{GuptaXiaoBogdan2021,TripuraChakraborty2023}; hierarchical attention neural operators address spectral bias against fine scales~\cite{LiuEtAl2024HANO}; M2NO combines multiwavelets with algebraic multigrid~\cite{LiEtAl2024M2NO}; locally subspace-informed neural operators learn GMsFEM spaces~\cite{RudikovEtAl2025}; and MscaleFNO uses parallel rescaled Fourier branches for highly oscillatory operator maps~\cite{YouXuCai2026MscaleFNO}. These methods all recognize that multiscale surrogates require representations that expose the relevant scales.

Kernel operator learning lets us impose such a representation directly without architecture tuning, but the appropriate multiscale space is not obvious. Fourier bases provide global spectral efficiency and fast transforms on regular grids, yet encode scale through frequency and fit periodic or rectangular domains most naturally~\cite{Trefethen2000,CanutoEtAl2006,LiEtAl2021FNO}. Wavelets and multiwavelets localize in space and frequency and support sparse multiresolution representations, but require refinement structures, filter banks, or mesh-dependent machinery~\cite{Daubechies1992,Mallat2009,Cohen2003}. Hierarchical finite elements, splines, and multigrid bases handle complex PDE discretizations but usually inherit mesh-connectivity and conformity requirements~\cite{BrennerScott2008,BriggsHensonMcCormick2000,EfendievGalvisHou2013}. %Samplets extend wavelet-like analysis to scattered data through signed measures with vanishing moments and support data and kernel-matrix compression~\cite{HarbrechtMulterer2022,AvesaniEtAl2024}. 
Radial basis function (RBF) and kernel spaces work directly on scattered sites and general geometries, but a single length scale does not provide an explicit multiscale representation~\cite{Wendland2005,FasshauerMcCourt2015}.

We seek coordinates tied directly to multiple \emph{spatial length scales}, rather than frequency separation alone. For implementation simplicity, we also require one local construction that applies without modification to tensor-product grids, vertices of general simulation meshes, and unstructured point clouds; interpolates the sampled data exactly; and exposes each scale after prediction. %These requirements point to a redundant space of compactly supported kernels with scale-dependent support radii.

We introduce a multiscale kernel frame, built from compactly supported radial basis functions centered on nested subsets of the observation sites, that satisfies these requirements. Each level uses a different center density and support radius, and all levels form one finite redundant dictionary. While earlier work developed multiscale reproducing kernels, tight-frame expansions, and multilevel radial basis function algorithms with changing scales~\cite{Opfer2006Multiscale,Opfer2006TightFrame,Wendland2010,GriebelRiegerZwicknagl2015,LeGiaSloanWendland2017,AvesaniEtAl2024}, our construction differs algebraically and in purpose: it requires neither a refinement equation, an orthogonal or tight basis, a dyadic lattice, nor sequential residual correction. Instead, we determine all scale coefficients simultaneously. A kernel translate at every observation site in the finest block guarantees exact interpolation, while coarser blocks add redundancy and produce a scale-indexed representation. The same construction works on regular grids and scattered points without mesh connectivity.

We establish the frame property and exact interpolation of its canonical minimum-norm reconstruction, then derive Sobolev error estimates from scattered-zeros inequalities. The resulting baseline native-space rates do not explain the substantially faster convergence observed for smooth targets. An exact induced-kernel interpretation identifies the frame kernel as a finite multiscale autocorrelation and motivates a doubled-native-space conjecture consistent with the observed orders.

We next combine the frame with the kernel operator-learning framework of Batlle et al.~\cite{BatlleEtAl2024}, which we call the \emph{vanilla kernel method} (VKM). Our \emph{frame kernel method} (FKM) represents every input and output in a multiscale frame, normalizes input coefficients level by level, and organizes the predicted output coefficients by frame level. FKM thus maps between explicit multiscale coordinates rather than raw nodal values. It uses the same training data, yet our benchmarks show reductions of $1$--$4$ orders of magnitude in prediction error relative to VKM and competing operator-learning methods, including neural operators. Because the predicted coefficients retain their frame-level organization, FKM also provides an a posteriori scale-indexed decomposition of every prediction.

The remainder of the paper is organized as follows. Section~\ref{sec:methods} defines the multiscale kernel frame, its nested center hierarchy, the required numerical linear algebra, and the FKM. Section~\ref{sec:frame_interpolation_error} proves exact interpolation, derives the baseline Sobolev estimate, and states the doubled-order conjecture. Section~\ref{sec:results} presents the frame-approximation (Section~\ref{sec:frame_results}) and operator-learning results (Section~\ref{sec:ol_results}), including scale-resolved reconstructions (Section~\ref{sec:mult-analysis}). Section~\ref{sec:conclusions} summarizes the results and discusses future directions.

\section{Methods}
\label{sec:methods}

\subsection{Multiscale kernel-frame approximation}
\label{sec:multiscale-frame}

Let $\Omega\subset\mathbb{R}^d$, and let
$X=\{x_i\}_{i=1}^M\subset\Omega$ denote a set of sampling sites with observations
$u_i\approx u(x_i)$, collected in
$\mathbf{u}=[u_1,\ldots,u_M]^\top\in\mathbb{R}^M$.
We construct a nested hierarchy of primary center sets,
\begin{equation}
C_0=X\supset C_1\supset\cdots\supset C_J,
\qquad C_j\subset X.
\label{eq:center-hierarchy}
\end{equation}
Level $0$ is the finest level, while increasing $j$ corresponds to progressively coarser center sets. When additional boundary coverage is needed, we augment $C_j$ with a set $G_j$ of auxiliary centers and write
\[
\Xi_j=C_j\cup G_j.
\]
The primary sets $C_j$ remain nested, whereas the auxiliary centers need not belong to $X$ or satisfy a nesting relation. If no boundary augmentation is used, then $G_j=\varnothing$ and $\Xi_j=C_j$.

At level $j$, let $\rho_j>0$ be a support radius and define
\begin{equation}
\psi_{j,k}(x)
=
\phi\!\left(\frac{\|x-\xi_{j,k}\|_2}{\rho_j}\right),
\qquad
\xi_{j,k}\in\Xi_j,
\qquad
k=1,\ldots,N_j,
\label{eq:frame-atoms}
\end{equation}
where $N_j=|\Xi_j|$. We choose $\phi$ from the Wendland family of compactly supported positive-definite radial basis functions~\cite{Wendland1995,Wendland2005,FasshauerMcCourt2015}. The multiscale approximant is
\begin{equation}
\widehat{u}(x)
=
\sum_{j=0}^J\sum_{k=1}^{N_j}c_{j,k}\psi_{j,k}(x).
\label{eq:multiscale-approximant}
\end{equation}
Outer level weights could be introduced by replacing $\psi_{j,k}$ with $w_j\psi_{j,k}$. We omit them here because the principal scale dependence is already encoded through $\rho_j$.

Because $C_0=X$ and the coarser levels contribute additional basis functions, the total number of functions,
\[
N=\sum_{j=0}^J N_j,
\]
typically exceeds the number $M$ of observations. The representation is therefore redundant: a sampled function may be reproduced by more than one coefficient vector. This redundancy is intentional. The finest level provides sufficient resolution at the sampling sites, while the additional levels allow the representation to distribute information across multiple spatial scales.

The natural mathematical language for such a redundant spanning family is that of a frame. A collection $\{a_n\}_{n=1}^N\subset\mathbb{R}^M$ is a frame for $\mathbb{R}^M$ if there exist constants $0<a\leq b<\infty$ such that
\begin{equation}
a\|v\|_2^2
\leq
\sum_{n=1}^N|\langle v,a_n\rangle|^2
\leq
b\|v\|_2^2,
\qquad
v\in\mathbb{R}^M.
\label{eq:frame-definition}
\end{equation}
In finite dimensions, the lower frame bound is equivalent to the vectors spanning $\mathbb{R}^M$, while $a$ and $b$ quantify the stability of that spanning property~\cite{DuffinSchaeffer1952,Christensen2016}. In the present construction, the frame vectors are the sampled basis functions
\[
a_{j,k}
=
[\psi_{j,k}(x_1),\ldots,\psi_{j,k}(x_M)]^\top,
\]
namely, the columns of the full evaluation matrix defined below. Theorem~\ref{thm:multiscale_frame_interpolation} shows that their union forms a frame for the discrete data space and that the resulting minimum-norm approximation interpolates the observations.

\paragraph{Scale selection}
We choose
\begin{equation}
\rho_j=\eta\,2^j s_0,
\label{eq:rho-schedule}
\end{equation}
where $\eta>0$ controls support overlap and $s_0$ is a representative local spacing on the finest level. This quantity is used only to set the support radii and is distinct from the fill distance used in the convergence analysis. On a tensor-product grid, we take $s_0$ to be the minimum grid spacing. For a general point cloud, we use the median nearest-neighbor distance,
\begin{equation}
s_0
=
\operatorname{median}_{x\in X}
\min_{x'\in X\setminus\{x\}}\|x-x'\|_2,
\label{eq:point-cloud-spacing}
\end{equation}
which provides a robust estimate of the local sampling scale~\cite{Wendland2005,Buhmann2003}.

\paragraph{Discrete evaluation operators}
For each level, define the sparse evaluation matrix
\begin{equation}
B_j\in\mathbb{R}^{M\times N_j},
\qquad
(B_j)_{i,k}
=
\phi\!\left(\frac{\|x_i-\xi_{j,k}\|_2}{\rho_j}\right).
\label{eq:level-evaluation-matrix}
\end{equation}
Compact support implies $(B_j)_{i,k}=0$ whenever $\|x_i-\xi_{j,k}\|_2>\rho_j$. The full evaluation matrix is the block concatenation
\begin{equation}
A
=
[\,B_0\;B_1\;\cdots\;B_J\,]
\in\mathbb{R}^{M\times N}.
\label{eq:A-def}
\end{equation}
We assemble each block using radius queries implemented with a k-d-tree range-search structure~\cite{Bentley1975}.

For query sites $X_q=\{x_{q,i}\}_{i=1}^Q$, the same construction gives
\[
B_{qj}\in\mathbb{R}^{Q\times N_j},
\qquad
(B_{qj})_{i,k}=\psi_{j,k}(x_{q,i}),
\]
and
\begin{equation}
A_q=[\,B_{q0}\;B_{q1}\;\cdots\;B_{qJ}\,].
\label{eq:query-evaluation-matrix}
\end{equation}
For a coefficient vector $\mathbf{c}$, evaluation at the query sites is then $\widehat{u}(X_q)=A_q\mathbf{c}$.

\subsubsection{Nested center selection}
\label{sec:nested-center-selection}

The primary center sets satisfy $C_{j+1}\subset C_j\subset X$. Auxiliary centers may subsequently be added near the boundary to improve geometric coverage.

\paragraph{Dyadic thinning on tensor-product grids}
Assume that $X$ is a $d$-dimensional tensor product grid with sizes $n_1,\ldots,n_d$ and a fixed flattening order consistent with Cartesian product indexing. At level $j$, let $s_j=2^j$ and define
\begin{equation}
\mathcal{I}_j
=
\left\{
(i_1,\ldots,i_d):
i_\ell\in\{1,1+s_j,1+2s_j,\ldots\},
\quad \ell=1,\ldots,d
\right\}.
\label{eq:dyadic-index-set}
\end{equation}
The corresponding primary center set is
\[
C_j=\{x(i):i\in\mathcal{I}_j\}.
\]
Because every index selected with stride $2^{j+1}$ is also selected with stride $2^j$, we have $C_{j+1}\subset C_j$. The center spacing therefore grows geometrically with $j$, consistently with the support-radius schedule~\eqref{eq:rho-schedule}. This construction is the grid analogue of multiresolution subsampling used in multilevel kernel approximation~\cite{Wendland2005,FasshauerMcCourt2015}.

\paragraph{Farthest-point thinning on point clouds}
For a general point cloud, we construct the nested hierarchy using farthest-first traversal, also known as Gonzalez traversal~\cite{Gonzalez1985}. Beginning from an arbitrary point $p_1\in X$, define
\begin{equation}
p_{m+1}
\in
\operatorname*{arg\,max}_{x\in X}
\min_{1\leq r\leq m}\|x-p_r\|_2.
\label{eq:farthest-first}
\end{equation}
This produces a single ordering $p_m=x_{\pi(m)}$ of the points in $X$. We compute the ordering once and define each primary center set as a prefix,
\begin{equation}
C_j=\{p_1,\ldots,p_{m_j}\},
\qquad
M=m_0>m_1>\cdots>m_J,
\label{eq:farthest-prefixes}
\end{equation}
so that $C_{j+1}\subset C_j$ automatically. The target cardinalities may be chosen to mimic geometric coarsening, for example $m_j\approx M/2^{jd}$, or to produce a prescribed geometric increase in separation distance.

Farthest-first traversal provides good coverage of the observed point cloud but does not guarantee comparable coverage outside its convex hull. When evaluation is performed on a bounding box, poorly covered boundary layers can dominate the error. We therefore augment each primary set $C_j$, when necessary, with auxiliary samples $G_j$ placed on the boundary of the evaluation region~\cite{Wright2002}. These auxiliary centers are included in the level dictionary $\Xi_j=C_j\cup G_j$ but are not part of the nested primary hierarchy.

\subsubsection{Linear algebra}
\label{sec:frame-linear-algebra}

Collect the coefficients levelwise as
\[
\mathbf{c}
=
[\mathbf{c}_0^\top,\ldots,\mathbf{c}_J^\top]^\top
\in\mathbb{R}^N,
\qquad
\mathbf{c}_j\in\mathbb{R}^{N_j}.
\]
Then $\widehat{u}(X)=A\mathbf{c}$. Because $A$ is wide, the interpolation equations generally admit multiple solutions. We select the canonical minimum-norm coefficient vector
\begin{equation}
\mathbf{c}(\mathbf{u})
=
\operatorname*{arg\,min}_{\mathbf{z}\in\mathbb{R}^N}
\|\mathbf{z}\|_2
\quad\text{subject to}\quad
A\mathbf{z}=\mathbf{u}.
\label{eq:minimum-norm-coefficients}
\end{equation}
Theorem~\ref{thm:multiscale_frame_interpolation} guarantees that $A$ has full row rank, so~\eqref{eq:minimum-norm-coefficients} is feasible for every $\mathbf{u}\in\mathbb{R}^M$ and has a unique solution. We compute this solution using a thin QR factorization of $A^\top$,
\begin{equation}
A^\top=QR,
\qquad
Q\in\mathbb{R}^{N\times M},
\qquad
R\in\mathbb{R}^{M\times M},
\label{eq:frame-qr}
\end{equation}
where $Q^\top Q=I_M$ and $R$ is nonsingular and upper triangular. Solving
\[
R^\top\mathbf{y}=\mathbf{u}
\]
and setting
\begin{equation}
\mathbf{c}(\mathbf{u})=Q\mathbf{y}
\label{eq:frame-qr-solution}
\end{equation}
yields the minimum-norm interpolating coefficients. Equivalently,
\[
\mathbf{c}(\mathbf{u})
=A^\top(AA^\top)^{-1}\mathbf{u}.
\]
Thus, a single factorization of $A^\top$ can be reused for every function sampled on the same sites. In our experiments, this construction required no additional regularization. Reconstruction at the observation and query sites is additive across levels:
\begin{equation}
\widehat{u}(X)
=
\sum_{j=0}^J B_j\mathbf{c}_j,
\qquad
\widehat{u}(X_q)
=
\sum_{j=0}^J B_{qj}\mathbf{c}_j.
\label{eq:frame-reconstruction}
\end{equation}

\subsection{The frame kernel method for multiscale operator learning}
\label{sec:fkm}

We now use the multiscale frame representation within a kernel method for operator learning~\cite{BatlleEtAl2024}. Let
\[
\mathcal{G}:\mathcal{U}\rightarrow\mathcal{V}
\]
be an operator, and suppose that the training data consist of pairs
\[
\{(u_\ell,v_\ell)\}_{\ell=1}^{N_T},
\qquad
v_\ell=\mathcal{G}(u_\ell).
\]
The input and output functions may be defined on different domains and sampled at different sets of sites. We therefore construct separate input and output frames with evaluation matrices
\[
A_u\in\mathbb{R}^{M_u\times N_u},
\qquad
A_v\in\mathbb{R}^{M_v\times N_v}.
\]
The two frames may differ in their numbers of levels, centers, and coefficients.

The vanilla kernel method (VKM) learns a map directly from sampled input functions to sampled output functions. The proposed \emph{frame kernel method} (FKM) instead uses input-frame coefficients as kernel features and organizes the output coefficients according to the levels of the multiscale frame. For each training pair, we compute the canonical coefficient vectors
\begin{equation}
\mathbf{c}_{u_\ell}=A_u^\dagger u_\ell(X_u),
\qquad
\mathbf{c}_{v_\ell}=A_v^\dagger v_\ell(X_v),
\label{eq:fkm-frame-coefficients}
\end{equation}
using the QR procedure in~\eqref{eq:frame-qr}--\eqref{eq:frame-qr-solution}. Partition these vectors according to their frame levels:
\[
\mathbf{c}_{u_\ell}
=
[\mathbf{c}_{u_\ell,0}^\top,\ldots,
 \mathbf{c}_{u_\ell,J_u}^\top]^\top,
\qquad
\mathbf{c}_{u_\ell,j}\in\mathbb{R}^{N_{u,j}},
\]
and
\[
\mathbf{c}_{v_\ell}
=
[\mathbf{c}_{v_\ell,0}^\top,\ldots,
 \mathbf{c}_{v_\ell,J_v}^\top]^\top,
\qquad
\mathbf{c}_{v_\ell,j}\in\mathbb{R}^{N_{v,j}}.
\]

\paragraph{Scale-dependent normalization of the input coefficients}
The coefficient magnitudes can differ substantially across input-frame levels because the levels use different center densities and support radii. If kernel distances are computed directly from the raw coefficient vectors, levels with larger coefficients can dominate the induced geometry. We therefore normalize each input block by a single scalar estimated from the training set. Define
\begin{equation}
D_j
=
[\mathbf{c}_{u_1,j},\ldots,\mathbf{c}_{u_{N_T},j}]
\in\mathbb{R}^{N_{u,j}\times N_T}.
\label{eq:fkm-training-block}
\end{equation}
For level $j$, set
\begin{equation}
\beta_j
=
\left(
\frac{1}{N_{u,j}N_T}
\sum_{\ell=1}^{N_T}
\|\mathbf{c}_{u_\ell,j}\|_2^2
+\varepsilon
\right)^{-1/2},
\label{eq:fkm-level-normalization}
\end{equation}
where $\varepsilon>0$ prevents division by zero. The normalized input coefficients are
\begin{equation}
\widetilde{\mathbf{c}}_{u_\ell,j}
=\beta_j\mathbf{c}_{u_\ell,j},
\qquad
\widetilde{\mathbf{c}}_{u_\ell}
=
[\widetilde{\mathbf{c}}_{u_\ell,0}^\top,\ldots,
 \widetilde{\mathbf{c}}_{u_\ell,J_u}^\top]^\top.
\label{eq:fkm-normalized-features}
\end{equation}
If
\[
s_j
=
\frac{1}{N_{u,j}N_T}
\sum_{\ell=1}^{N_T}
\|\mathbf{c}_{u_\ell,j}\|_2^2,
\]
then
\begin{equation}
\frac{1}{N_{u,j}N_T}
\sum_{\ell=1}^{N_T}
\|\widetilde{\mathbf{c}}_{u_\ell,j}\|_2^2
=
\frac{s_j}{s_j+\varepsilon}.
\label{eq:fkm-normalized-rms}
\end{equation}
Hence, whenever $s_j\gg\varepsilon$, the transformation gives each level unit empirical mean-squared coefficient magnitude per degree of freedom. The factors $\beta_j$ are estimated from the training inputs only and then applied unchanged to validation, test, and prediction inputs. No centering or covariance transformation is performed.

\paragraph{Levelwise multi-output kernel regression}
Let
\[
\kappa:\mathbb{R}^{N_u}\times\mathbb{R}^{N_u}\rightarrow\mathbb{R}
\]
be a positive-definite kernel, where $N_u=\sum_{j=0}^{J_u}N_{u,j}$, and define the training Gram matrix
\begin{equation}
K_{\ell m}
=
\kappa(\widetilde{\mathbf{c}}_{u_\ell},
       \widetilde{\mathbf{c}}_{u_m}),
\qquad
K\in\mathbb{R}^{N_T\times N_T}.
\label{eq:fkm-gram-matrix}
\end{equation}
For output level $j$, collect the corresponding training coefficients rowwise as
\begin{equation}
Y_j
=
\begin{bmatrix}
\mathbf{c}_{v_1,j}^\top\\
\vdots\\
\mathbf{c}_{v_{N_T},j}^\top
\end{bmatrix}
\in\mathbb{R}^{N_T\times N_{v,j}}.
\label{eq:fkm-output-block}
\end{equation}
For each output level, we solve
\begin{equation}
W_j=(K+\lambda I_{N_T})^{-1}Y_j,
\qquad
j=0,\ldots,J_v,
\label{eq:fkm-levelwise-training}
\end{equation}
where $\lambda>0$ is the regularization parameter. Because the same Gram matrix and regularization parameter are used at every level, these solves are algebraically equivalent to a single multi-output kernel ridge regression with the concatenated response matrix
\[
Y=[\,Y_0\;\cdots\;Y_{J_v}\,].
\]
We retain the levelwise form because it exposes the scale-indexed coefficient blocks used in reconstruction, while allowing the factorization of $K+\lambda I_{N_T}$ to be reused for every right-hand side.

For a new input function $u_*$, we first compute its frame coefficients $\mathbf{c}_{u_*}$ and apply the training-set normalization factors,
\begin{equation}
\widetilde{\mathbf{c}}_{u_*,j}
=
\beta_j\mathbf{c}_{u_*,j},
\qquad
\widetilde{\mathbf{c}}_{u_*}
=
[\widetilde{\mathbf{c}}_{u_*,0}^\top,\ldots,
 \widetilde{\mathbf{c}}_{u_*,J_u}^\top]^\top.
\label{eq:fkm-test-normalization}
\end{equation}
Define
\begin{equation}
\mathbf{k}_*
=
\begin{bmatrix}
\kappa(\widetilde{\mathbf{c}}_{u_*},\widetilde{\mathbf{c}}_{u_1})\\
\vdots\\
\kappa(\widetilde{\mathbf{c}}_{u_*},\widetilde{\mathbf{c}}_{u_{N_T}})
\end{bmatrix}
\in\mathbb{R}^{N_T}.
\label{eq:fkm-cross-kernel-vector}
\end{equation}
The predicted coefficient block at output level $j$ is
\begin{equation}
\widehat{\mathbf{c}}_{v_*,j}
=
W_j^\top\mathbf{k}_*
=
Y_j^\top(K+\lambda I_{N_T})^{-1}\mathbf{k}_*.
\label{eq:fkm-levelwise-prediction}
\end{equation}
The predicted coefficients remain organized by frame level and are combined during reconstruction. The distinction from VKM therefore lies in the multiscale representation of the inputs and outputs, rather than in a different algebraic form of the shared multi-output kernel ridge-regression solve.

Finally, the predicted output function is synthesized from the levelwise coefficient blocks:
\begin{equation}
\widehat{\mathcal{G}}(u_*)(x)
=
\sum_{j=0}^{J_v}
\sum_{k=1}^{N_{v,j}}
(\widehat{\mathbf{c}}_{v_*,j})_k\,
\psi^v_{j,k}(x).
\label{eq:fkm-output-reconstruction}
\end{equation}
At output query sites $X_{v,q}$, this becomes
\begin{equation}
\widehat{\mathcal{G}}(u_*)(X_{v,q})
=
\sum_{j=0}^{J_v}
B^v_{qj}\widehat{\mathbf{c}}_{v_*,j}.
\label{eq:fkm-output-query-reconstruction}
\end{equation}
Relative to VKM, FKM changes the representation used by the regression stage: it forms kernel similarities from normalized multiscale input-frame coefficients, organizes the predicted output coefficients by frame level, and synthesizes the resulting function from those levelwise blocks.

\section{Exact interpolation and convergence rates}
\label{sec:frame_interpolation_error}

\subsection{Frame property and exact interpolation}
\label{sec:exact_interp}
The sampled basis functions
\[
a_{j,k}
=
[\psi_{j,k}(x_1),\ldots,\psi_{j,k}(x_M)]^\top
\]
are the columns of the multiscale evaluation matrix $A$. The following result shows that they form a finite-dimensional frame for $\mathbb{R}^M$ and that the redundant approximation interpolates arbitrary data on the observation set~\cite{DuffinSchaeffer1952,Christensen2016}. The proof uses only strict positive definiteness at the finest level.

We distinguish the number $M=|X|$ of sampling sites from the number $N_j=|\Xi_j|$ of frame functions at level $j$. If auxiliary boundary centers are present, then $N_0$ may exceed $M$. Let $B_{0,X}\in\mathbb{R}^{M\times M}$ denote the submatrix of $B_0$ associated with the primary centers $C_0=X$; if level zero contains no auxiliary centers, then $B_{0,X}=B_0$.

\begin{theorem}[Frame property and exact interpolation]
\label{thm:multiscale_frame_interpolation}
Let $X=\{x_i\}_{i=1}^M\subset\Omega$ consist of distinct sites, let
$C_0=X\supset C_1\supset\cdots\supset C_J$, and suppose that
$\phi(\|\cdot\|_2/\rho_0)$ is strictly positive definite on $\mathbb{R}^d$.
Then the columns of $A=[\,B_0\;\cdots\;B_J\,]$ form a finite-dimensional frame for $\mathbb{R}^M$, with
\begin{equation}
\sigma_{\min}(B_{0,X})^2\|v\|_2^2
\leq \|A^\top v\|_2^2
\leq \|A\|_2^2\|v\|_2^2,
\qquad v\in\mathbb{R}^M.
\label{eq:multiscale_frame_bounds}
\end{equation}
Consequently, every $\mathbf{u}\in\mathbb{R}^M$ admits coefficients $c_{j,k}$ satisfying
$\widehat{u}(x_i)=u_i$, $i=1,\ldots,M$. If $A^\top=QR$ is a thin QR factorization, then
\[
\mathbf{c}^\star=QR^{-\top}\mathbf{u}
\]
is the unique minimum-$\ell^2$-norm coefficient vector producing this interpolant.
\end{theorem}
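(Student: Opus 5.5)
The argument rests on one observation: $A$ contains a nonsingular square block. The columns of $B_0$ associated with the primary centers $C_0=X$ form the matrix $B_{0,X}$, with $(B_{0,X})_{i,k}=\phi(\|x_i-x_k\|_2/\rho_0)$. Since the sites are distinct and $\phi(\|\cdot\|_2/\rho_0)$ is strictly positive definite on $\mathbb{R}^d$, this matrix is symmetric positive definite. In particular it is nonsingular and $\sigma_{\min}(B_{0,X})>0$.

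For the lower frame bound, fix $v\in\mathbb{R}^M$. The vector $A^\top v$ lists the inner products $\langle v,a_{j,k}\rangle$ over all columns, and $B_{0,X}^\top v$ is a subvector of it. Dropping the remaining nonnegative squared entries gives
\[
\|A^\top v\|_2^2\ \ge\ \|B_{0,X}^\top v\|_2^2\ \ge\ \sigma_{\min}(B_{0,X}^\top)^2\|v\|_2^2=\sigma_{\min}(B_{0,X})^2\|v\|_2^2 .
\]
The upper bound is simply $\|A^\top v\|_2\le\|A^\top\|_2\|v\|_2=\|A\|_2\|v\|_2$. Since $\sum_{j,k}|\langle v,a_{j,k}\rangle|^2=\|A^\top v\|_2^2$, these two inequalities are exactly \eqref{eq:frame-definition} with $a=\sigma_{\min}(B_{0,X})^2>0$ and $b=\|A\|_2^2$. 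This establishes \eqref{eq:multiscale_frame_bounds}.

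Next, the positive lower bound shows that $A^\top$ is injective, so $A$ has full row rank $M$ and $A\mathbf{z}=\mathbf{u}$ is solvable for every $\mathbf{u}$. Alternatively, one can exhibit a solution directly: take $\mathbf{c}_{0,X}=B_{0,X}^{-1}\mathbf{u}$ and set all other coefficients to zero. Any solution yields $\widehat u(x_i)=(A\mathbf{c})_i=u_i$.

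For the minimum-norm part, write the thin QR factorization $A^\top=QR$ with $Q^\top Q=I_M$. Because $\operatorname{rank}A^\top=M$, the factor $R$ is invertible. Set $\mathbf{c}^\star=QR^{-\top}\mathbf{u}$. Then
\[
A\mathbf{c}^\star=R^\top Q^\top Q R^{-\top}\mathbf{u}=\mathbf{u},
\]
so $\mathbf{c}^\star$ is feasible. Moreover $\mathbf{c}^\star\in\operatorname{range}(Q)=\operatorname{range}(A^\top)=\ker(A)^\perp$. Any other solution has the form $\mathbf{c}^\star+\mathbf{w}$ with $\mathbf{w}\in\ker A$, and by the Pythagorean theorem
\[
\|\mathbf{c}^\star+\mathbf{w}\|_2^2=\|\mathbf{c}^\star\|_2^2+\|\mathbf{w}\|_2^2 .
\]
This is strictly larger than $\|\mathbf{c}^\star\|_2^2$ unless $\mathbf{w}=0$, which gives both minimality and uniqueness. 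It also shows that $\mathbf{c}^\star=A^\top(AA^\top)^{-1}\mathbf{u}$.

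None of the steps is a serious obstacle. The only point needing care is that $B_{0,X}$ is a genuine kernel (Gram) matrix at the distinct sites of $X$, so that strict positive definiteness applies. This holds because $C_0=X$ supplies centers at exactly the sampling sites, and any auxiliary columns in $G_0$ or in coarser levels only enlarge $\|A^\top v\|_2$. The coarser levels need no positive-definiteness assumption at all.
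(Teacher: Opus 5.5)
Your proof is correct and follows the paper's argument. The nonsingular finest block $B_{0,X}$ gives the lower frame bound once the other columns are discarded, full row rank follows, and $QR^{-\top}$ is shown to be a right inverse. The only difference is in the minimum-norm step: you prove minimality and uniqueness directly via $\operatorname{range}(A^\top)=\ker(A)^\perp$ and Pythagoras, whereas the paper identifies $QR^{-\top}=A^\top(AA^\top)^{-1}=A^\dagger$ and cites the minimum-norm property of the Moore--Penrose pseudoinverse.

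One small imprecision: $B_{0,X}$ equals the kernel matrix only up to a column permutation, since the ordering of $\Xi_0$ need not match that of $X$. So $B_{0,X}$ itself need not be symmetric positive definite. This does not affect your argument, because you only use nonsingularity and $\sigma_{\min}(B_{0,X})>0$, and both are invariant under column permutations.
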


\begin{proof}
Since $C_0=X$, the matrix $B_{0,X}$ is, up to a column permutation, the kernel matrix
\[
(B_{0,X})_{i,k}
=
\phi\!\left(\frac{\|x_i-x_k\|_2}{\rho_0}\right).
\]
Strict positive definiteness and distinctness of the sites make $B_{0,X}$ nonsingular~\cite{Wendland1995,Wendland2005}. Hence
\[
\|A^\top v\|_2^2
=
\sum_{j=0}^J\|B_j^\top v\|_2^2
\geq
\|B_{0,X}^\top v\|_2^2
\geq
\sigma_{\min}(B_{0,X})^2\|v\|_2^2,
\]
and the upper bound follows from $\|A^\top v\|_2\leq\|A\|_2\|v\|_2$. Thus $A$ has full row rank. Since $A=R^\top Q^\top$,
\[
AQR^{-\top}=I_M,
\]
so $\mathbf{c}^\star=QR^{-\top}\mathbf{u}$ interpolates the data. Moreover,
\[
QR^{-\top}=A^\top(AA^\top)^{-1}=A^\dagger,
\]
which is the Moore--Penrose pseudoinverse and hence the minimum-norm right inverse~\cite{GolubVanLoan2013}.
\end{proof}

The finest primary block therefore guarantees exact interpolation, while the remaining columns add redundancy and allow the canonical QR solution to distribute the representation across scales.

\subsection{Convergence rates}
\label{sec:conv_rates}
To express the error directly in terms of the number $M$ of sampling sites, define the fill distance
\begin{equation}
h_M
:=
h_{X_M,\Omega}
=
\sup_{x\in\Omega}\min_{x_i\in X_M}\|x-x_i\|_2.
\label{eq:fill_distance}
\end{equation}
We assume that the sampling sets under consideration satisfy
\begin{equation}
h_M\leq C_h M^{-1/d},
\label{eq:fill_distance_cardinality}
\end{equation}
with $C_h$ independent of $M$. No assumption is made on the separation radius or mesh ratio. Thus, the analysis below depends only on coverage of $\Omega$ through $h_M$, and not on a lower bound for pairwise distances between sampling sites.

For the $C^{2\ell}$ Wendland kernel $\phi_{d,\ell}$, the native space is norm-equivalent to $H^{\tau_\ell}(\mathbb{R}^d)$, where
\[
\tau_\ell:=\frac d2+\ell+\frac12,
\]
and the cases $\ell=1,2,3$ correspond to the $C^2$, $C^4$, and $C^6$ kernels~\cite{Wendland1998,Wendland2005,SchabackWendland2006}. Translation and positive scaling preserve this Sobolev regularity, so every finite multiscale interpolant belongs to $H^{\tau_\ell}(\Omega)$.

Let $E_M:H^{\tau_\ell}(\Omega)\to\mathbb{R}^M$ denote sampling on $X_M$, let $T_M:\mathbb{R}^{N(M)}\to H^{\tau_\ell}(\Omega)$ denote synthesis by the multiscale frame functions, and write
\[
\mathcal{I}_M=T_MA_M^\dagger E_M.
\]
Because $\tau_\ell>d/2$, the sampling map $E_M$ is bounded by Sobolev embedding. For each fixed discretization, $A_M^\dagger$ is a bounded finite-dimensional map and $T_M$ is bounded because it synthesizes a finite collection of functions in $H^{\tau_\ell}(\Omega)$. Hence $\mathcal{I}_M:H^{\tau_\ell}(\Omega)\to H^{\tau_\ell}(\Omega)$ is bounded for each fixed $M$~\cite{AdamsFournier2003}. Define
\begin{equation}
\Lambda_{\ell,J}(M)
:=
\sup_{0\neq u\in H^{\tau_\ell}(\Omega)}
\frac{\|\mathcal{I}_Mu\|_{H^{\tau_\ell}(\Omega)}}
     {\|u\|_{H^{\tau_\ell}(\Omega)}}.
\label{eq:frame_stability_factor}
\end{equation}
Thus $\Lambda_{\ell,J}(M)<\infty$ for every fixed discretization; uniform boundedness in $M$ is a separate stability question.

\begin{theorem}[Sobolev error estimate]
\label{thm:multiscale_sobolev_error}
Let $\Omega\subset\mathbb{R}^d$ be a bounded Lipschitz domain satisfying an interior cone condition, and let $X_M\subset\Omega$ be sets of $M$ distinct sampling sites satisfying~\eqref{eq:fill_distance_cardinality}. Let $\phi=\phi_{d,\ell}$ be the $C^{2\ell}$ Wendland kernel. Then there exist $h_*>0$ and $C>0$, independent of $M$ and $u$, such that, whenever $h_M\leq h_*$, every $u\in H^{\tau_\ell}(\Omega)$ satisfies
\begin{equation}
\|u-\mathcal{I}_Mu\|_{H^\mu(\Omega)}
\leq
C M^{-(\tau_\ell-\mu)/d}
\bigl(1+\Lambda_{\ell,J}(M)\bigr)
\|u\|_{H^{\tau_\ell}(\Omega)},
\qquad 0\leq\mu\leq\tau_\ell.
\label{eq:multiscale_sobolev_error}
\end{equation}
If $\Lambda_{\ell,J}(M)$ remains bounded as $M\to\infty$, then
\[
\|u-\mathcal{I}_Mu\|_{H^\mu(\Omega)}
=
\mathcal{O}\!\left(M^{-(\tau_\ell-\mu)/d}\right).
\]
\end{theorem}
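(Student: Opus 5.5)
The plan is to apply the sampling (scattered-zeros) inequality to the error function $e:=u-\mathcal{I}_Mu$. By Theorem~\ref{thm:multiscale_frame_interpolation}, $e$ vanishes on $X_M$. Since $u\in H^{\tau_\ell}(\Omega)$ and $\mathcal{I}_Mu\in H^{\tau_\ell}(\Omega)$ (a finite combination of scaled and translated Wendland functions), $e\in H^{\tau_\ell}(\Omega)$. So no interpolation-specific optimality argument is needed: the estimate comes from a generic Sobolev-norm bound for functions with many zeros, combined with a crude bound on $\|e\|_{H^{\tau_\ell}}$ through the stability factor $\Lambda_{\ell,J}(M)$.

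First, I would invoke the scattered-zeros theorem of Narcowich--Ward--Wendland type (as in Wendland's monograph, Theorem~11.32, or Arcangéli--López de Silanes--Torrens). For a bounded Lipschitz domain satisfying an interior cone condition and $\tau>d/2$, there are constants $h_*,C_0>0$ depending only on $\Omega$, $\tau_\ell$, $d$ and the cone parameters. Whenever $h_{X,\Omega}\le h_*$ and $f\in H^{\tau_\ell}(\Omega)$ vanishes on $X$,
\[
\|f\|_{H^\mu(\Omega)}\le C_0\,h_{X,\Omega}^{\tau_\ell-\mu}\,\|f\|_{H^{\tau_\ell}(\Omega)},\qquad 0\le\mu\le\tau_\ell .
\]
The case $\mu=\tau_\ell$ is trivial with $C_0\ge1$. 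For noninteger $\tau_\ell$, which always holds for odd $d$ and otherwise sometimes, I would take the fractional-order version of the inequality. This is the step where care is needed: I must make sure the cited version covers noninteger $\tau_\ell$ and noninteger $\mu$ on Lipschitz domains. If only integer orders are directly available, I would interpolate between the integer cases via standard Sobolev interpolation (e.g. $\|f\|_{H^\mu}\le\|f\|_{H^{\lfloor\mu\rfloor}}^{1-\theta}\|f\|_{H^{\lceil\mu\rceil}}^{\theta}$), which preserves the power of $h$. Notably, this inequality needs no separation or mesh-ratio assumption, consistent with the statement.

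Second, I would apply the inequality to $f=e$ and bound the high norm by the triangle inequality and the definition~\eqref{eq:frame_stability_factor}:
\[
\|e\|_{H^{\tau_\ell}(\Omega)}\le\|u\|_{H^{\tau_\ell}(\Omega)}+\|\mathcal{I}_Mu\|_{H^{\tau_\ell}(\Omega)}\le\bigl(1+\Lambda_{\ell,J}(M)\bigr)\|u\|_{H^{\tau_\ell}(\Omega)}.
\]
Finally, I would replace $h_M^{\tau_\ell-\mu}$ by $C_h^{\tau_\ell-\mu}M^{-(\tau_\ell-\mu)/d}$ using~\eqref{eq:fill_distance_cardinality}. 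I would absorb $\max(1,C_h^{\tau_\ell})$ into $C$, which keeps $C$ uniform in $\mu\in[0,\tau_\ell]$. All constants are then independent of $M$ and $u$, because $h_*$ and $C_0$ depend only on $\Omega$, $d$ and $\ell$, while $\Lambda$ is kept explicit. The corollary for bounded $\Lambda_{\ell,J}(M)$ follows immediately.

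The main obstacle is purely bibliographic and technical rather than conceptual. It lies in pinning down a scattered-zeros inequality valid for fractional $\tau_\ell$ and all $0\le\mu\le\tau_\ell$ on Lipschitz cone-condition domains, with $h_*$ independent of $X$. I would check the hypotheses of that result against our setting, including the fact that $X_M\subset\Omega$ and that $h_M$ is measured over $\Omega$. Everything else is a two-line consequence.
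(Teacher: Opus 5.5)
Your proposal is correct and follows the paper's proof essentially step for step. Like the paper, you apply the scattered-zeros inequality to $e=u-\mathcal{I}_Mu$, which vanishes on $X_M$ by Theorem~\ref{thm:multiscale_frame_interpolation}; you then convert $h_M$ to $M^{-1/d}$ via \eqref{eq:fill_distance_cardinality}, and bound $\|e\|_{H^{\tau_\ell}(\Omega)}\leq(1+\Lambda_{\ell,J}(M))\|u\|_{H^{\tau_\ell}(\Omega)}$ by the triangle inequality. Your attention to fractional orders goes beyond what the paper provides, with one small correction: in the fallback interpolation step, interpolate between the largest integer order covered by the cited inequality and $\tau_\ell$ itself, where the bound is trivial, rather than up to $\lceil\mu\rceil$, since $\lceil\mu\rceil$ can exceed $\tau_\ell$.
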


\begin{proof}
Set $e:=u-\mathcal{I}_Mu$. Theorem~\ref{thm:multiscale_frame_interpolation} gives $e|_{X_M}=0$. For $h_M\leq h_*$, the scattered-zeros inequality gives
\[
\|e\|_{H^\mu(\Omega)}
\leq
C h_M^{\tau_\ell-\mu}\|e\|_{H^{\tau_\ell}(\Omega)}
\]
with $C$ independent of $M$ and $e$~\cite{NarcowichWardWendland2005,Wendland2005}. Using~\eqref{eq:fill_distance_cardinality},
\[
\|e\|_{H^\mu(\Omega)}
\leq
C M^{-(\tau_\ell-\mu)/d}\|e\|_{H^{\tau_\ell}(\Omega)}.
\]
Finally, the definition of $\Lambda_{\ell,J}(M)$ and the triangle inequality give
\[
\|e\|_{H^{\tau_\ell}(\Omega)}
\leq
\bigl(1+\Lambda_{\ell,J}(M)\bigr)
\|u\|_{H^{\tau_\ell}(\Omega)},
\]
which proves~\eqref{eq:multiscale_sobolev_error}.
\end{proof}

\paragraph{Example} \label{ex:wendland_frame_rates} For $d=2$, $\tau_\ell=\ell+3/2$. Under bounded stability, the baseline $L^2$ rates are \begin{equation} C^2:\ \mathcal{O}\!\left((M^{1/2})^{-5/2}\right), \qquad C^4:\ \mathcal{O}\!\left((M^{1/2})^{-7/2}\right), \qquad C^6:\ \mathcal{O}\!\left((M^{1/2})^{-9/2}\right). \label{eq:wendland_l2_rates} \end{equation} 
The corresponding $H^1$ rates are \[ C^2:\ \mathcal{O}\!\left((M^{1/2})^{-3/2}\right), \qquad C^4:\ \mathcal{O}\!\left((M^{1/2})^{-5/2}\right), \qquad C^6:\ \mathcal{O}\!\left((M^{1/2})^{-7/2}\right). \] The observed discrete relative $\ell_2$ orders of approximately $5.3$, $7.3$, and $8.75$, measured against $M^{1/2}$, for the $C^2$, $C^4$, and $C^6$ kernels are numerically close to twice the baseline exponents. These results suggest superconvergence beyond Theorem~\ref{thm:multiscale_sobolev_error}, while the theorem itself concerns continuous Sobolev norms.

\subsection{A route to superconvergence}
\label{sec:superconvergence_conjecture}

The baseline estimate uses only exact interpolation and a scattered-zeros inequality. For smooth targets, the observed rates instead suggest the classical doubling phenomenon associated with source conditions in kernel interpolation~\cite{Schaback1999,SloanKaarnioja2025}. The minimum-coefficient-norm property alone, however, does not place the QR interpolant in the standard native-space orthogonal-projection framework used by classical doubling arguments.

The QR approximation nevertheless has an exact kernel interpretation. Let $\Psi_M(x)\in\mathbb{R}^{N(M)}$ collect all multiscale basis functions and define
\begin{equation}
K_M(x,y)
:=
\Psi_M(x)^\top\Psi_M(y)
=
\sum_{j=0}^{J(M)}\sum_{k=1}^{N_j}
\psi_{j,k}(x)\psi_{j,k}(y).
\label{eq:induced_frame_kernel_definition}
\end{equation}
This feature-map construction defines a positive-semidefinite kernel~\cite{Aronszajn1950}. Since
\[
\mathbf{c}^\star=A^\top(AA^\top)^{-1}\mathbf{u}_M,
\qquad
K_M(X_M,X_M)=AA^\top,
\]
the QR interpolant satisfies
\begin{equation}
\mathcal{I}_Mu(x)
=
K_M(x,X_M)K_M(X_M,X_M)^{-1}\mathbf{u}_M.
\label{eq:induced_frame_kernel}
\end{equation}
Thus, the multiscale QR approximation is itself kernel interpolation with a finite, discretization-dependent multiscale autocorrelation kernel. Unfortunately, this identity does not by itself imply a doubled rate. To see the obstruction, write
\[
A=[\,B_{0,X}\;C\,],
\qquad
D:=B_{0,X}^{-1}C,
\qquad
\mathbf{a}_0:=B_{0,X}^{-1}\mathbf{u}_M,
\]
where $C$ contains every frame column not belonging to the primary finest block. Let $T_0$ and $T_c$ denote synthesis by the corresponding finest and remaining basis functions. Writing the coefficient vector as $(\mathbf{a},\mathbf{b})$, the interpolation constraint is
\[
\mathbf{a}+D\mathbf{b}=\mathbf{a}_0.
\]
Hence the minimum-norm problem reduces to
\[
\min_{\mathbf{b}}
\|\mathbf{a}_0-D\mathbf{b}\|_2^2+\|\mathbf{b}\|_2^2,
\]
whose normal equations are
\[
(I+D^\top D)\mathbf{b}=D^\top\mathbf{a}_0.
\]
Substituting $\mathbf{a}=\mathbf{a}_0-D\mathbf{b}$ into $T_0\mathbf{a}+T_c\mathbf{b}$ gives
\begin{equation}
\mathcal{I}_Mu-\mathcal{S}_Mu
=
(T_c-T_0D)(I+D^\top D)^{-1}D^\top \mathbf{a}_0,
\label{eq:exact_enrichment_identity}
\end{equation}
where $\mathcal{S}_Mu:=T_0\mathbf{a}_0$ is the ordinary finest-level interpolant. Although
\[
\|(I+D^\top D)^{-1}D^\top\|_2\leq\frac12,
\]
this estimate contains no factor of $M^{-\tau_\ell/d}$ and controls coefficient space rather than the $H^{\tau_\ell}$ norm. A superconvergence proof therefore requires additional function-space stability and approximation estimates.

Let
\[
W_M:=\operatorname{range}(\mathcal{I}_M)
\subset H^{\tau_\ell}(\Omega).
\]
With $\mathcal{I}_M=T_MA_M^\dagger E_M$ and $E_MT_M=A_M$, the Moore--Penrose identity $A_M^\dagger A_MA_M^\dagger=A_M^\dagger$ gives
\[
\mathcal{I}_M^2
=
T_MA_M^\dagger A_MA_M^\dagger E_M
=
\mathcal{I}_M.
\]
Hence $\mathcal{I}_M$ is a projector onto $W_M$. The following result isolates sufficient conditions for doubling.

\begin{theorem}[Conditional doubled-rate estimate]
\label{thm:conditional_multiscale_doubling}
Let $\Omega\subset\mathbb{R}^d$ be a bounded Lipschitz domain satisfying an interior cone condition, and let $X_M\subset\Omega$ be sets of $M$ distinct sampling sites satisfying~\eqref{eq:fill_distance_cardinality}. Let $\mathcal{I}_M$ be the associated multiscale QR interpolants. Suppose $(\mathcal{B}_{\ell,0}(\Omega),\|\cdot\|_{\mathcal{B}_{\ell,0}(\Omega)})$ is a normed space continuously embedded in $H^{2\tau_\ell}(\Omega)$ and that there are constants $C_{\mathrm{stab}}$ and $C_{\mathrm{app}}$, independent of $M$, such that
\begin{align}
\|\mathcal{I}_M\|_{H^{\tau_\ell}(\Omega)\to H^{\tau_\ell}(\Omega)}
&\leq C_{\mathrm{stab}},
\label{eq:uniform_frame_stability}
\\
\inf_{w\in W_M}
\|u-w\|_{H^{\tau_\ell}(\Omega)}
&\leq
C_{\mathrm{app}}M^{-\tau_\ell/d}
\|u\|_{\mathcal{B}_{\ell,0}(\Omega)}
\label{eq:frame_jackson_estimate}
\end{align}
for every $u\in\mathcal{B}_{\ell,0}(\Omega)$. Then, for all sufficiently large $M$,
\begin{equation}
\|u-\mathcal{I}_Mu\|_{L^2(\Omega)}
\leq
C M^{-2\tau_\ell/d}
\|u\|_{\mathcal{B}_{\ell,0}(\Omega)},
\label{eq:multiscale_doubled_order}
\end{equation}
where $C$ is independent of $M$ and $u$.
\end{theorem}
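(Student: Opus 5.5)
Combining a Céa-type quasi-optimality estimate with an Aubin--Nitsche/Schaback-style duality argument seems hard, because $\mathcal{I}_M$ is not an orthogonal projector in any known inner product. The plan is to use only the scattered-zeros inequality twice, once for the $H^{\tau_\ell}$ error and once for the $L^2$ error. The first step is quasi-optimality in $H^{\tau_\ell}$. Since $\mathcal{I}_M$ is a projector onto $W_M$, as shown just before the statement, every $w\in W_M$ satisfies $\mathcal{I}_Mw=w$. Hence
\[
u-\mathcal{I}_Mu=(I-\mathcal{I}_M)(u-w),
\]
and by~\eqref{eq:uniform_frame_stability}
\[
\|u-\mathcal{I}_Mu\|_{H^{\tau_\ell}(\Omega)}\leq(1+C_{\mathrm{stab}})\inf_{w\in W_M}\|u-w\|_{H^{\tau_\ell}(\Omega)}.
\]
Inserting the Jackson estimate~\eqref{eq:frame_jackson_estimate} then gives
\[
\|u-\mathcal{I}_Mu\|_{H^{\tau_\ell}(\Omega)}\leq(1+C_{\mathrm{stab}})C_{\mathrm{app}}M^{-\tau_\ell/d}\|u\|_{\mathcal{B}_{\ell,0}(\Omega)}.
\]
This is the part where the hypotheses are used essentially. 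It already gains a factor $M^{-\tau_\ell/d}$ over the trivial bound $\|u\|_{H^{\tau_\ell}}$.

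The second step lowers the norm from $H^{\tau_\ell}$ to $L^2$. The error $e=u-\mathcal{I}_Mu$ vanishes on $X_M$ by Theorem~\ref{thm:multiscale_frame_interpolation}. For $h_M\leq h_*$, which holds for all sufficiently large $M$ by~\eqref{eq:fill_distance_cardinality}, I would apply the same scattered-zeros inequality used in the proof of Theorem~\ref{thm:multiscale_sobolev_error} with $\mu=0$. This gives
\[
\|e\|_{L^2(\Omega)}\leq Ch_M^{\tau_\ell}\|e\|_{H^{\tau_\ell}(\Omega)}\leq C'M^{-\tau_\ell/d}\|e\|_{H^{\tau_\ell}(\Omega)}.
\]
Combining this with the first step yields~\eqref{eq:multiscale_doubled_order}, with
\[
C=C'(1+C_{\mathrm{stab}})C_{\mathrm{app}},
\]
which is independent of $M$ and $u$. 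The continuous embedding $\mathcal{B}_{\ell,0}(\Omega)\hookrightarrow H^{2\tau_\ell}(\Omega)$ is not strictly needed for the chain. I would note it only as the justification that $u\in H^{\tau_\ell}(\Omega)$, so that $\mathcal{I}_Mu$ and all the norms above are well defined.

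The main obstacle is conceptual rather than technical: ensuring that no circularity or hidden $M$-dependence enters. Two points need checking. First, the scattered-zeros constant and $h_*$ must depend only on $\Omega$, $d$ and $\tau_\ell$, which is the standard statement for Lipschitz domains with an interior cone condition. Second, the projector identity $\mathcal{I}_Mw=w$ must hold for every $w\in W_M$. This follows from $\mathcal{I}_M^2=\mathcal{I}_M$, but only because the sampling map $E_M$ applied to frame syntheses equals $A_M$, i.e.\ $E_MT_M=A_M$. Auxiliary centers outside $X$ do not break this, since they affect only the columns of $A_M$. The actual depth of the doubling phenomenon is hidden in verifying~\eqref{eq:uniform_frame_stability} and~\eqref{eq:frame_jackson_estimate}, which the theorem assumes, so the proof itself should be short.
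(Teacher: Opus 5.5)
Your proof is correct and essentially identical to the paper's. The projector identity $u-\mathcal{I}_Mu=(I-\mathcal{I}_M)(u-w)$ for $w\in W_M$, together with \eqref{eq:uniform_frame_stability} and \eqref{eq:frame_jackson_estimate}, gives the $H^{\tau_\ell}$ bound $(1+C_{\mathrm{stab}})C_{\mathrm{app}}M^{-\tau_\ell/d}\|u\|_{\mathcal{B}_{\ell,0}(\Omega)}$, and the scattered-zeros inequality with $\mu=0$ plus \eqref{eq:fill_distance_cardinality} supplies the second factor $M^{-\tau_\ell/d}$. The only slip is in your opening description, which says the scattered-zeros inequality is used twice; in your actual argument, as in the paper's, it is used only once, and the $H^{\tau_\ell}$ estimate comes entirely from stability and the Jackson bound.
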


\begin{proof}
For any $w\in W_M$, the projector property gives
\[
u-\mathcal{I}_Mu
=
(I-\mathcal{I}_M)(u-w).
\]
Therefore, by~\eqref{eq:uniform_frame_stability} and~\eqref{eq:frame_jackson_estimate},
\[
\|u-\mathcal{I}_Mu\|_{H^{\tau_\ell}(\Omega)}
\leq
(1+C_{\mathrm{stab}})
C_{\mathrm{app}}M^{-\tau_\ell/d}
\|u\|_{\mathcal{B}_{\ell,0}(\Omega)}.
\]
The error vanishes on $X_M$. For all sufficiently large $M$, the scattered-zeros inequality with $\mu=0$ and~\eqref{eq:fill_distance_cardinality} give
\[
\|u-\mathcal{I}_Mu\|_{L^2(\Omega)}
\leq
C h_M^{\tau_\ell}
\|u-\mathcal{I}_Mu\|_{H^{\tau_\ell}(\Omega)}
\leq
C M^{-\tau_\ell/d}
\|u-\mathcal{I}_Mu\|_{H^{\tau_\ell}(\Omega)}.
\]
Combining the two estimates proves~\eqref{eq:multiscale_doubled_order}.
\end{proof}

Here $\mathcal{B}_{\ell,0}(\Omega)$ denotes a localized doubled source class for which a global doubling estimate is meaningful; on bounded domains, additional smoothness alone need not supply the localization or boundary compatibility required by classical superconvergence theory~\cite{Schaback1999,SloanKaarnioja2025}. For the convolution kernel
\[
\kappa_{d,\ell}:=\phi_{d,\ell}*\phi_{d,\ell},
\]
the corresponding whole-space native norm is equivalent to the $H^{2\tau_\ell}$ norm because
\[
\widehat{\kappa}_{d,\ell}
=|\widehat{\phi}_{d,\ell}|^2,
\qquad
\widehat{\phi}_{d,\ell}(\omega)
\asymp
(1+\|\omega\|_2^2)^{-\tau_\ell}
\]
\cite{Wendland2005,SchabackWendland2006}.

Theorem~\ref{thm:conditional_multiscale_doubling} reduces the observed doubling phenomenon to two concrete properties of the multiscale range $W_M$: uniform $H^{\tau_\ell}$ stability of the QR projector and an $H^{\tau_\ell}$ Jackson estimate on the localized doubled source class. We state these properties as the central conjecture.

\begin{conjecture}[Uniform stability and doubled-space approximation]
\label{conj:multiscale_doubled_order}
Let $X_M\subset\Omega$ be sets of $M$ distinct sampling sites satisfying~\eqref{eq:fill_distance_cardinality}, with no separation-radius assumption, and let
\[
C_{0,M}=X_M
\supset C_{1,M}
\supset\cdots\supset C_{J(M),M}
\]
be the nested hierarchy constructed as in Section~\ref{sec:nested-center-selection}. Let $s_{0,M}$ denote the representative finest-level spacing used in~\eqref{eq:point-cloud-spacing}, assume $s_{0,M}\asymp M^{-1/d}$, and set
\[
s_{j,M}=2^js_{0,M},
\qquad
\rho_{j,M}=\eta\,s_{j,M}
\]
for fixed $\eta>0$. For the multiscale QR interpolant built from the $C^{2\ell}$ Wendland kernel, the uniform stability estimate~\eqref{eq:uniform_frame_stability} and approximation estimate~\eqref{eq:frame_jackson_estimate} hold on $\mathcal{B}_{\ell,0}(\Omega)$.
\end{conjecture}

\paragraph{Example}
\label{ex:conjectured_frame_rates}
If Conjecture~\ref{conj:multiscale_doubled_order} holds, Theorem~\ref{thm:conditional_multiscale_doubling} gives 
\[ \|u-\mathcal{I}_Mu\|_{L^2(\Omega)} = \mathcal{O}(M^{-2\tau_\ell/d}). \] 
For $d=2$, this predicts \begin{equation} C^2:\ \mathcal{O}((M^{1/2})^{-5}), \qquad C^4:\ \mathcal{O}((M^{1/2})^{-7}), \qquad C^6:\ \mathcal{O}((M^{1/2})^{-9}). \label{eq:conjectured_2d_rates} \end{equation} The observed discrete relative $\ell_2$ orders are numerically close to these predictions.

\section{Numerical Results}
\label{sec:results}
We now present numerical results for both the multiscale frame approximant (Section~\ref{sec:frame_results}) and the FKM (Section~\ref{sec:ol_results}). First, in Section~\ref{sec:frame_results}, we present traditional numerical convergence studies of the multiscale frame approximation for target functions of different smoothness, with comparisons to our theorems and the doubling conjecture. We also study the influence of design parameters such as the kernel support size and the density of the design matrix. Then, in Section~\ref{sec:ol_results}, we present results on several standard operator learning benchmarks, with comparisons against neural operators and the VKM. Finally, in Section~\ref{sec:mult-analysis}, we present a qualitative analysis of the \emph{a posteriori} multiscale decompositions obtained from a subset of the operator learning benchmark problems.

Unless otherwise stated, all errors are reported using the relative $\ell_2$ error $e_{\ell_2} = \frac{\|y-\hat y\|_2}{\|y\|_2}$, where $y$ denotes the ground truth solution vector and $\hat y$ the prediction. 

\subsection{Frame Approximation Results}
\label{sec:frame_results}

\begin{figure}[htbp]
    \centering
    \includegraphics[width=0.8\textwidth]{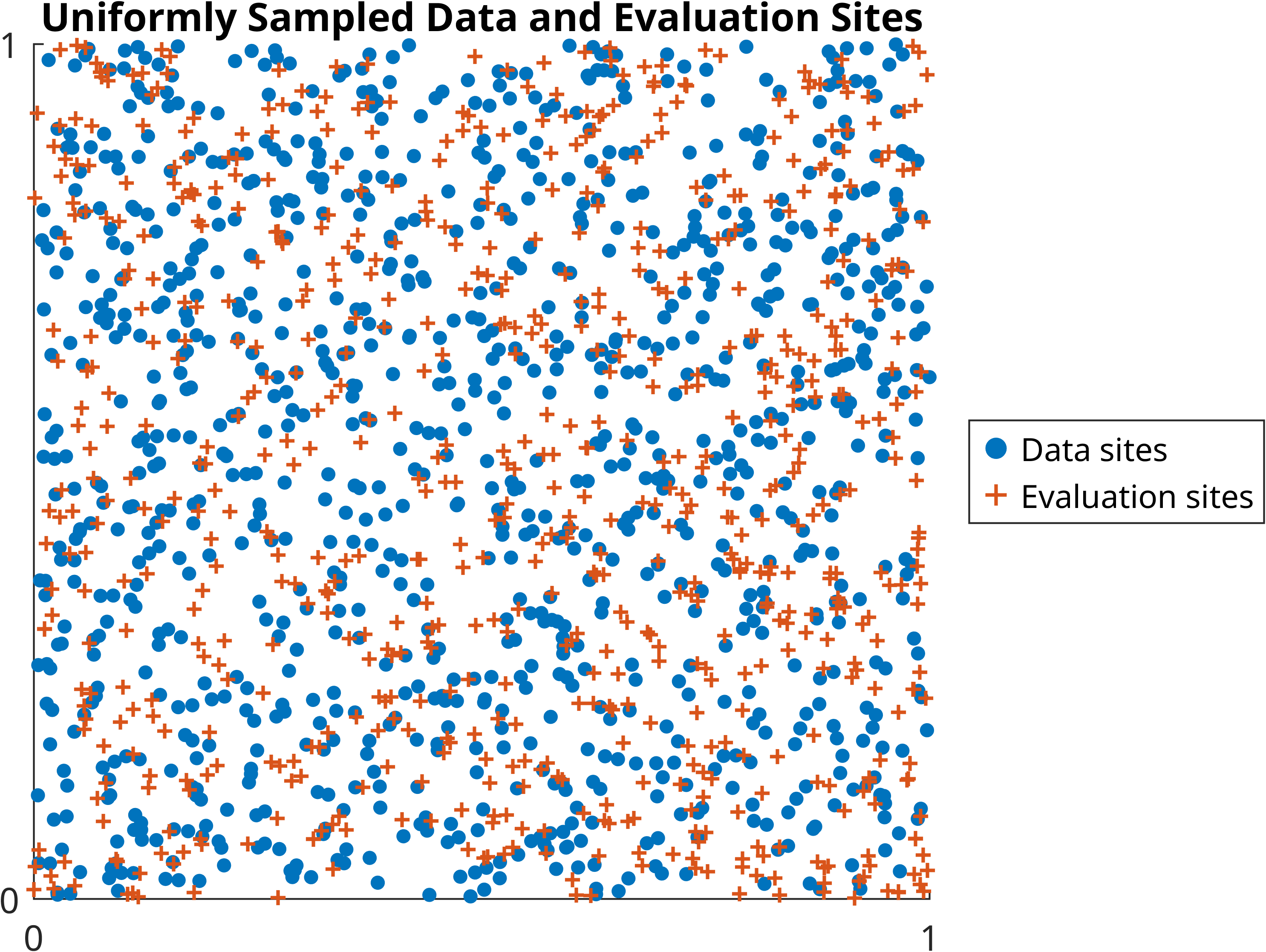}
    \caption{Example of a randomly sampled point cloud of 1000 data sites for decomposition and 700 evaluation sites.}
    \label{fig:point_cloud_example}
\end{figure}
We first evaluate the ability of the proposed frame representation to accurately reconstruct functions from their multiscale coefficients. Here, we restrict ourselves to synthetic test functions and study the influence of key design parameters within the frame. To assess reconstruction fidelity across different regularity classes, we consider two families of test functions with varying smoothness: analytic (\(C^\omega(\mathbb{R}^d)\)) and finitely smooth (\(C^2(\mathbb{R}^d)\)). The corresponding functions in both two and three dimensions are shown in Figure~\ref{fig:synthetic_functions} of the appendix. For the discussion that follows, let $\Omega=[0,1]^d$ with \(d\in\{2,3\}\), and let \(x\in\Omega\). As shown in Figure~\ref{fig:point_cloud_example}, the decomposition and evaluation sites are sampled independently and uniformly from $\Omega$; no minimum-separation constraint is imposed. The target functions are:

\paragraph{Analytic ($C^\omega$)}
\begin{align}
f(x)=\exp\!\left(\frac{-\left(\sum_{i=1}^{d}(x_i-0.5)^2\right)}{c_d}\right),
\qquad
c_d=
\begin{cases}
0.2, & d=2,\\
0.8, & d=3.
\end{cases}
\end{align}
\paragraph{Finitely smooth ($C^2$)}
\begin{align}
\ f(x)=\left(\sum_{i=1}^{d}(x_i-0.5)^2\right)^{3/2}.
\end{align}
Together, these functions provide a controlled setting for comparing kernel choices and evaluating the effect of the decomposition parameters on reconstruction accuracy. 

\subsubsection{Convergence as a Function of the Number of Data Sites $M$}
We use a design-matrix density of $\delta=0.2$ as the reference density in the remaining approximation experiments; in the appendix, section ~\ref{supplement:density_sweep} reports a detailed density sweep. We then investigate how reconstruction accuracy changes as the number of decomposition sites $M$ increases under two protocols:
\begin{enumerate}
    \item For each $M$, choose the finest-level support radius $\rho_0$ so that the resulting design-matrix density is approximately $\delta=0.2$.
    \item Choose $\rho_0$ at the largest value of $M$ so that the design-matrix density is approximately $0.2$, and then hold this support radius fixed for all smaller values of $M$.
\end{enumerate}
All experiments use three decomposition scales ($J=2$) for simplicity, although an arbitrary number of scales can be used in practice. Farthest-point thinning is performed with target cardinalities $m_j\approx M/2^{jd}$, producing scale-indexed fine, intermediate, and coarse frame levels.

\paragraph{Fixed density}
\begin{figure}[htbp]
    \centering
    \includegraphics[width=0.8\textwidth]{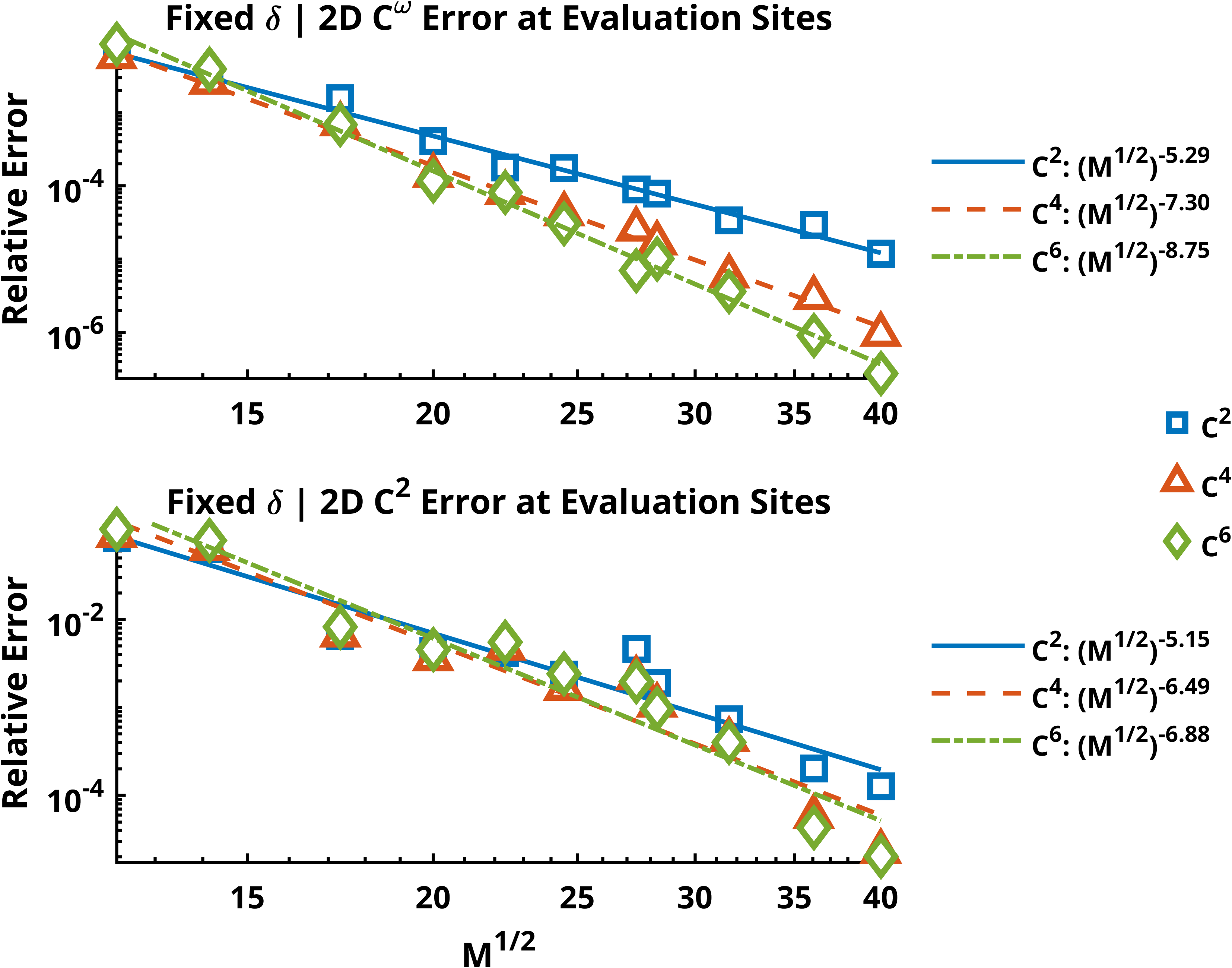}
    \caption{$\ell_2$ error at evaluation sites as a function of the number of data sites $M$ while holding the design matrix density at approximately $0.2$ (2D).}
    \label{fig:PC_N_Error_pin_density_off_node_2d}
\end{figure}
\begin{figure}[htbp]
    \centering
    \includegraphics[width=0.8\textwidth]{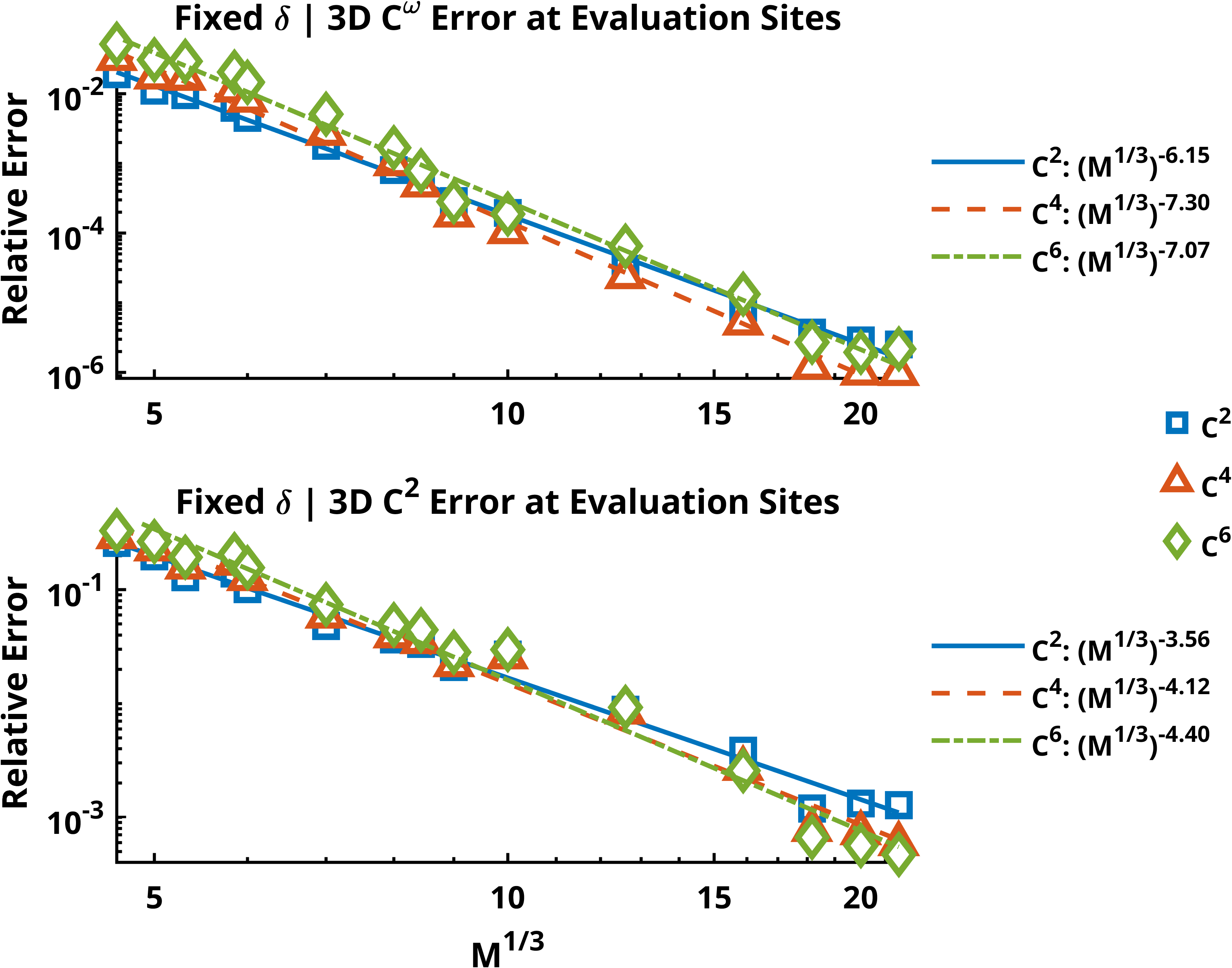}
    \caption{$\ell_2$ error at evaluation sites as a function of the number of decomposition points while holding the design matrix density at approximately $0.2$ (3D).}
    \label{fig:PC_N_Error_pin_density_off_node_3d}
\end{figure}
We first consider the fixed-density case. Let $\delta$ denote the fraction of nonzero entries in the design matrix. Holding $\delta$ approximately constant fixes the fraction of active kernel evaluations and therefore provides a convenient way to compare sparsity across resolutions. In this protocol, the support radius $\rho_0$ is adjusted separately for each $M$ to attain the target density; we make no assumption here about an asymptotic scaling law for $\rho_0$ with $M$.

%%%%%%%%%%%%%%%%%%%%%%%%%%%%%%%%% Hold density steady%%%%%%%%%%%%%%%%%%%%%%%%%%%%%%%

With $\delta$ fixed at approximately $0.2$, Figures~\ref{fig:PC_N_Error_pin_density_off_node_2d} and~\ref{fig:PC_N_Error_pin_density_off_node_3d} show that the evaluation-site error decreases rapidly as $M$ increases. For the analytic targets, the observed slopes are numerically close to the doubled rates conjectured in Section~\ref{sec:superconvergence_conjecture}. The $C^2$ targets exhibit faster than expected rates over the tested range, which we interpret as preasymptotic behavior. Note the different $y$-axis scales for the analytic and finitely smooth targets in both 2D and 3D. The $C^6$ kernel in 3D shows a leveling off under the fixed-density protocol. Because $\rho_0$ is retuned as $M$ changes in this experiment, we examine fixed support separately below.
%%%%%%%%%%%%%%%%%%%%%%%%% Hold rho steady after solved on fine scale%%%%%%%%%%%%%%%%%%
\paragraph{Fixed support}
\begin{figure}[htbp]
    \centering
    \includegraphics[width=0.8\textwidth]{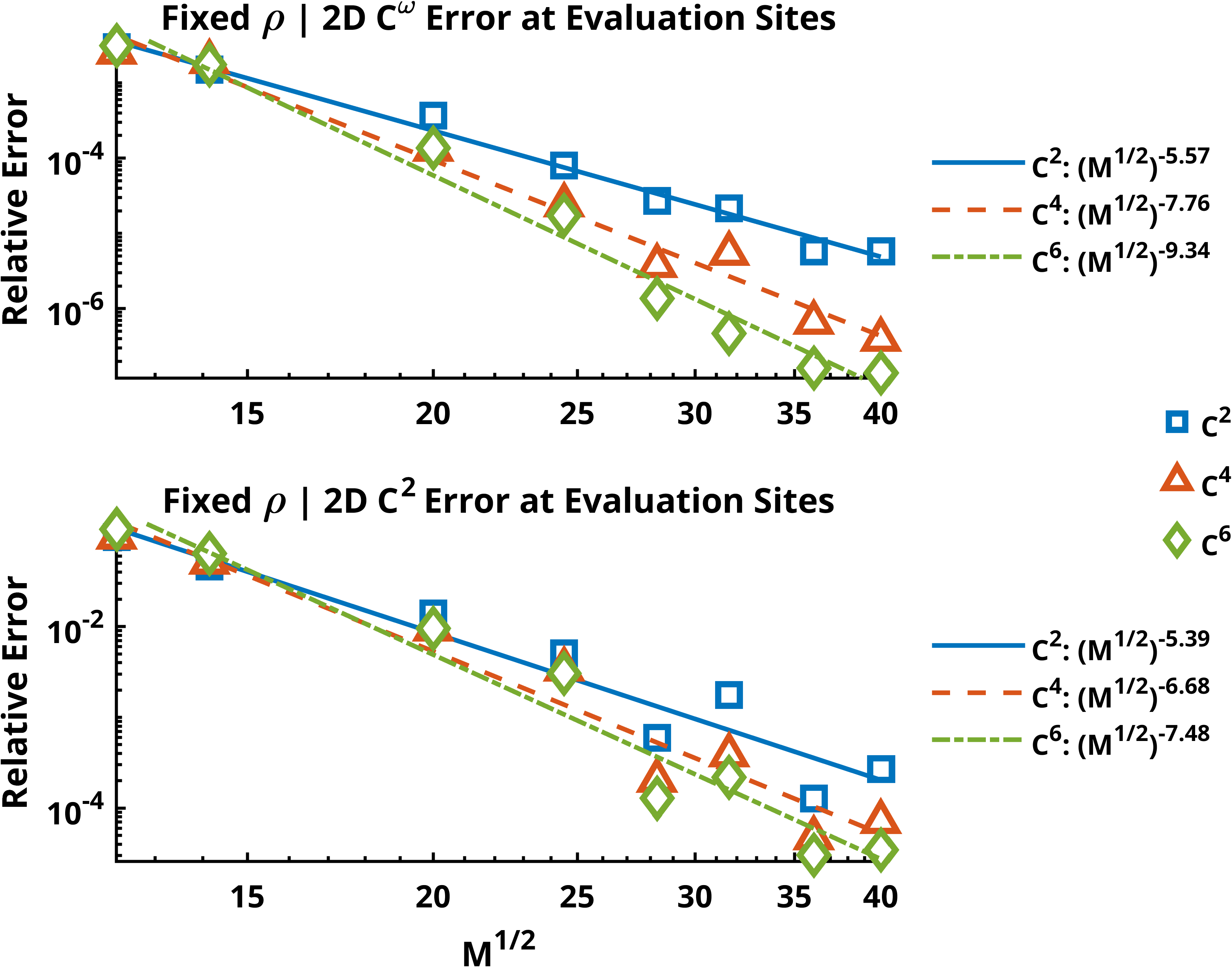}
    \caption{$\ell_2$ error as a function of the number of points in the domain. The kernel support radius ($\rho$) is chosen on the finest scale to produce a design matrix density of approximately $0.2$, and this value is then held fixed as the number of points changes.}
    \label{fig:point_cloud_off_node_finest_scale_fixed_support_2d}
\end{figure}

\begin{figure}[htbp]
    \centering
    \includegraphics[width=0.8\textwidth]{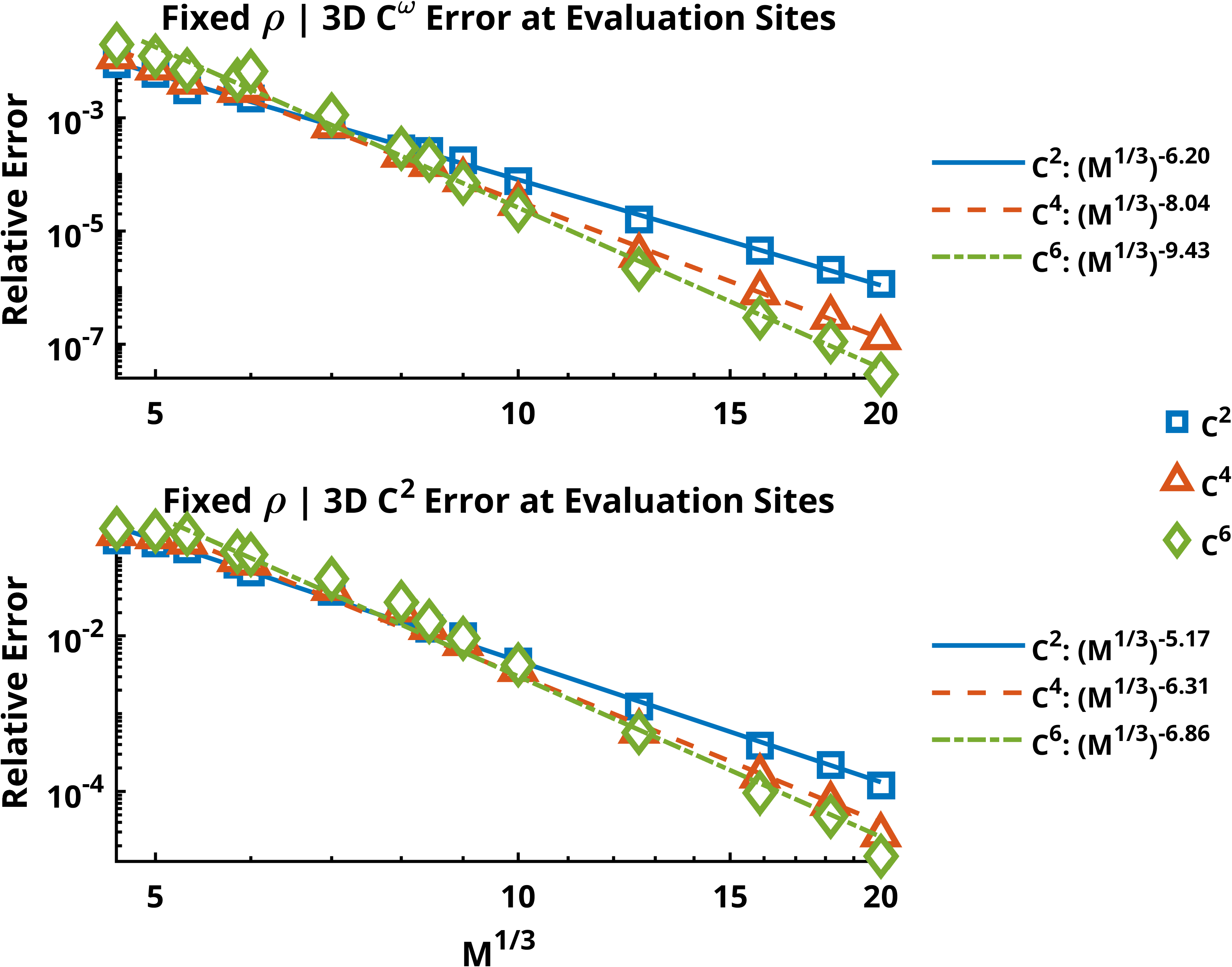}
    \caption{$\ell_2$ error as a function of the number of points in the domain. The kernel support radius ($\rho$) is chosen on the finest scale to produce a design matrix density of approximately $0.2$, and this value is then held fixed as the number of points changes.}
    \label{fig:point_cloud_off_node_finest_scale_fixed_support_3d}
\end{figure}
Next, we examine convergence with a fixed finest-level support radius $\rho_0$. We first choose $\rho_0$ at the largest value of $M$ so that the design matrix has density approximately $0.2$, and then use this same value of $\rho_0$ for all remaining resolutions; consequently, $\delta$ is allowed to vary with $M$. The evaluation-site errors in Figures~\ref{fig:point_cloud_off_node_finest_scale_fixed_support_2d} and~\ref{fig:point_cloud_off_node_finest_scale_fixed_support_3d} again decrease with $M$, rapidly for the analytic target and more slowly for the finitely smooth target. Under this fixed-support protocol, the observed slopes for the analytic targets are numerically close to the rates predicted by the doubling conjecture in Section~\ref{sec:superconvergence_conjecture}.

\FloatBarrier
\subsection{Operator Learning Results}
\label{sec:ol_results}
We also evaluated the FKM on a set of operator learning benchmark problems spanning both standard PDEs and problems with inherent multiscale structure. The standard PDE benchmarks allow comparison against existing operator learning methods, while the multiscale benchmarks test the effect of an explicitly scale-indexed representation. We used a $C^4$ Matérn kernel for the operator interpolant and a $C^4$ Wendland kernel for the frame.
% VERIFY BEFORE SUBMISSION: specify the Matérn smoothness parameter \nu or the exact kernel formula.
%We also experimented with Fourier, wavelet, and hybrid coefficient representations, but these consistently produced higher prediction errors and are therefore omitted from the final comparison.

In selecting a design-matrix density for the operator-learning problems, we found that an initial value of $\eta\approx2$ in~\eqref{eq:rho-schedule} provided a useful balance between computational efficiency and reconstruction accuracy on unit domains. We then adjusted the density for each benchmark; the values used are reported in Tables~\ref{tab:base_results} and~\ref{tab:multiscale_results}. The resulting levelwise frame components are illustrated in Figures~\ref{fig:cylinder_flow_shedding_scales}--\ref{fig:species_transport_z_scales}. Unlike the approximation convergence studies, these experiments report generalization errors at fixed spatial resolutions. We compare against our own implementations of Geo-FNO, Transolver, and VKM.

\subsubsection{Base PDEs}
To compare our approach to the state-of-the-art methods, we first present results for the following problems that use a mix of gridded data, triangular meshes, and point clouds.

\textbf{2D Lid-Driven Cavity Flow:} Maps an initial velocity field that is zero in the interior, with a rightward-moving lid, to the velocity field at $T=5$ s, i.e., $\mathcal{G}:u(x,0)\to u(x,T)$, for a square cavity with a randomly initialized lid velocity~\cite{sharma2026fluids} (triangular mesh).

\textbf{2D Darcy Flow (PWC):} Maps a piecewise constant (PWC) permeability field $c: [0, 1]^2 \to \mathbb{R}$ to the pressure field \(u : [0,1]^2 \to \mathbb{R}\) via the Darcy flow equation~\cite{lu2022neuraloperators} (gridded data).

\textbf{2D Darcy Flow on a Triangle:} Maps a boundary-condition field to the pressure field $h(x,y)$ on a triangular domain through Darcy's equation~\cite{lu2022neuraloperators}. The permeability is set to $0.1$, the forcing is set to $-1.0$, and the boundary-condition field is sampled from a Gaussian random field (triangular mesh).

\textbf{3D Reaction-Variable-Coefficient-Diffusion on a Point Cloud:} Maps a spatially constant initial chemical concentration $c(y,0):\mathbb{R}^3\to\mathbb{R}$, whose scalar value is sampled uniformly at random for each realization, to the concentration $c(y,0.5)$ inside the unit ball, sampled on a point cloud, as described in~\cite{sharma2025ensemble}. Due to a discontinuity in the reaction term, the final solution exhibits a sharp gradient across the plane $y_1=0$.
\begin{table}[h]
\centering
\caption{Comparison of operator learning methods on standard PDE benchmarks. Errors are reported as relative $\ell_2$ errors. Here, ``Density'' refers to the density of the design matrix, $\kappa$ denotes the target condition number of the operator-learning Gram matrix $K$, $\lambda$ is the regularization parameter, $N_T$ is the number of training functions, and $M$ is the number of spatial locations.}
% VERIFY BEFORE SUBMISSION: confirm that the reported target is cond(K), rather than cond(K+\lambda I).
\resizebox{\textwidth}{!}{%
\begin{tabular}{c c c c c c c c c c}
\hline
Problem & Density & $\kappa$ & $\lambda$ & $N_T$ & $M$ & Geo-FNO & Transolver & VKM & FKM \\
\hline
Cavity Flow      & .01 & 1e15 & 1e-7 & 10000 & 9566 & 2.19e-4 & 1.89e-4 & 1.45e-4 & \textbf{4.06e-7} \\
Darcy (PWC)      & .1 & 1e6 & 1e-4 & 1000 & 841 & \textbf{1.79e-2} & 1.99e-2 & 3.06e-2 & 3.28e-2 \\
Darcy (triangle) & .1 & 1e15 & 1e-12 & 1800 & 861 & 4.3e-4 & - & 3.3e-4 & \textbf{5.05e-6} \\
React.-Diff.     & .02& 1e15 & 1e-14 & 1000 & 4325 & 6.68e-5 & 8.09e-5 & 8.75e-7 & \textbf{3.9e-7} \\
\hline
\end{tabular}
}

\label{tab:base_results}
\end{table}

Table~\ref{tab:base_results} demonstrates that the FKM achieves the lowest generalization error on three of the four benchmark problems, with absolutely no changes to the input or output data. The only exception is the piecewise-constant Darcy problem, where discontinuities violate the smoothness assumptions of the kernel-frame representation. Nevertheless, even on this problem, the FKM remains competitive with the single-scale VKM while offering an \emph{a posteriori} multiscale decomposition.

The largest improvement is observed for the cavity-flow dataset, where FKM reduces the prediction error by more than two orders of magnitude relative to the closest competing method. On the triangular Darcy problem, the reduction is nearly two orders of magnitude. For reaction--diffusion, FKM improves on VKM by a factor of approximately two and on Geo-FNO and Transolver by more than two orders of magnitude. These results indicate that the scale-indexed kernel-frame representation can substantially improve prediction accuracy across several operator-learning problems.

\subsubsection{Multiscale PDEs}
To understand the impacts of multiscale structure within the PDEs themselves, we further evaluated our method on the following benchmarks:

\textbf{Flow Past a Cylinder (Laminar):} Predict the incompressible velocity field $u(x,y,t=10\,\mathrm{s})$ from the initial uniform field $u(x,y,0)$. We impose a no-slip condition on the cylinder, zero pressure at the outlet, and prescribed velocity on the top, bottom, and inlet boundaries. We choose Reynolds numbers $Re\in[25,64]$ such that the flow regime is laminar~\cite{sharma2026fluids}. The geometry introduces distinct length scales associated with the cylinder radius and the domain length.

\textbf{Flow Past a Cylinder (Vortex Shedding):} Predict the incompressible velocity field $u(x,y,t=10\,\mathrm{s})$ from the initial uniform velocity field $u(x,y,0)$. We impose a no-slip condition on the cylinder, zero pressure at the outlet, and prescribed velocity on the top, bottom, and inlet boundaries. We choose Reynolds numbers $Re\in[112,199]$ such that the flow regime exhibits vortex shedding~\cite{sharma2026fluids}. The resulting vortex train introduces additional spatial structure between the cylinder and domain length scales.

\textbf{3D Compressible Navier--Stokes (NS), Mach 1.0:} Predict the velocity field $v:[0,1]^3\to\mathbb{R}$ from an initial velocity field $v_0:[0,1]^3\to\mathbb{R}$ using the compressible Navier--Stokes equations. A Mach number of $1.0$ introduces strong compressibility effects and additional spatial structure~\cite{takamoto2022pdebench}. The associated loss of smoothness makes this a challenging benchmark.
% VERIFY BEFORE SUBMISSION: if all velocity components are predicted, change the codomain above to \mathbb{R}^3; otherwise call v a velocity component.

\textbf{3D Species Transport:} Learn the operator mapping inlet velocity fields $u_{\text{inlet}}$ to the velocity field $u(y, z=0.5)$ at final time $t = 0.5$ for turbulent mixing of air and methane in a static Kenics mixer with three twisted blades. Air enters for $y<0$ and methane for $y>0$~\cite{moralesubal2024kenics}. We model this as an incompressible flow problem with the $k-\omega$-SST turbulence model; turbulence here creates multiple length scales.

\begin{table}[htbp]
\centering
\caption{Comparison of operator learning methods on multiscale PDE benchmarks. Errors are reported as relative $\ell_2$ errors. Here, ``Density'' refers to the density of the design matrix, $\kappa$ denotes the target condition number of the operator-learning Gram matrix $K$, $\lambda$ is the regularization parameter, $N_T$ is the number of training functions, and $M$ is the number of output points.}
% VERIFY BEFORE SUBMISSION: confirm that the reported target is cond(K), rather than cond(K+\lambda I).
\resizebox{\textwidth}{!}{%
\begin{tabular}{c c c c c c c c c c}
\hline
Problem & Density & $\kappa$ & $\lambda$ & $N_T$ & $M$ & Geo-FNO & Transolver & VKM & FKM \\
\hline
Cylinder Flow (Laminar)         & .01 & 1e15 & 1e-7 & 10000 & 9520 & 9.88e-4 & 8.02e-3 & 5.32e-5 & \textbf{2.82e-7} \\
Cylinder Flow (Vortex Shedding) & .01 & 1e15 & 1e-7 & 10000 & 9520 & 4.62e-5 & 1.19e-6 & 2.34e-6 & \textbf{5.71e-7} \\
NS-Mach 1.0                     & .005 & 1e12 & 1e-6 & 50 & 10000 & 0.581 & \textbf{0.481} & 0.541 & 0.4937 \\
Species Transport               & .005 & 1e11 & 1e-4 & 10000 & 11388 & 5.11e-4 & 2.26e-3 & 2.35e-4 & \textbf{1.10e-4} \\
\hline
\end{tabular}
}

\label{tab:multiscale_results}
\end{table}

Table~\ref{tab:multiscale_results} demonstrates that FKM achieves the lowest prediction error on three of the four multiscale benchmark problems. The only exception is the compressible Navier--Stokes benchmark, where FKM remains competitive with the best-performing method despite the increased difficulty of the problem and the limited training set of only 50 functions.

The largest improvement is observed for the laminar cylinder-flow problem, where FKM reduces the prediction error by more than two orders of magnitude relative to the closest competing method. For vortex shedding, FKM reduces the error by a factor of about two relative to Transolver, the closest competitor. On species transport, FKM achieves approximately half the error of VKM and larger improvements over Geo-FNO and Transolver. These results suggest that organizing the learned representation by kernel-frame level can be advantageous for PDEs with spatial structure across multiple length scales.

\subsubsection{Qualitative Multiscale Analysis}
\label{sec:mult-analysis}
While the improved prediction accuracies demonstrate the effectiveness of FKM, they do not fully capture the representational information provided by the frame. The predicted coefficients retain their frame-level organization, so the contribution from each level can be synthesized separately before the contributions are summed to recover the full prediction. Because the frame is redundant and its levels are generally nonorthogonal, these levelwise components should be interpreted as scale-indexed frame contributions rather than disjoint spectral bands. Figures~\ref{fig:cylinder_flow_shedding_scales}, \ref{fig:reaction_diffusion_scales}, and \ref{fig:species_transport_z_scales} illustrate these components for selected operator-learning benchmarks.
\begin{figure}[htbp]
    \centering
    \begin{subfigure}[b]{0.49\textwidth}
        \centering
        \includegraphics[width=\textwidth]{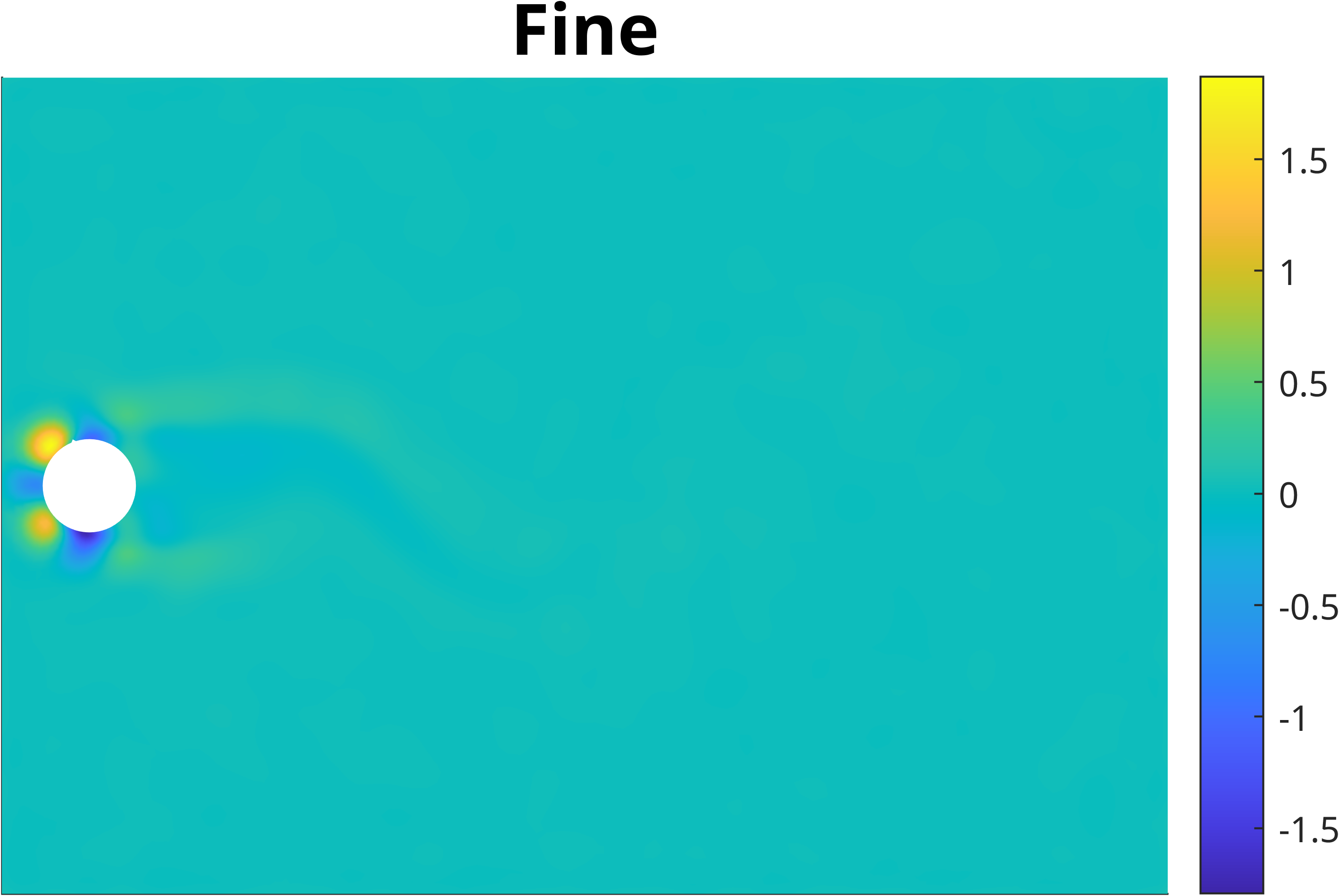}
        \caption{}
        \label{fig:image1}
    \end{subfigure}
    \hfill
    \begin{subfigure}[b]{0.49\textwidth}
        \centering
        \includegraphics[width=\textwidth]{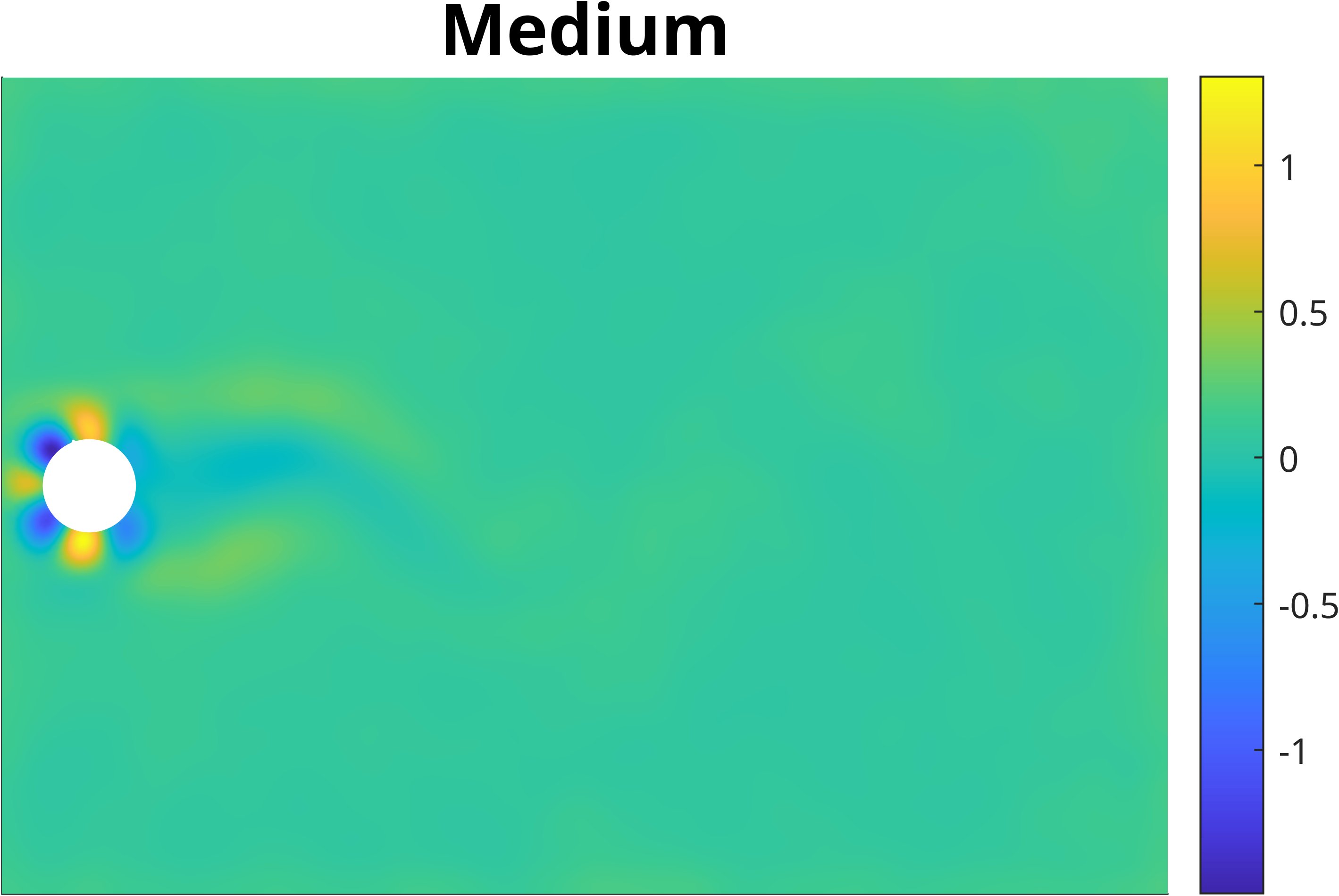}
        \caption{}
        \label{fig:image2}
    \end{subfigure}
    \hfill
    \begin{subfigure}[b]{0.49\textwidth}
        \centering
        \includegraphics[width=\textwidth]{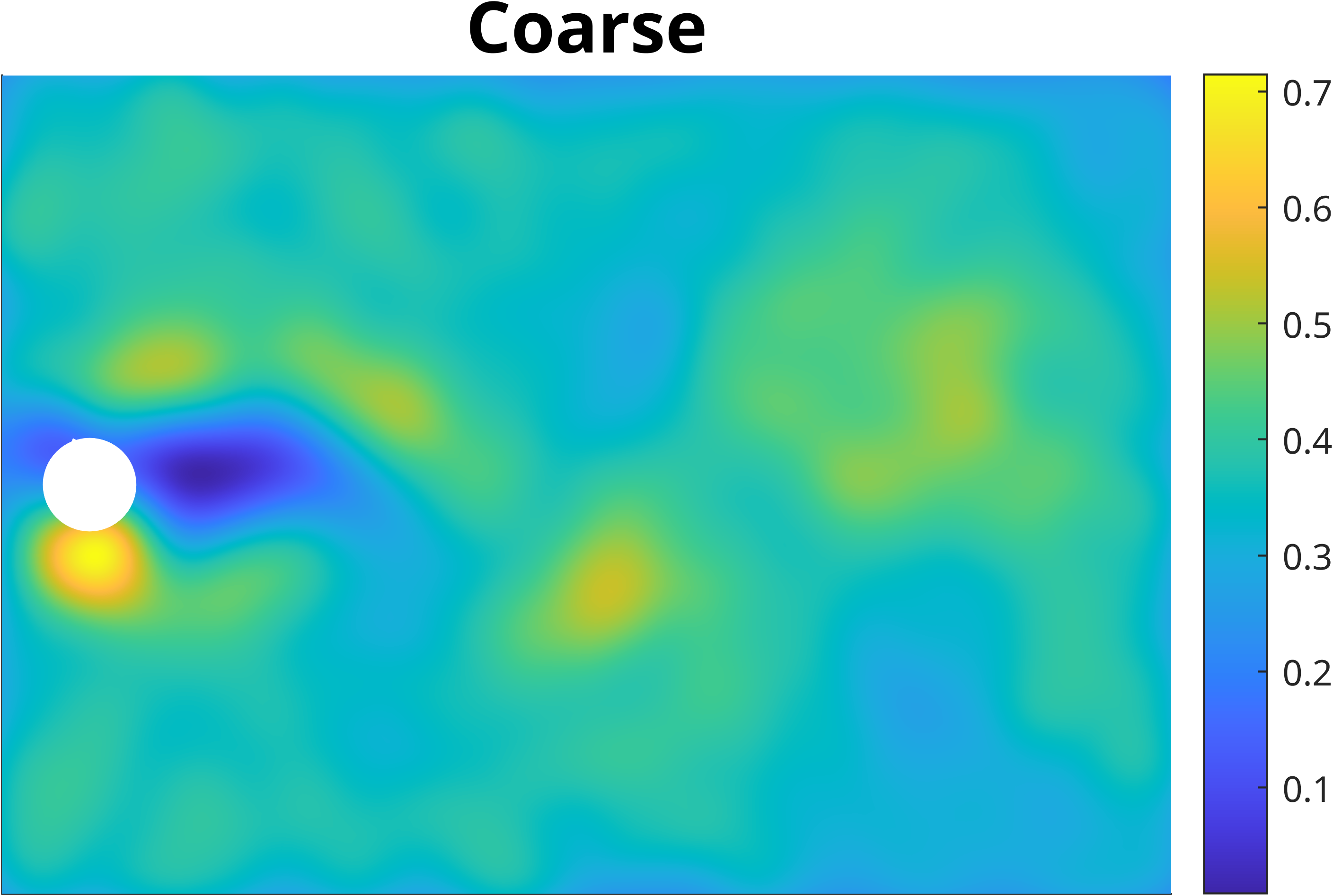}
        \caption{}
        \label{fig:image3}
    \end{subfigure}
    \hfill
    \begin{subfigure}[b]{0.49\textwidth}
        \centering
        \includegraphics[width=\textwidth]{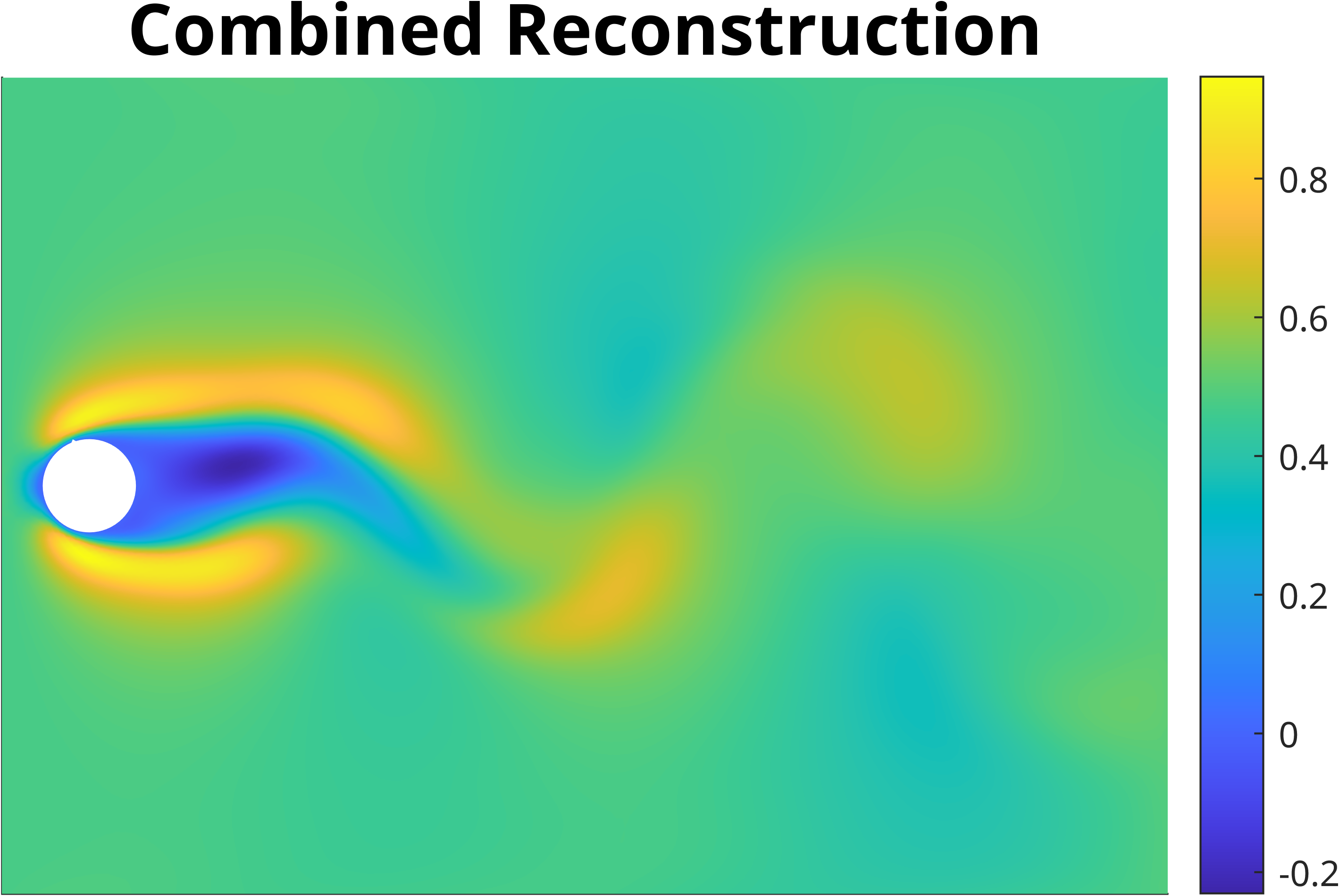}
        \caption{}
        \label{fig:image4}
    \end{subfigure}

    \caption{Scale-indexed frame decomposition of a predicted solution to the cylinder-flow problem with vortex shedding at three levels. The bottom right image shows the combined reconstruction of the problem.}
    \label{fig:cylinder_flow_shedding_scales}
\end{figure}
\paragraph{Flow past a cylinder}
Figure~\ref{fig:cylinder_flow_shedding_scales} presents the levelwise frame contributions for a single component of the velocity field in the vortex-shedding problem. In this example, the coarsest-level contribution contains the primary wake and large downstream vortical structures, while the finer-level contributions contain increasingly localized structure near the cylinder and in the wake. Their sum gives the combined reconstruction shown in the bottom-right panel.

\FloatBarrier
\begin{figure}[htbp]
    \centering

    \begin{subfigure}[b]{0.49\textwidth}
        \centering
        \includegraphics[width=\textwidth]{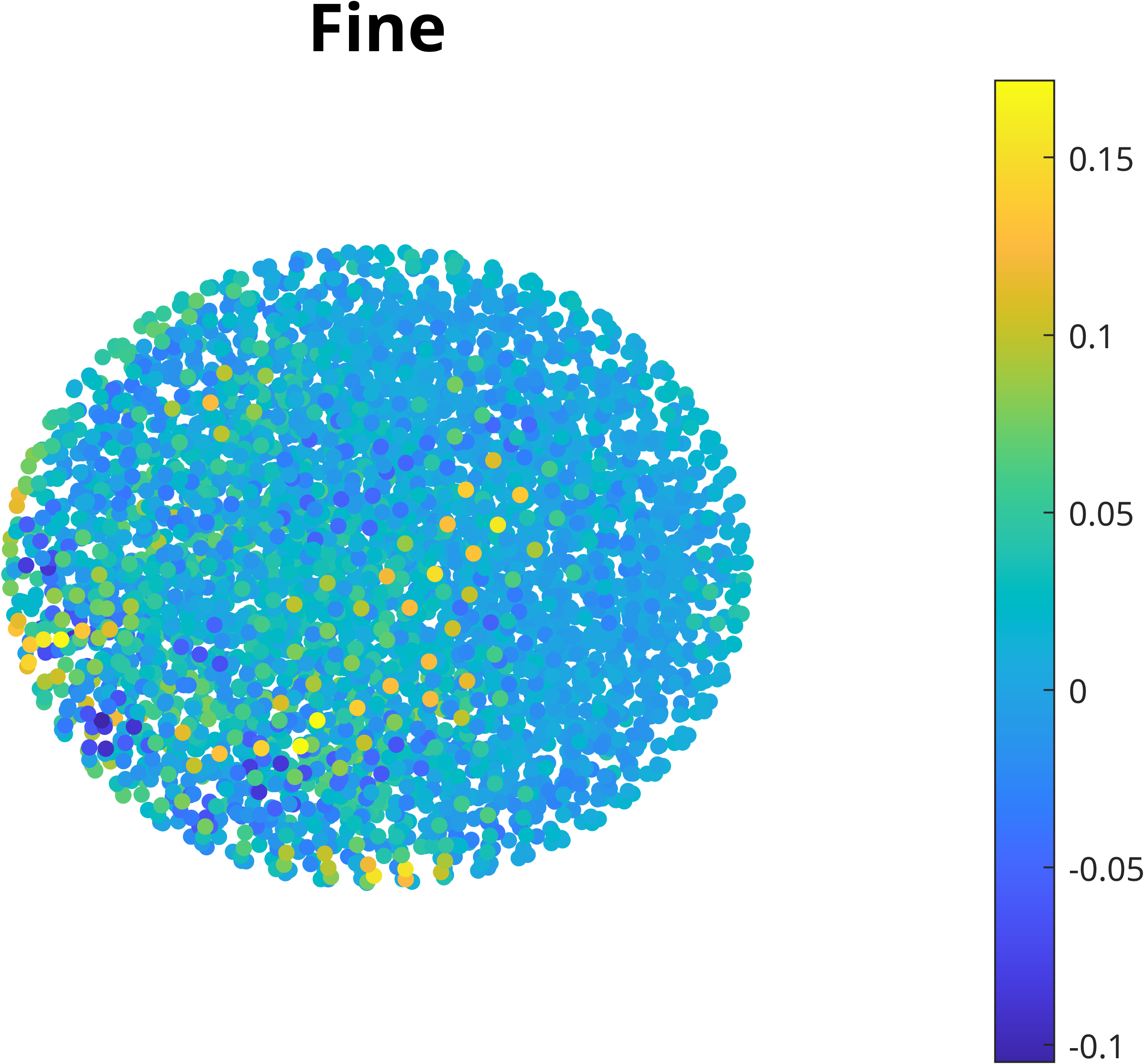}
        \caption{}
    \end{subfigure}
    \hfill
    \begin{subfigure}[b]{0.49\textwidth}
        \centering
        \includegraphics[width=\textwidth]{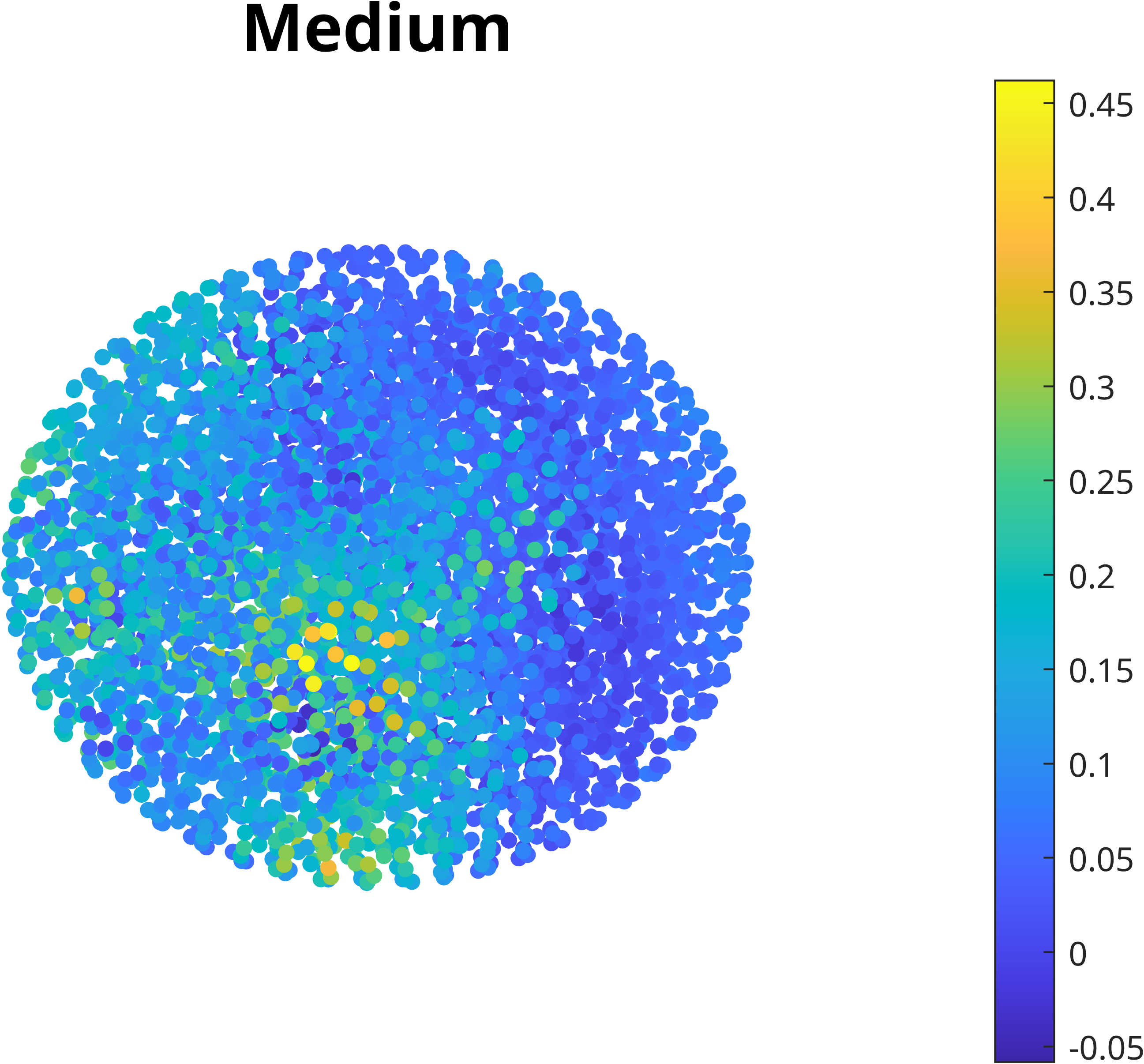}
        \caption{}
    \end{subfigure}
    \hfill
    \begin{subfigure}[b]{0.49\textwidth}
        \centering
        \includegraphics[width=\textwidth]{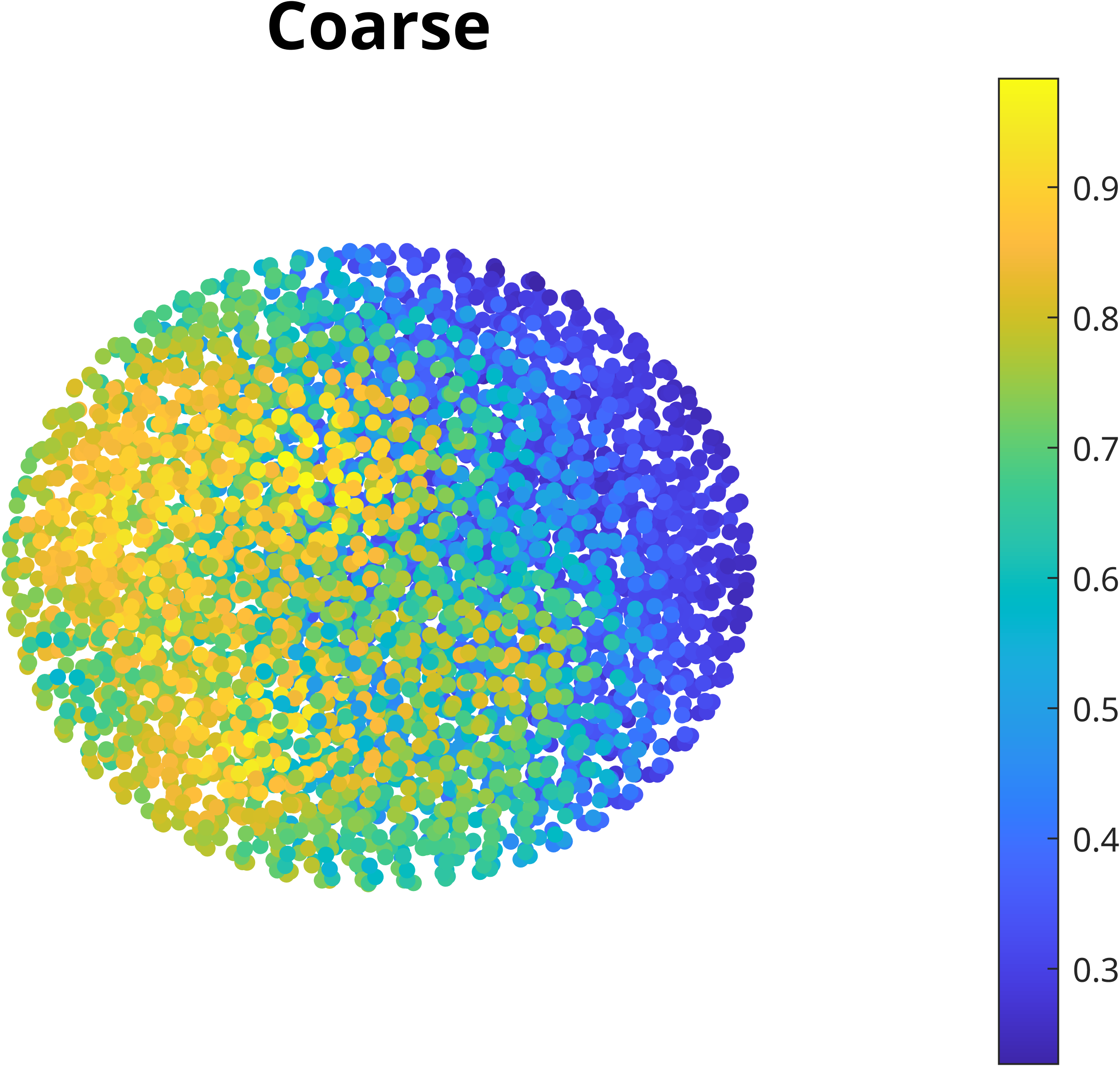}
        \caption{}
    \end{subfigure}
    \hfill
    \begin{subfigure}[b]{0.49\textwidth}
        \centering
        \includegraphics[width=\textwidth]{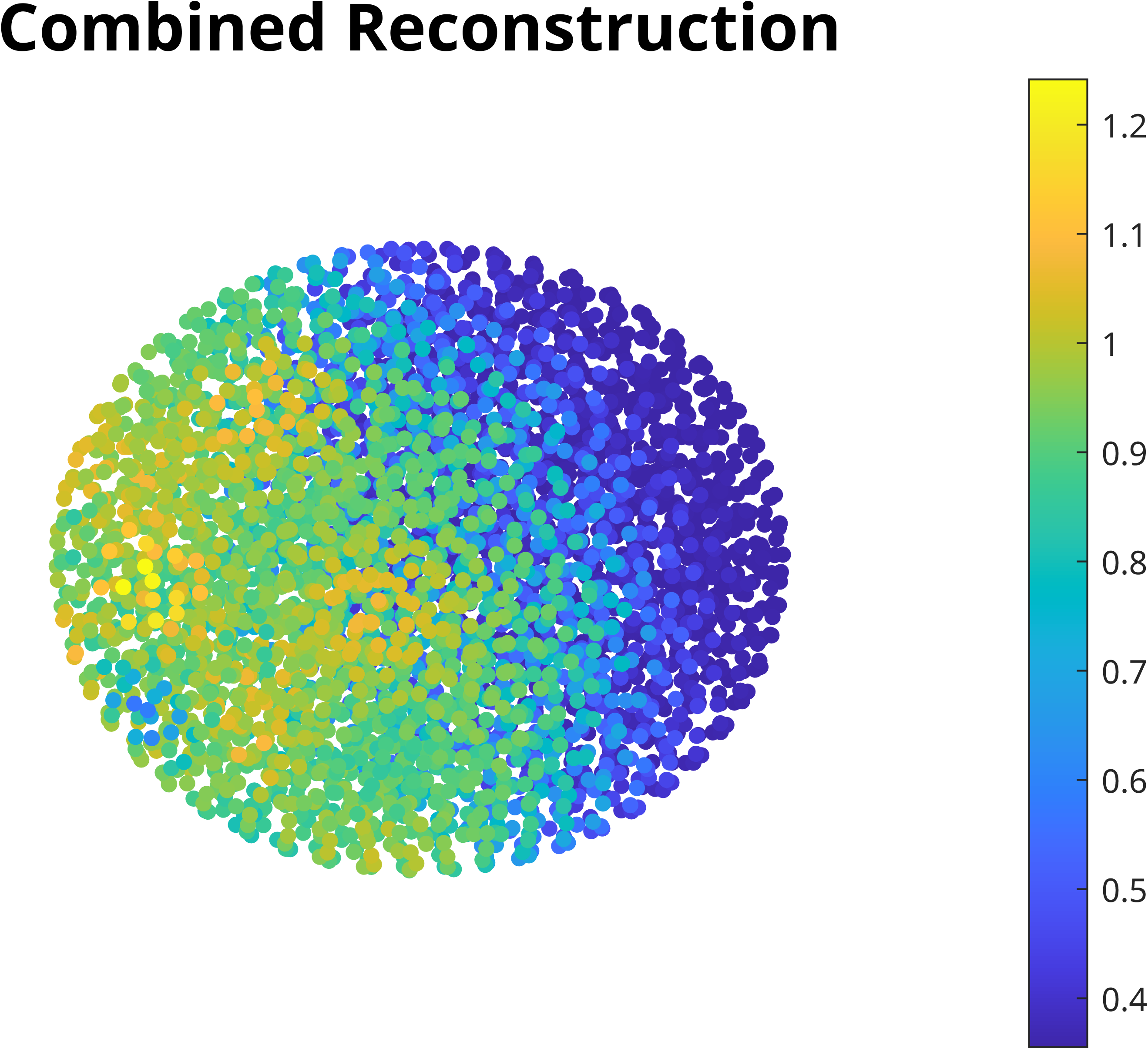}
        \caption{}
    \end{subfigure}

    \caption{Scale-indexed frame decomposition of a predicted solution to the reaction--diffusion problem at three levels. The bottom right image shows the combined reconstruction.}
    \label{fig:reaction_diffusion_scales}
\end{figure}
\paragraph{Reaction--Diffusion}
Figure~\ref{fig:reaction_diffusion_scales} illustrates the levelwise frame contributions to the concentration field in the reaction--diffusion problem, shown by coloring the collocation points directly. The coarsest-level contribution contains the dominant global concentration variation and transition interface, while the intermediate and finest levels contain progressively more localized variations. The components are nonorthogonal and should therefore be read as frame-level contributions rather than a strict frequency decomposition.
\FloatBarrier
\begin{figure}[htbp]
    \centering

    \begin{subfigure}[t]{0.49\textwidth}
        \centering
        \includegraphics[width=\linewidth]{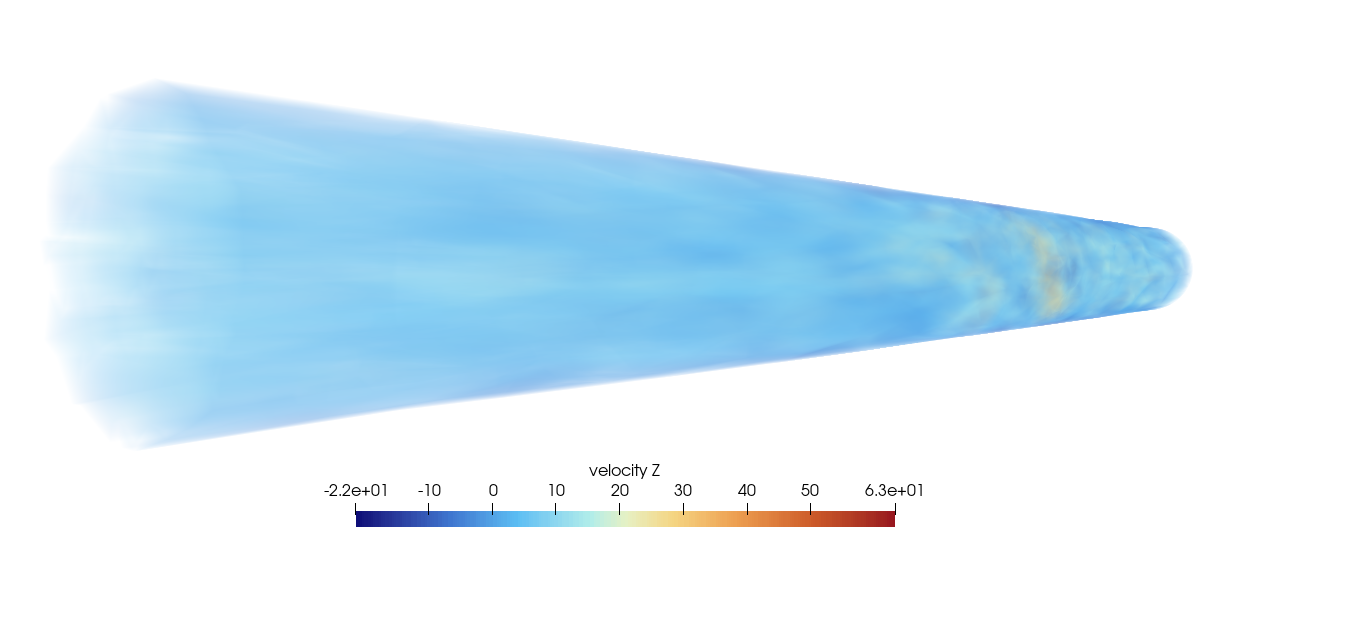}
        \caption{Fine}
    \end{subfigure}
    \hfill
    \begin{subfigure}[t]{0.49\textwidth}
        \centering
        \includegraphics[width=\linewidth]{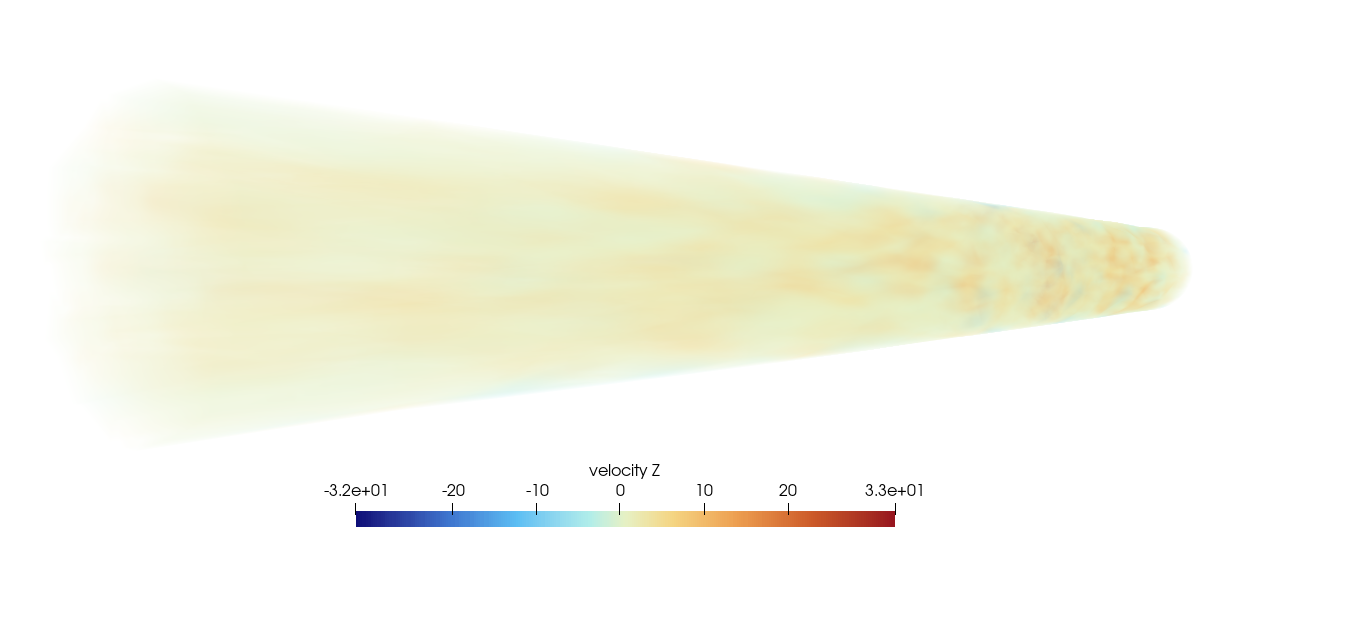}
        \caption{Medium}
    \end{subfigure}

    \begin{subfigure}[t]{0.49\textwidth}
        \centering
        \includegraphics[width=\linewidth]{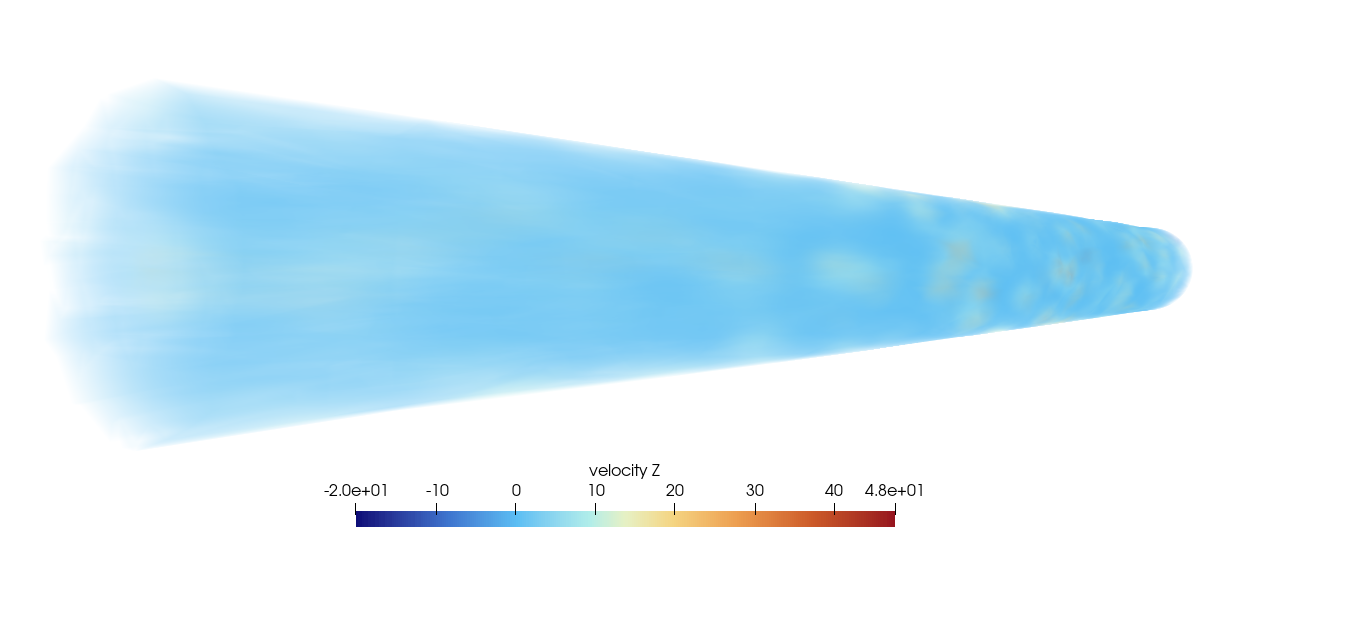}
        \caption{Coarse}
    \end{subfigure}
    \hfill
    \begin{subfigure}[t]{0.49\textwidth}
        \centering
        \includegraphics[width=\linewidth]{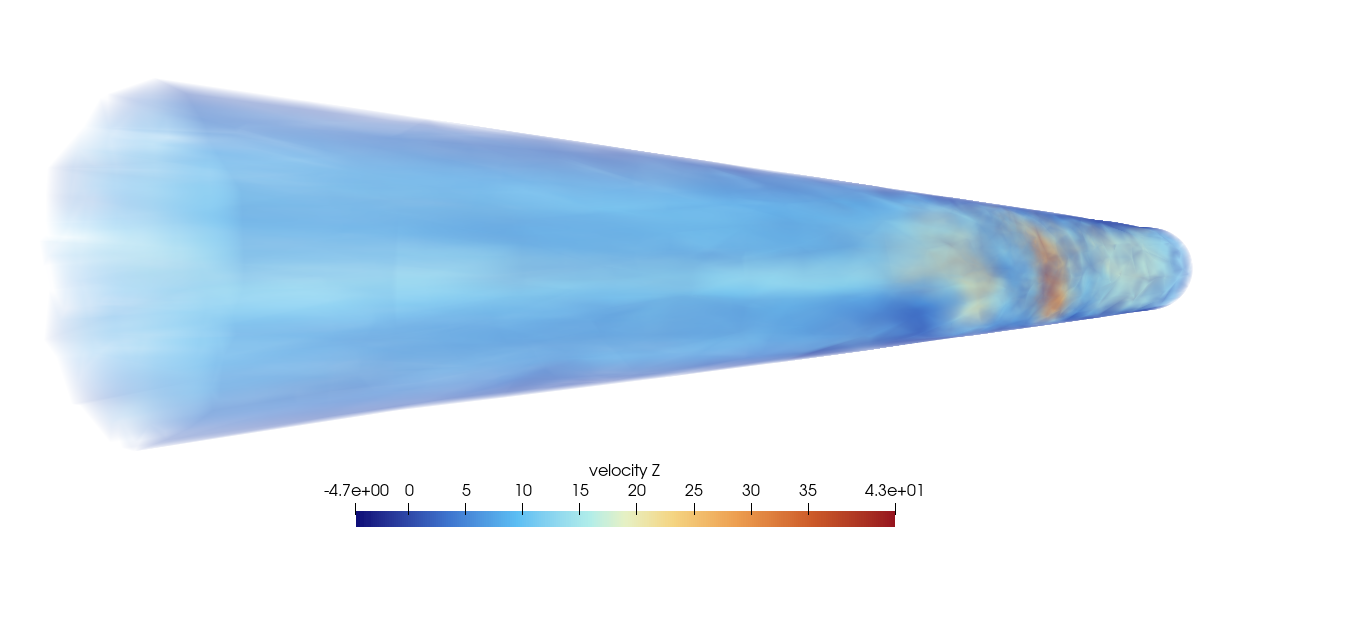}
        \caption{Combined}
    \end{subfigure}

    \caption{Scale-indexed frame decomposition of a predicted solution to the species-transport problem for the $z$-component of velocity at three levels. The bottom right image shows the combined reconstruction.}
    \label{fig:species_transport_z_scales}
\end{figure}
\paragraph{Species Transport}
Finally, Figure~\ref{fig:species_transport_z_scales} illustrates the levelwise frame contributions to the $z$-component of the velocity field in the species-transport problem. The coarsest-level contribution contains the dominant transport pattern, while the intermediate and finest levels show increasingly localized vortical and mixing-induced structure downstream of the blades. As above, these panels visualize the contributions associated with the frame levels and do not imply an orthogonal separation of physical scales.

\FloatBarrier
Similar scale-indexed behavior is observed across the remaining benchmark problems in appendix section~\ref{supplement:multiscale_decomps}. These levelwise outputs provide additional interpretability and a representation that may be useful for coupling operator learning with multiscale numerical methods beyond end-to-end solution prediction.

\section{Conclusions and Future Work}
\label{sec:conclusions}
We presented a novel multiscale kernel frame approximation technique that allows for the decomposition of target functions into multiple spatial length scales. We proved that the frame is interpolatory, established formal error estimates, and presented a doubling conjecture that matches observed numerical convergence rates. We then leveraged the frame as input and output approximation spaces within a kernel method for operator learning, leading to the Frame Kernel Method (FKM). We found that the FKM produced more accurate predictions than several standard operator learning architectures without any changes to the data.

For future work, we aim to prove the doubling conjecture and thereby establish the doubled convergence rates. The sparse QR factorization currently limits our 3D implementation to approximately $O(10^5)$ points. To allow for scaling to millions of spatial locations, we will explore sketching approaches, preconditioning techniques within iterative least squares solvers, and block QR decompositions that take advantage of the block structure of our approximant. Finally, we aim to incorporate the FKM within multiscale-multigrid solvers designed for challenging problems arising from cohesive zone formulations and nonlinear elasticity formulations for solids, textiles, and composites.

\bibliographystyle{siamplain}
% \bibliographystyle{siam}
%\FloatBarrier
\bibliography{references}

\appendix
%\appendix
\section{Target functions}
\label{supplement:synthetic_functions}
\begin{figure}[htbp]
    \centering

    \begin{subfigure}[t]{0.49\textwidth}
        \centering
        \includegraphics[width=\linewidth]{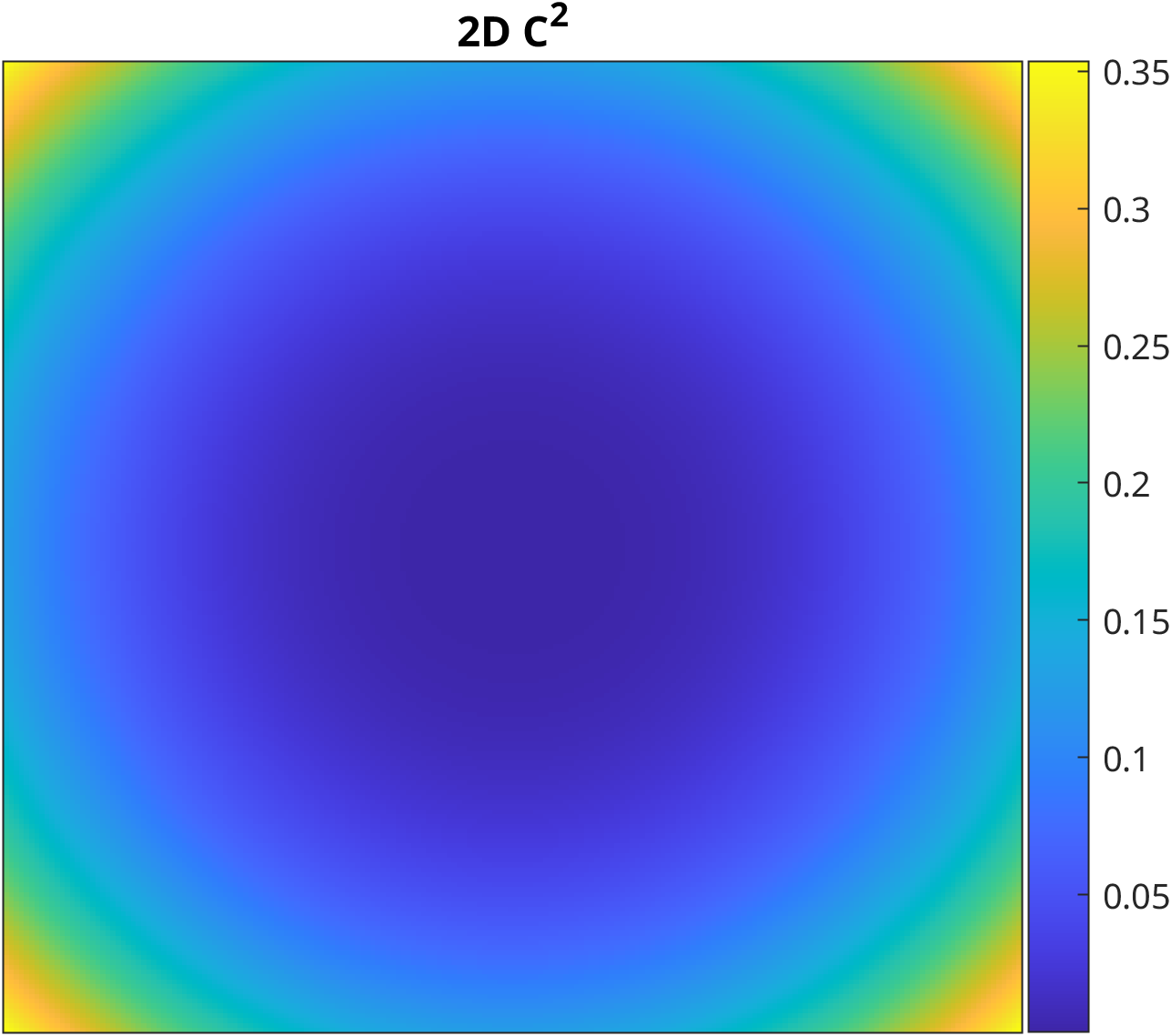}
        \label{fig:synthetic_function_c1_2d}
    \end{subfigure}
    \hfill
    \begin{subfigure}[t]{0.49\textwidth}
        \centering
        \includegraphics[width=\linewidth]{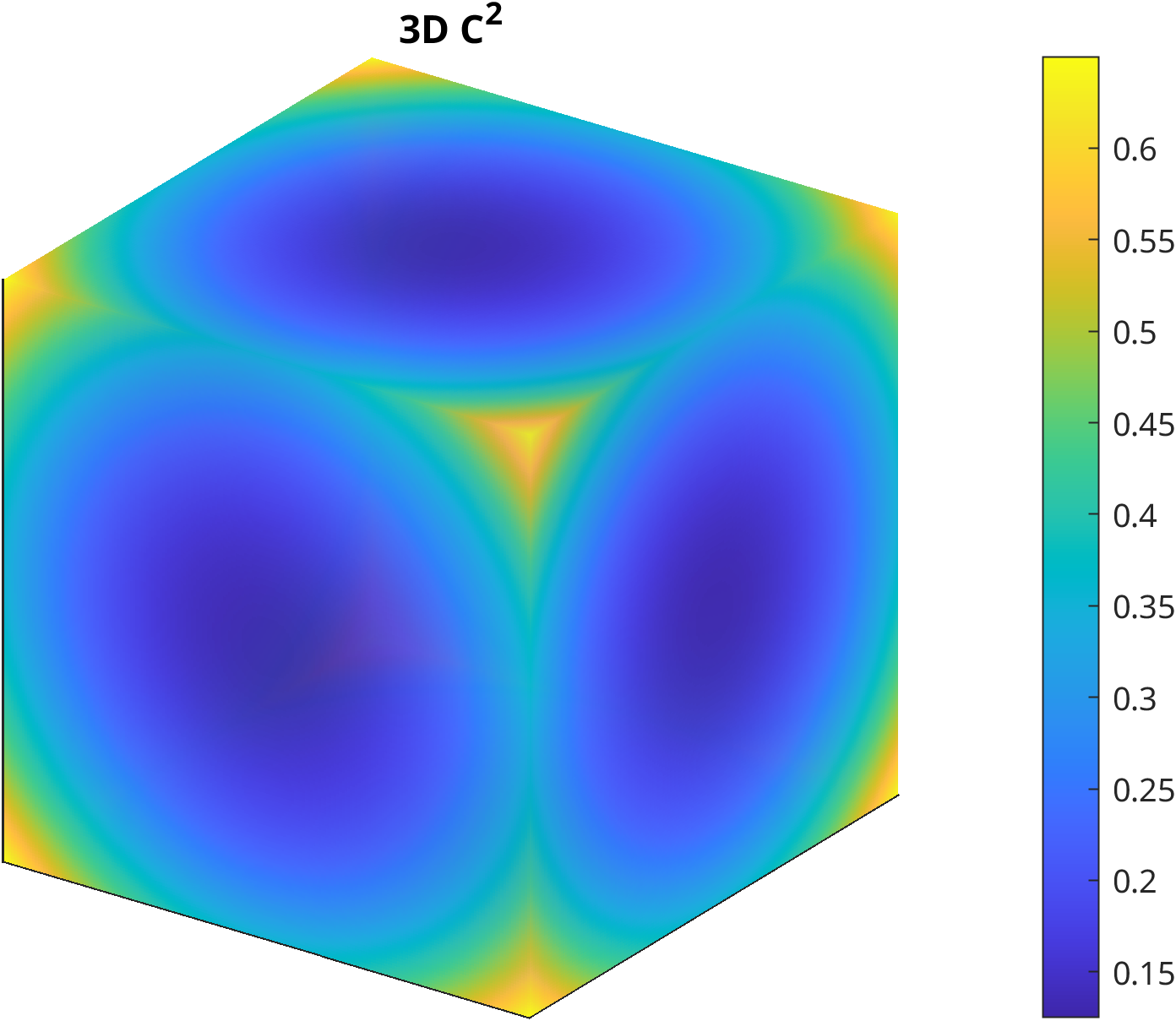}
        \label{fig:synthetic_function_c1_3d}
    \end{subfigure}

    \vspace{0.5em}

    \begin{subfigure}[t]{0.49\textwidth}
        \centering
        \includegraphics[width=\linewidth]{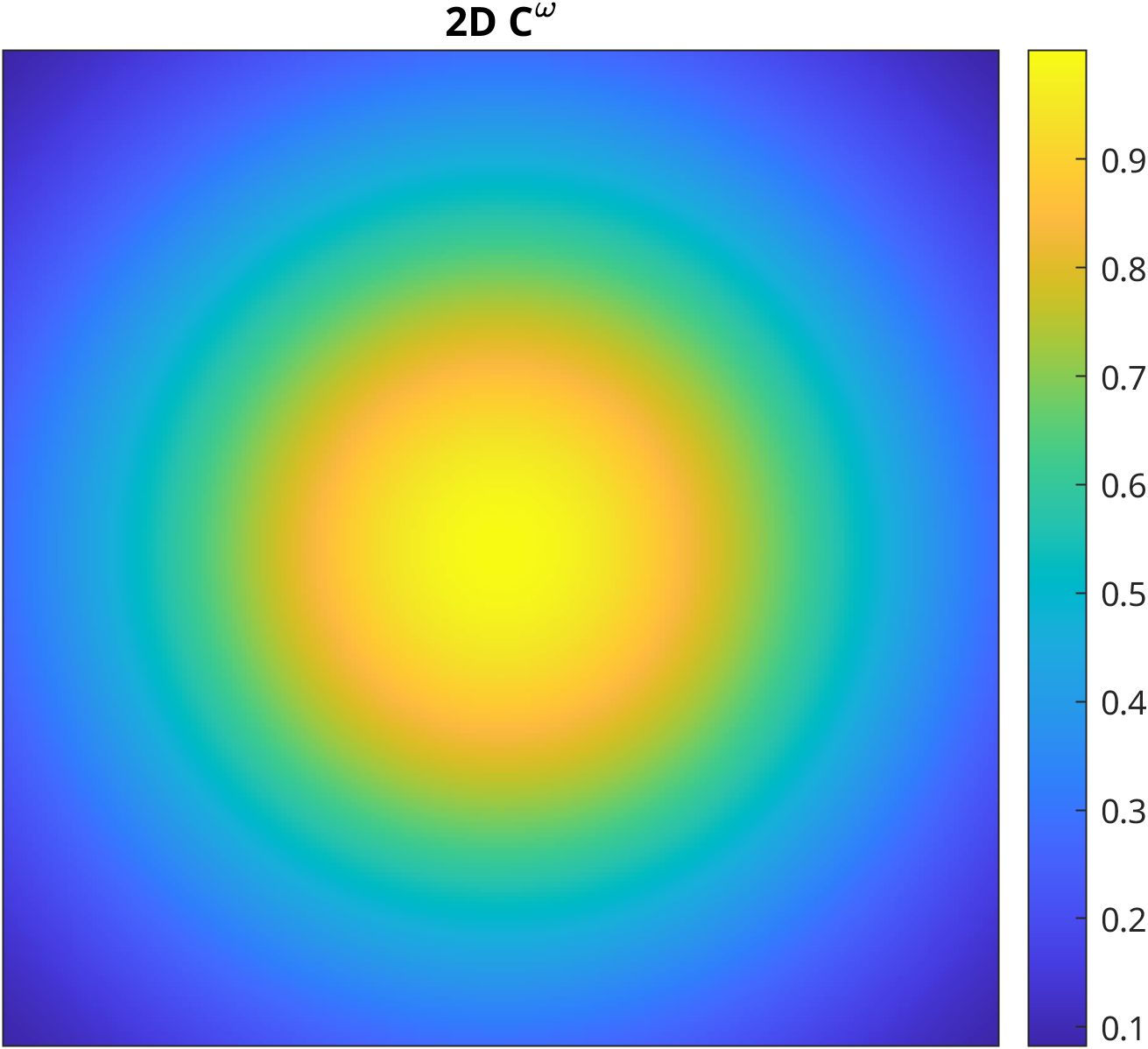}
        \label{fig:synthetic_function_comega_2d}
    \end{subfigure}
    \hfill
    \begin{subfigure}[t]{0.49\textwidth}
        \centering
        \includegraphics[width=\linewidth]{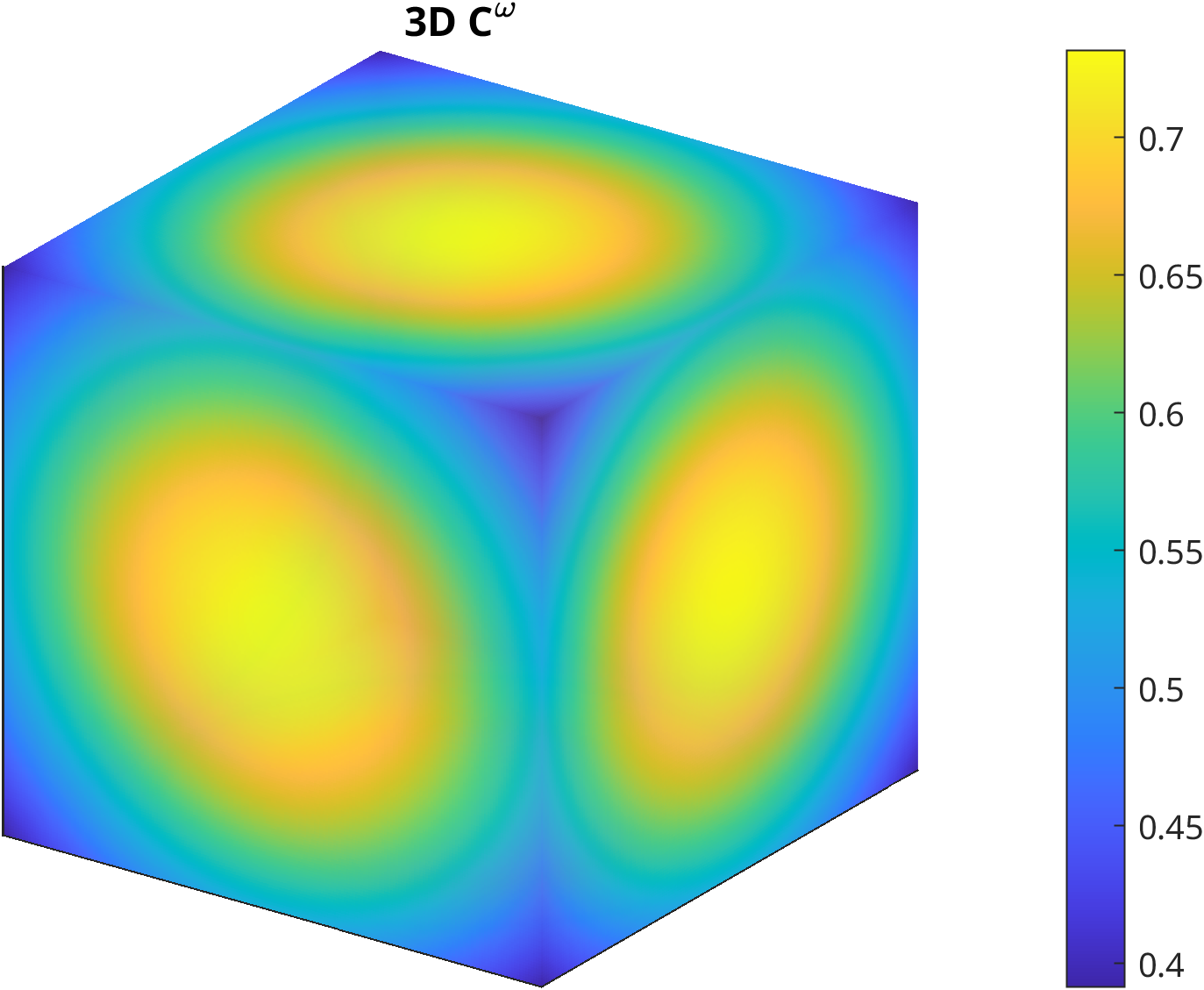}
        \label{fig:synthetic_function_comega_3d}
    \end{subfigure}

    \caption{Synthetic test functions used throughout the numerical experiments. The top row shows the $C^2$ function in two and three dimensions, while the bottom row shows the $C^{\omega}$ function in two and three dimensions.}
    \label{fig:synthetic_functions}
\end{figure}
Figure \ref{fig:synthetic_functions} shows the target functions used for testing numerical convergence rates.

\section{Selection of Kernel Support Size}
\label{supplement:density_sweep}
The kernel support size determines the locality of the multiscale frame and directly influences the sparsity of the design matrix. Rather than varying the support radius directly, we vary the design matrix density, which provides a domain-independent measure of the average kernel support. Figures~\ref{fig:frame_scarcity_combined_2d} and \ref{fig:frame_scarcity_combined_3d} illustrate the tradeoff between reconstruction accuracy at the data sites and at unseen evaluation sites.

At low densities, the kernel supports are highly localized, with few points to fit in a given kernel, producing near-exact reconstruction at the data sites but relatively poor generalization. Increasing the density enlarges the kernel supports, allowing neighboring kernels to overlap more substantially and reducing the evaluation-site error. %This improvement comes at the cost of a gradual increase in the decomposition-site error.

For the analytic function, the evaluation-site error decreases by more than five orders of magnitude as the density increases. The \(C^2\) function exhibits the same overall trend, although the tradeoff is more pronounced. %The evaluation-site error decreases by roughly five orders of magnitude.% while the data site error remains low at moderate densities before increasing rapidly once the density becomes large. This behavior is consistent with the reduced regularity of the \(C^2\) function, which benefits less from increasing kernel overlap.
As the density approaches one, the kernels become nearly global, effectively eliminating the locality and multiscale structure of the frame. Although this dense representation achieves excellent reconstruction accuracy, it sacrifices the sparsity and computational advantages of the multiscale formulation. These results motivate the use of a design matrix density of \(20\%\) throughout the experiments presented in the main text, as it provides a favorable balance between reconstruction accuracy, sparsity, and preservation of the multiscale structure.

\begin{figure}[!t]
    \centering
    \includegraphics[width=\linewidth]{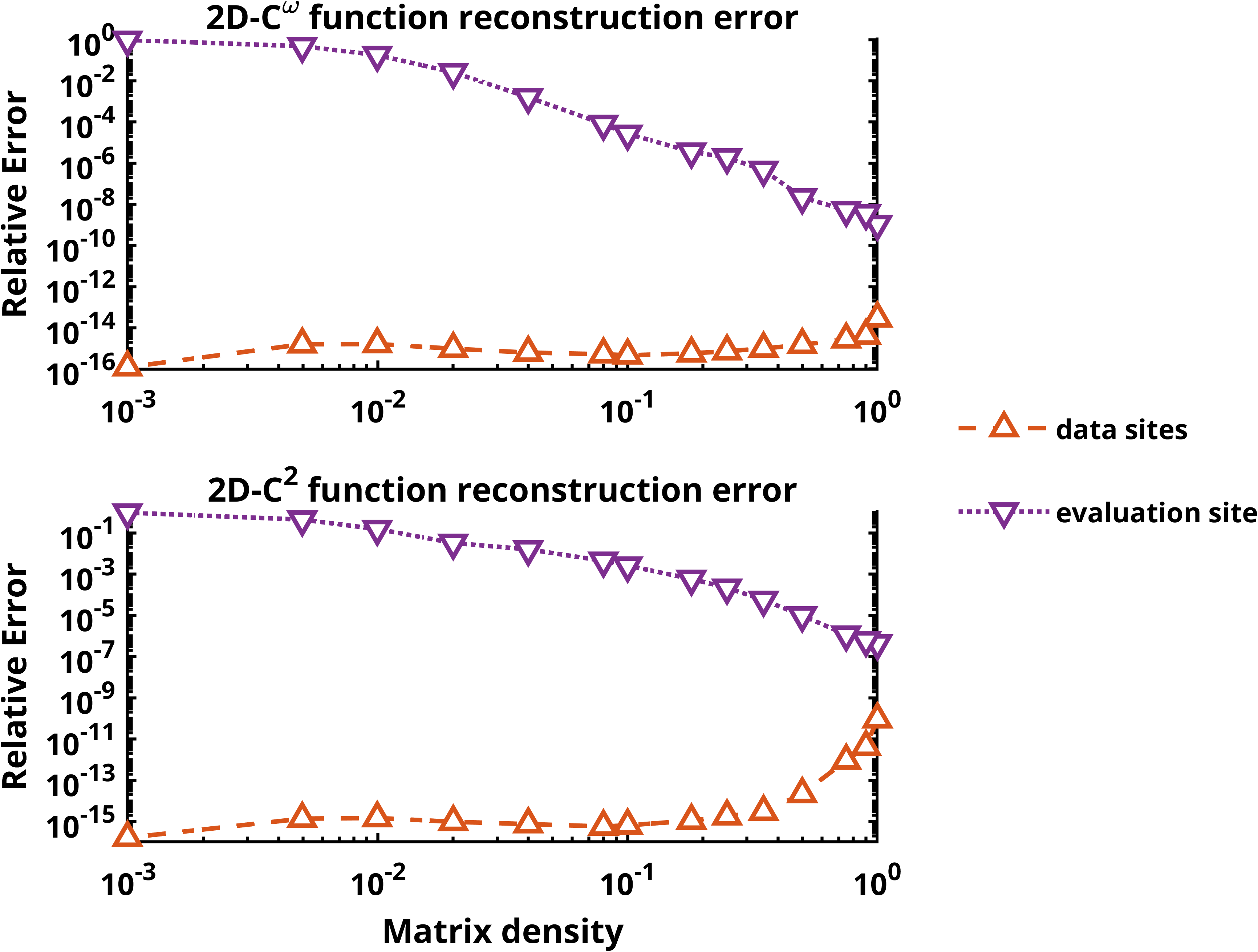}
    \caption{Reconstruction accuracy as a function of design matrix density for the test families in two dimensions. Results are shown for the analytic (\(C^\omega\)) and finitely smooth (\(C^2\)) functions.}
    \label{fig:frame_scarcity_combined_2d}
\end{figure}

\begin{figure}[!t]
    \centering
    \includegraphics[width=\linewidth]{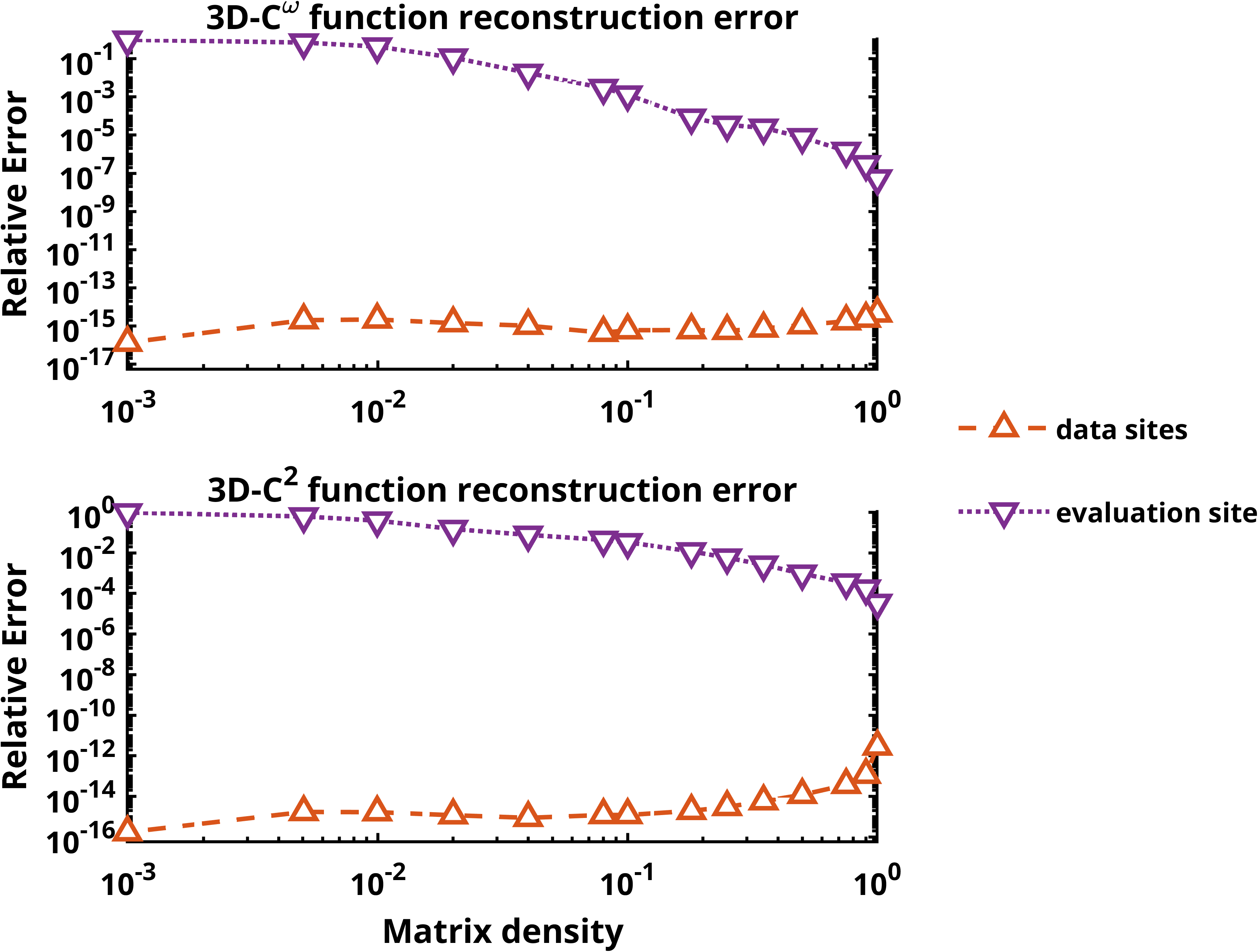}
    \caption{Reconstruction accuracy as a function of design matrix density for the test families in three dimensions. Results are shown for the analytic (\(C^\omega\)) and finitely smooth (\(C^2\)) functions.}
    \label{fig:frame_scarcity_combined_3d}
\end{figure}

\section{Interpolation Plots}
\label{supplement:Interpolation_Plots}
\begin{figure}[!htbp]
    \centering
    \includegraphics[width=0.8\textwidth]{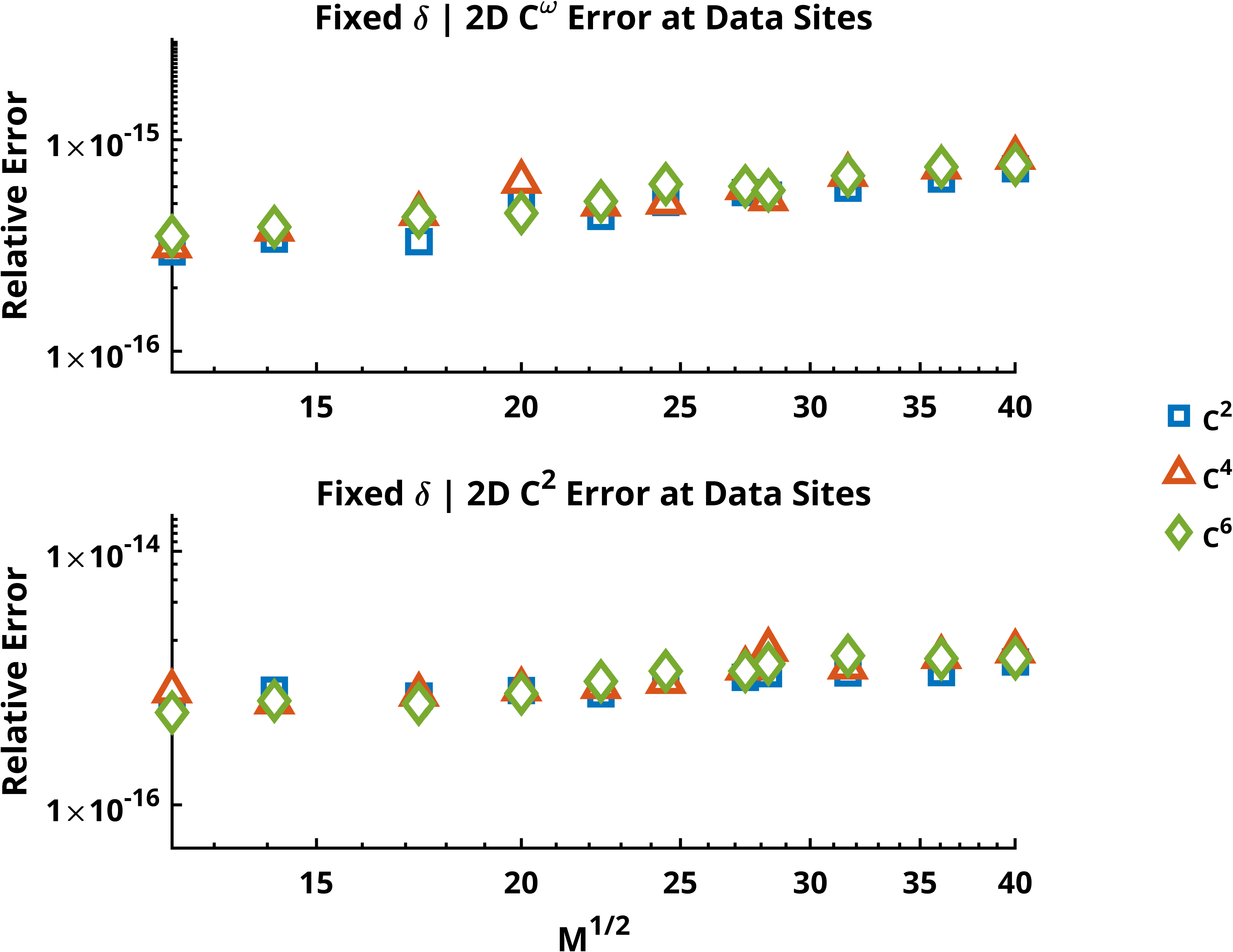}
    \caption{$\ell_2$ error at the decomposition (data) sites as a function of the number of decomposition points while holding the design matrix density at approximately $0.2$ (2D).}
    \label{fig:PC_N_Error_pin_density_on_node_2d}
\end{figure}

\begin{figure}[!htbp]
    \centering
    \includegraphics[width=0.8\textwidth]{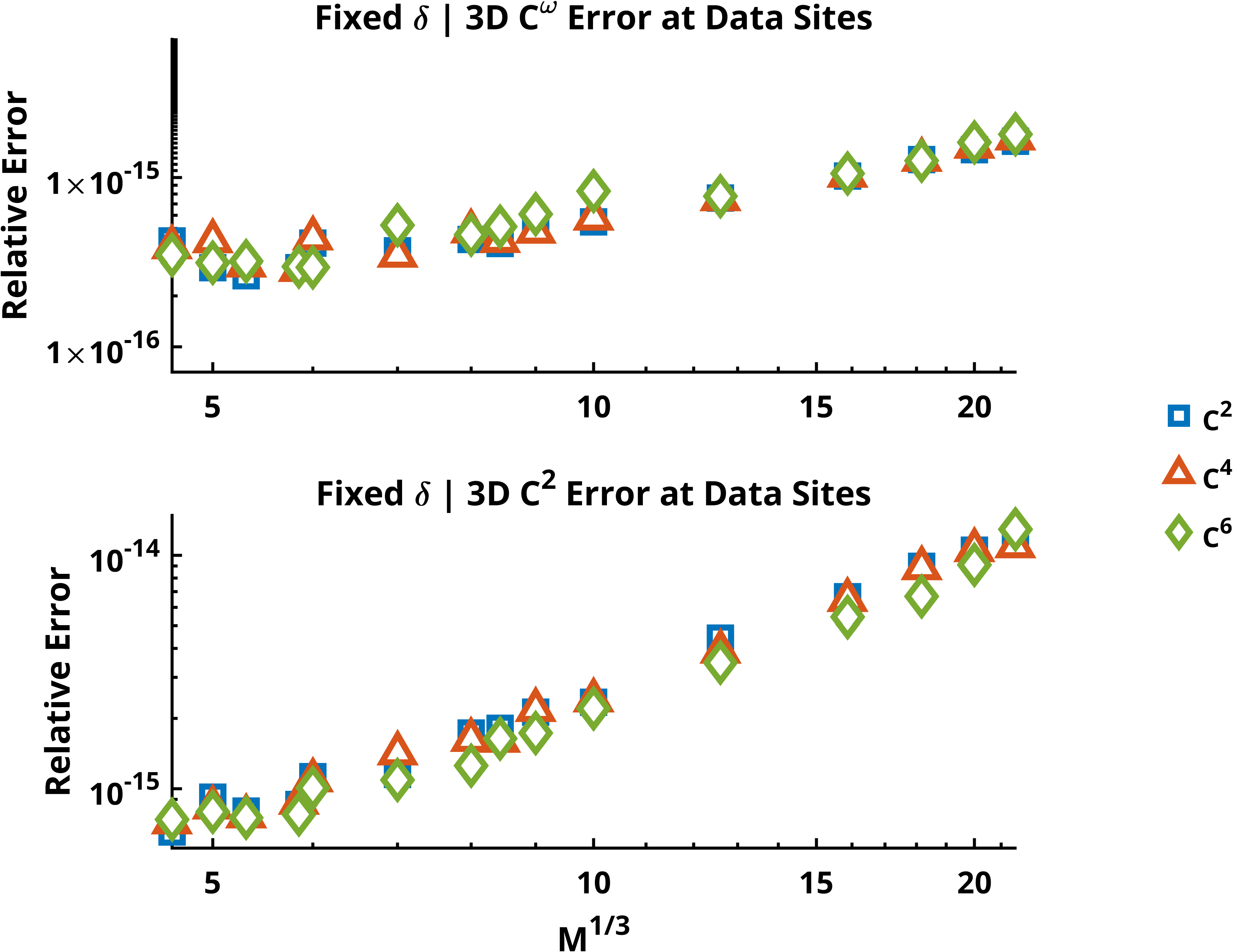}
    \caption{$\ell_2$ error at the decomposition (data) sites as a function of the number of decomposition points while holding the design matrix density at approximately $0.2$ (3D).}
    \label{fig:PC_N_Error_pin_density_on_node_3d}
\end{figure}
\begin{figure}[!htbp]
    \centering
    \includegraphics[width=0.8\textwidth]{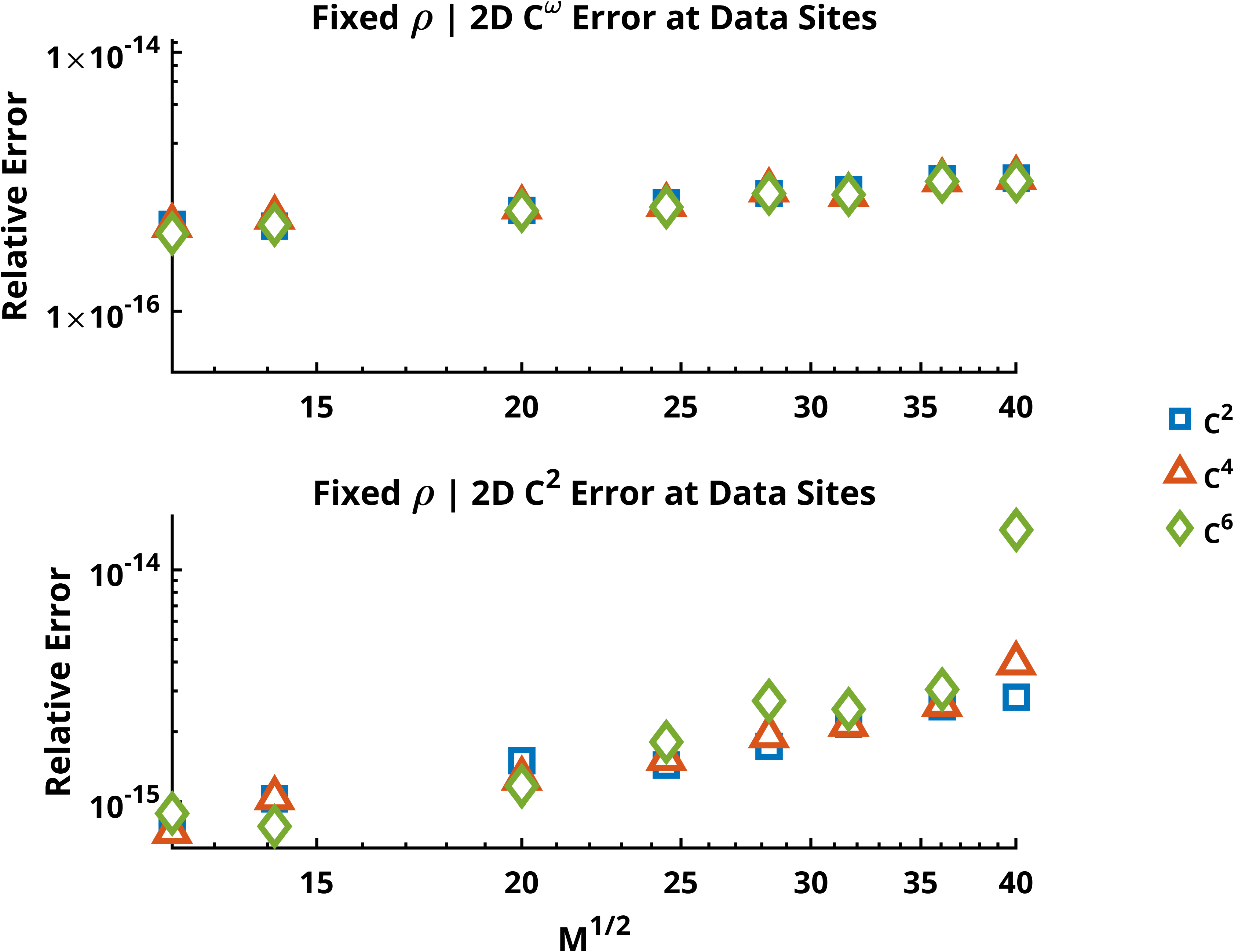}
    \caption{$\ell_2$ error as a function of the number of points in the domain. The kernel support size ($\rho$) is chosen on the finest scale to produce a design matrix density of approximately $0.2$, and this value is then held fixed as the number of points changes.}
    \label{fig:point_cloud_on_node_finest_scale_fixed_support_2d}
\end{figure}

\begin{figure}[!htbp]
    \centering
    \includegraphics[width=0.8\textwidth]{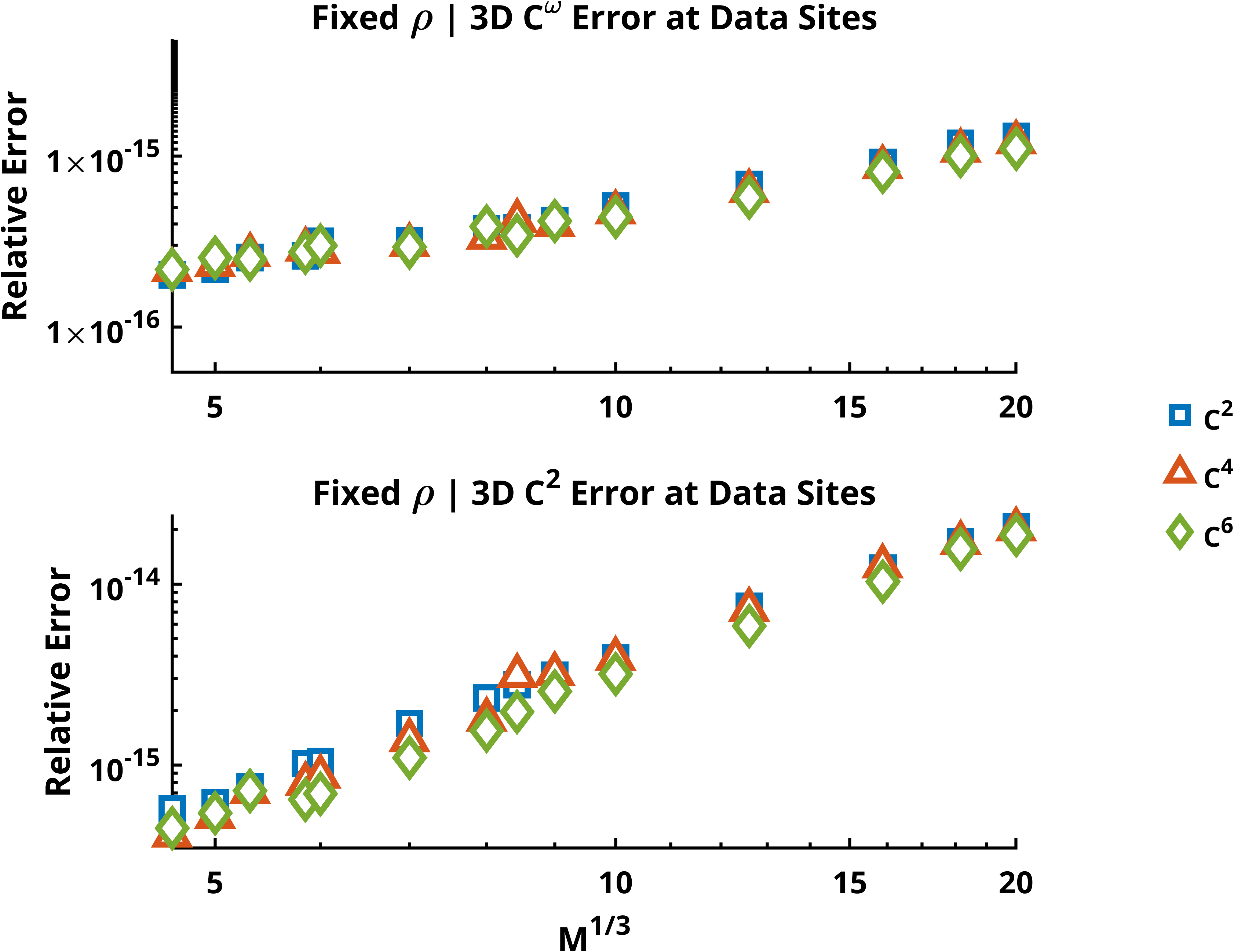}
    \caption{$\ell_2$ error as a function of the number of points in the domain. The kernel support size ($\rho$) is chosen on the finest scale to produce a design matrix density of approximately $0.2$, and this value is then held fixed as the number of points changes.}
    \label{fig:point_cloud_on_node_finest_scale_fixed_support_3d}
\end{figure}
These results show interpolation errors with $\delta$ fixed at 20\% for the constant density case, and $\rho$ chosen to achieve that density on the finest scale for the case of constant support radius. As shown in Figures ~\ref{fig:PC_N_Error_pin_density_on_node_2d} and ~\ref{fig:point_cloud_on_node_finest_scale_fixed_support_2d}, the reconstruction error at the 2D data sites increases slightly as $M$ grows, but stays close to machine precision, validating our theoretical result. Since $\delta$ and $\rho$ are held constant, increasing $M$ also increases the average number of neighboring points contained within each kernel. Figures ~\ref{fig:PC_N_Error_pin_density_on_node_3d} and~\ref{fig:point_cloud_on_node_finest_scale_fixed_support_3d} show the trend holding in 3D as well.%Consequently, each local approximation must simultaneously fit a larger number of data points, leading to a modest increase in interpolation error. We see the same trend, though slightly more pronounced,  in Fig.~\ref{fig:PC_N_Error_pin_density_on_node_3d}, and Fig.~\ref{fig:point_cloud_on_node_finest_scale_fixed_support_3d} for the 3D functions. However, these increases still leave our error well within the domain of machine precision.

\section{Operator Learning Multiscale Plots}
\label{supplement:multiscale_decomps}
We show multiscale decompositions (upon generalization) for some of the other test problems in our operator learning benchmark suite.
\begin{figure}[htbp]
    \centering

    \begin{subfigure}[b]{0.49\textwidth}
        \centering
        \includegraphics[width=\textwidth]{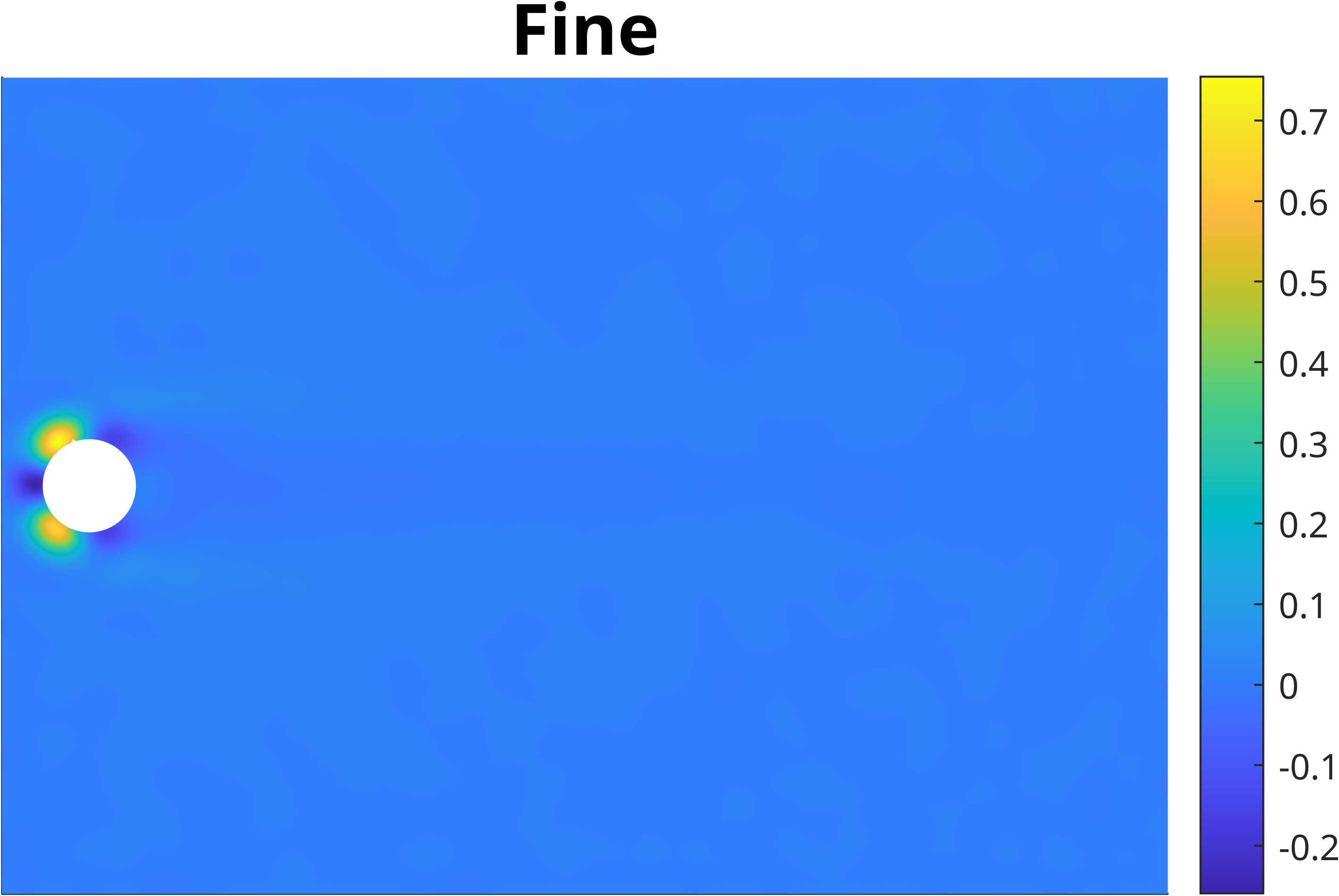}
        \caption{}
    \end{subfigure}
    \hfill
    \begin{subfigure}[b]{0.49\textwidth}
        \centering
        \includegraphics[width=\textwidth]{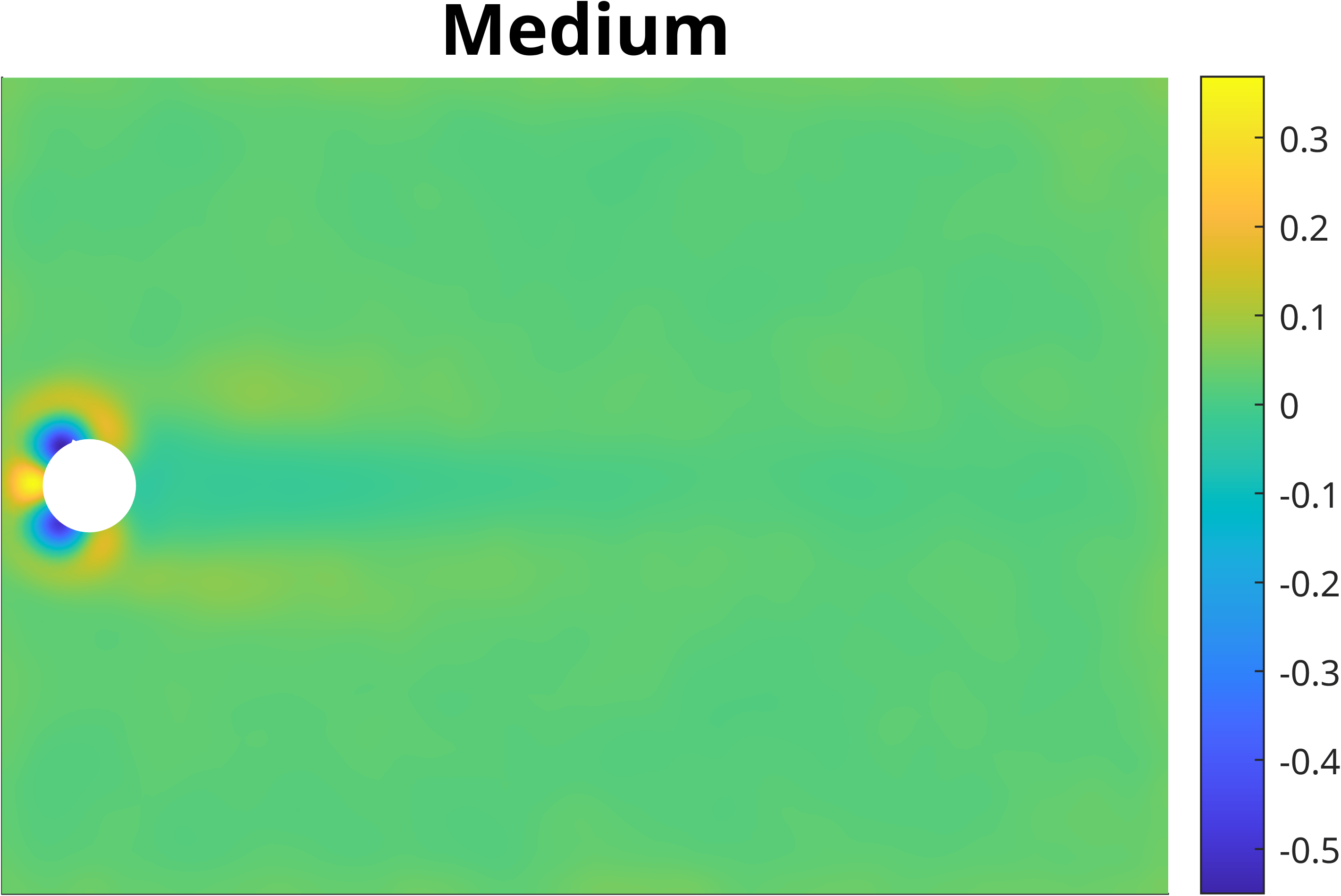}
        \caption{}
    \end{subfigure}
    \hfill
    \begin{subfigure}[b]{0.49\textwidth}
        \centering
        \includegraphics[width=\textwidth]{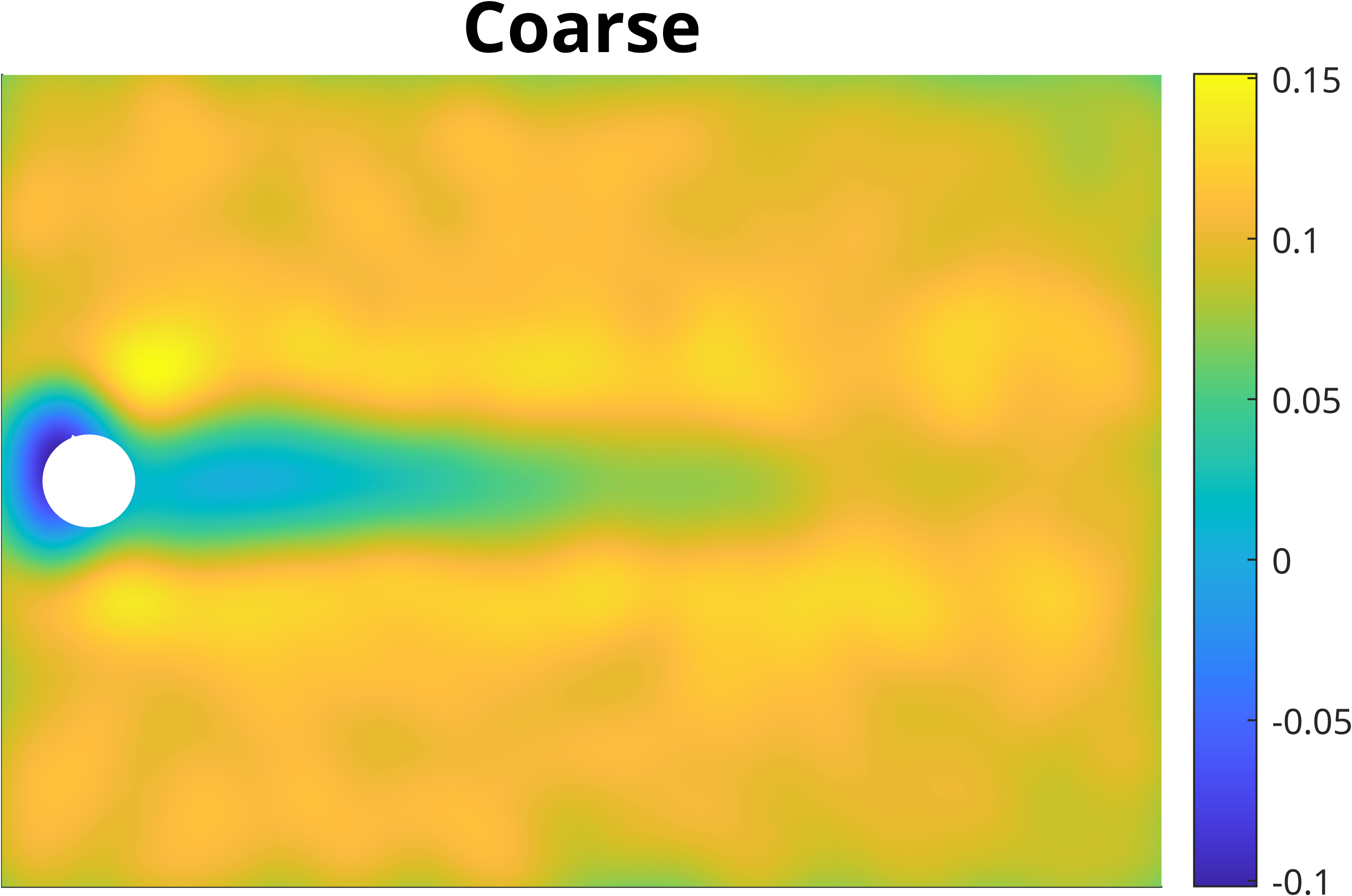}
        \caption{}
    \end{subfigure}
    \hfill
    \begin{subfigure}[b]{0.49\textwidth}
        \centering
        \includegraphics[width=\textwidth]{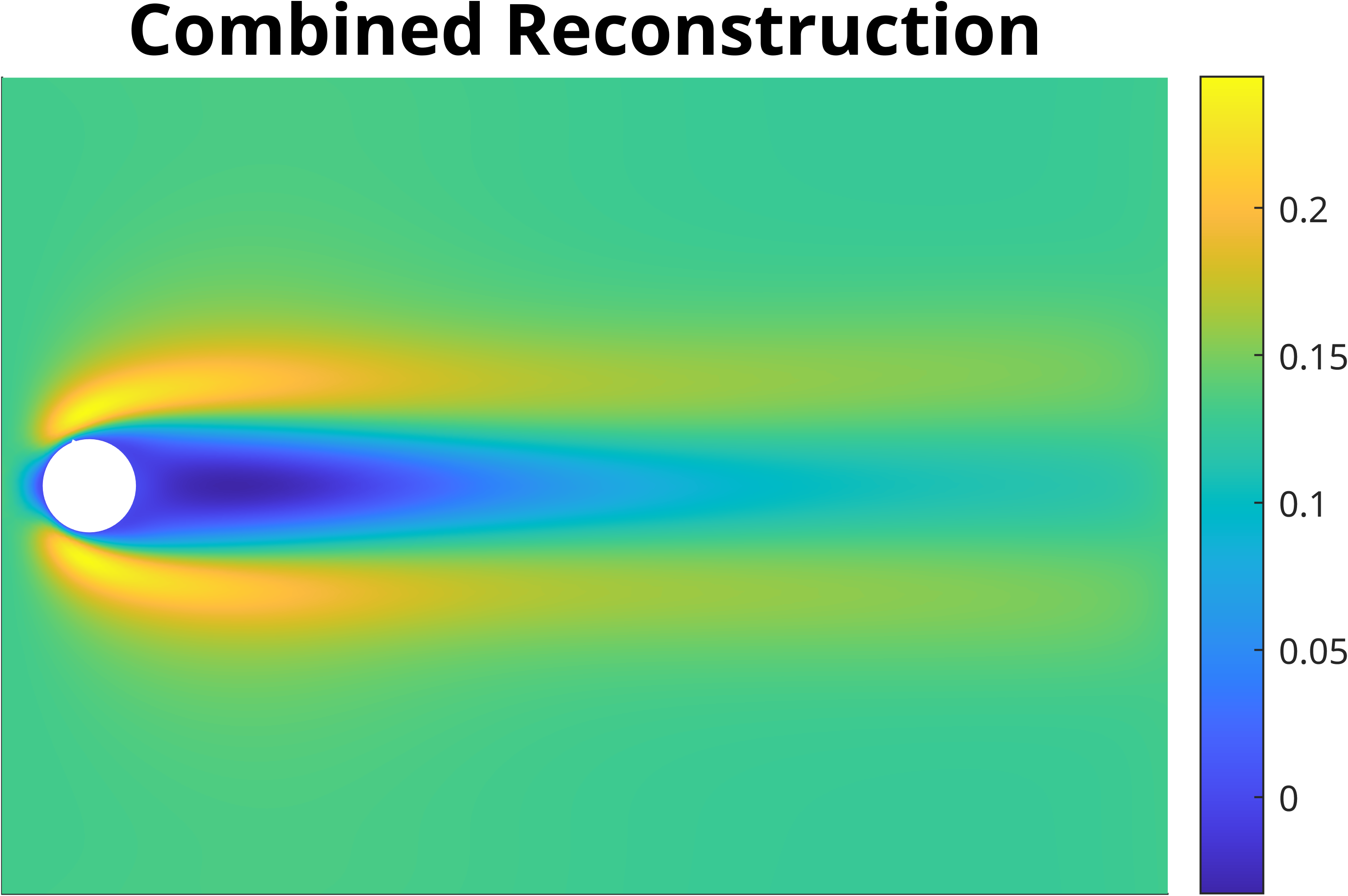}
        \caption{}
    \end{subfigure}

    \caption{Multiscale decomposition of the laminar cylinder flow problem at three different scales. The bottom right image shows the combined reconstruction.}
    \label{fig:cylinder_flow_laminar_scales}
\end{figure}
\paragraph{Cylinder Flow (Laminar)}
In Figure~\ref{fig:cylinder_flow_laminar_scales}, the coarse scale captures the dominant wake structure downstream of the cylinder; the medium scale highlights localized wake disturbances and flow deflection caused by the cylinder; and the fine scale isolates the smallest flow features concentrated near the cylinder surface.

\begin{figure}[htbp]
    \centering
    \begin{subfigure}[b]{0.49\textwidth}
        \centering
        \includegraphics[width=\textwidth]{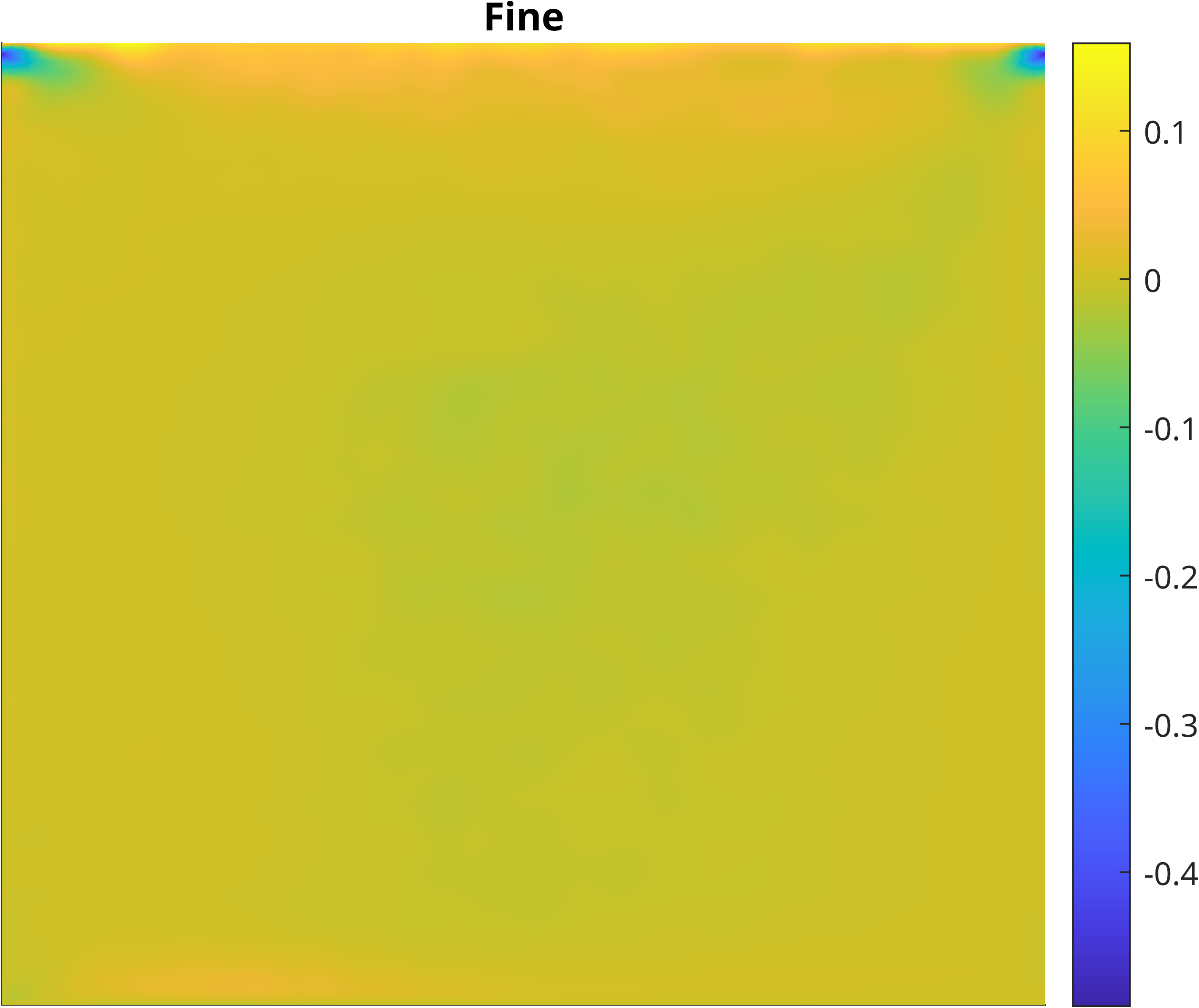}
        \caption{}
    \end{subfigure}
    \hfill
    \begin{subfigure}[b]{0.49\textwidth}
        \centering
        \includegraphics[width=\textwidth]{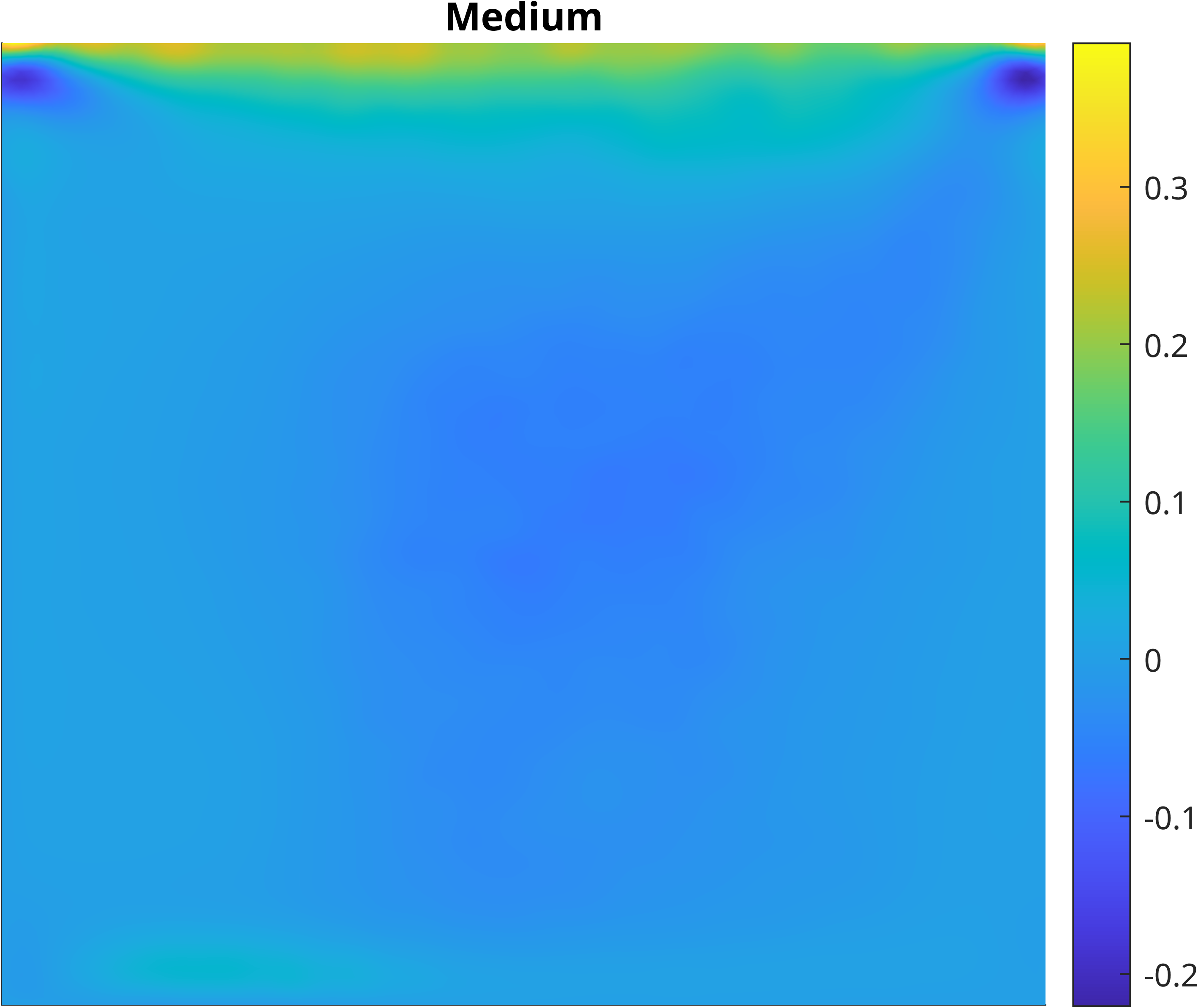}
        \caption{}
    \end{subfigure}
    \hfill
    \begin{subfigure}[b]{0.49\textwidth}
        \centering
        \includegraphics[width=\textwidth]{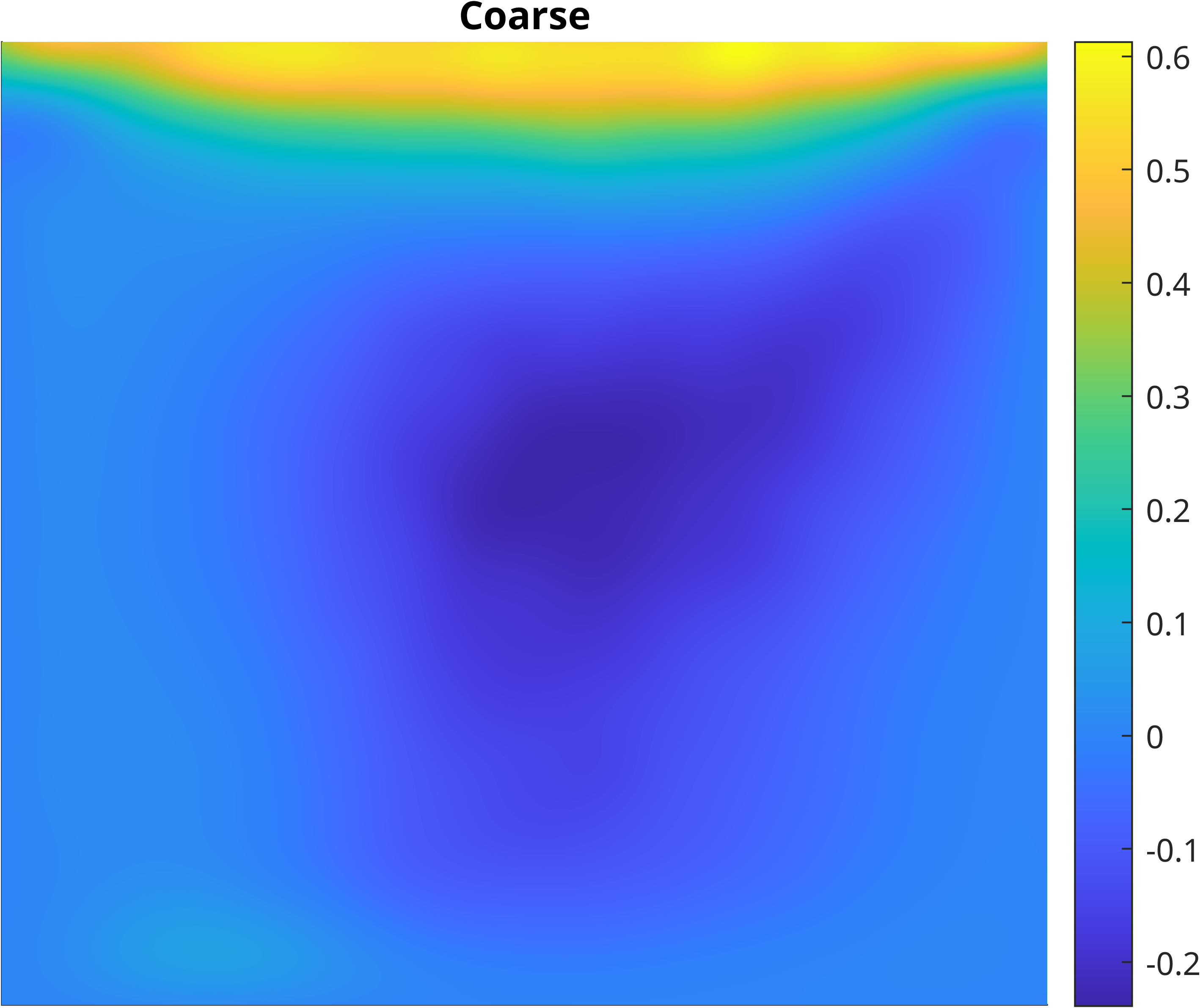}
        \caption{}
    \end{subfigure}
    \hfill
    \begin{subfigure}[b]{0.49\textwidth}
        \centering
        \includegraphics[width=\textwidth]{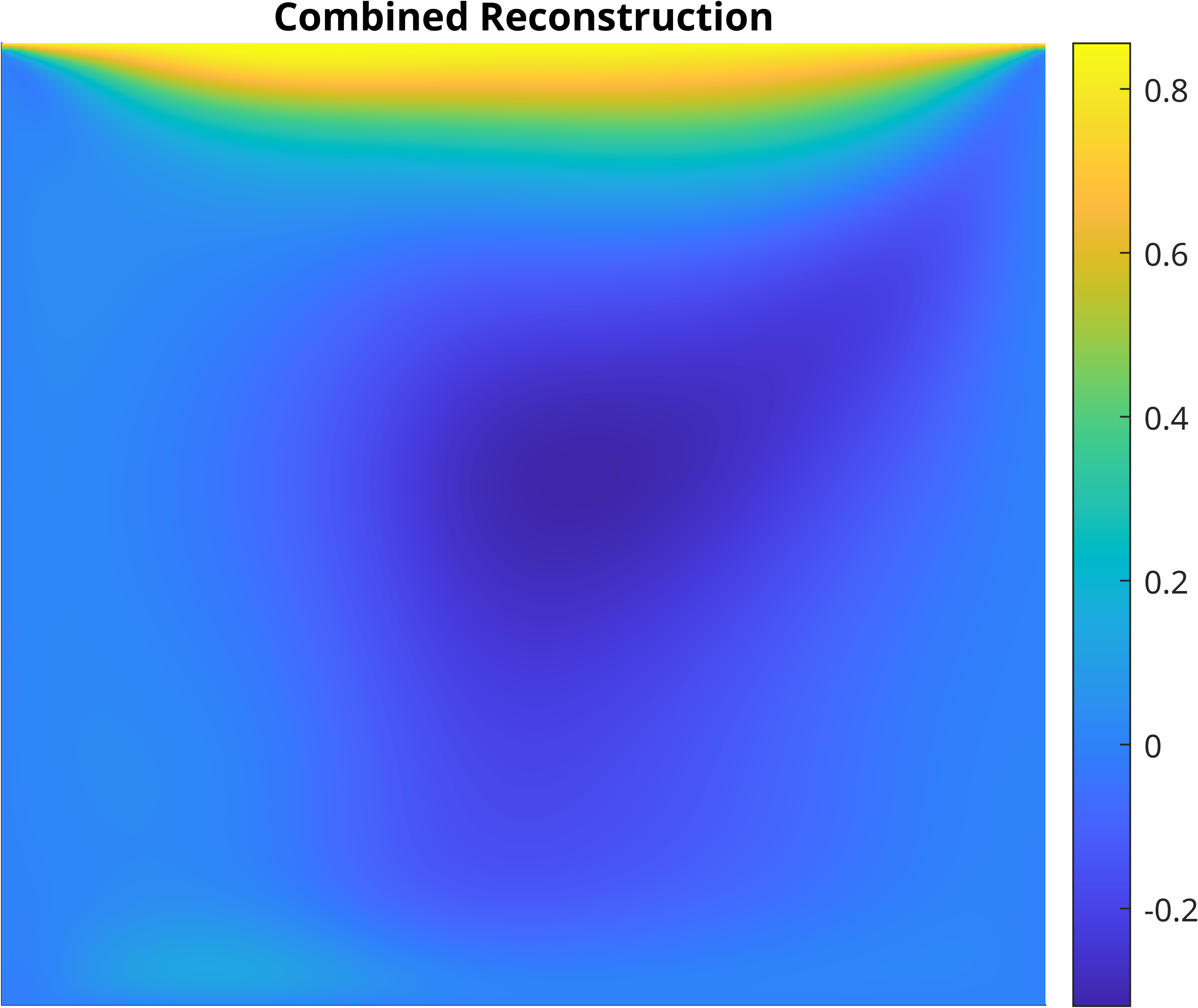}
        \caption{}
    \end{subfigure}

    \caption{Multiscale decomposition of the lid-driven cavity flow problem at three different scales. The bottom right image shows the combined reconstruction.}
    \label{fig:cavity_flow_ram_scales}
\end{figure}
\FloatBarrier
\paragraph{Cavity Flow}
In Figure~\ref{fig:cavity_flow_ram_scales}, the coarse scale captures the dominant behavior throughout most of the domain, while the medium and fine scales resolve increasingly localized features concentrated near the top boundary and the upper corners.

\begin{figure}[htbp]
    \centering

    \begin{subfigure}[b]{0.49\textwidth}
        \centering
        \includegraphics[width=\textwidth]{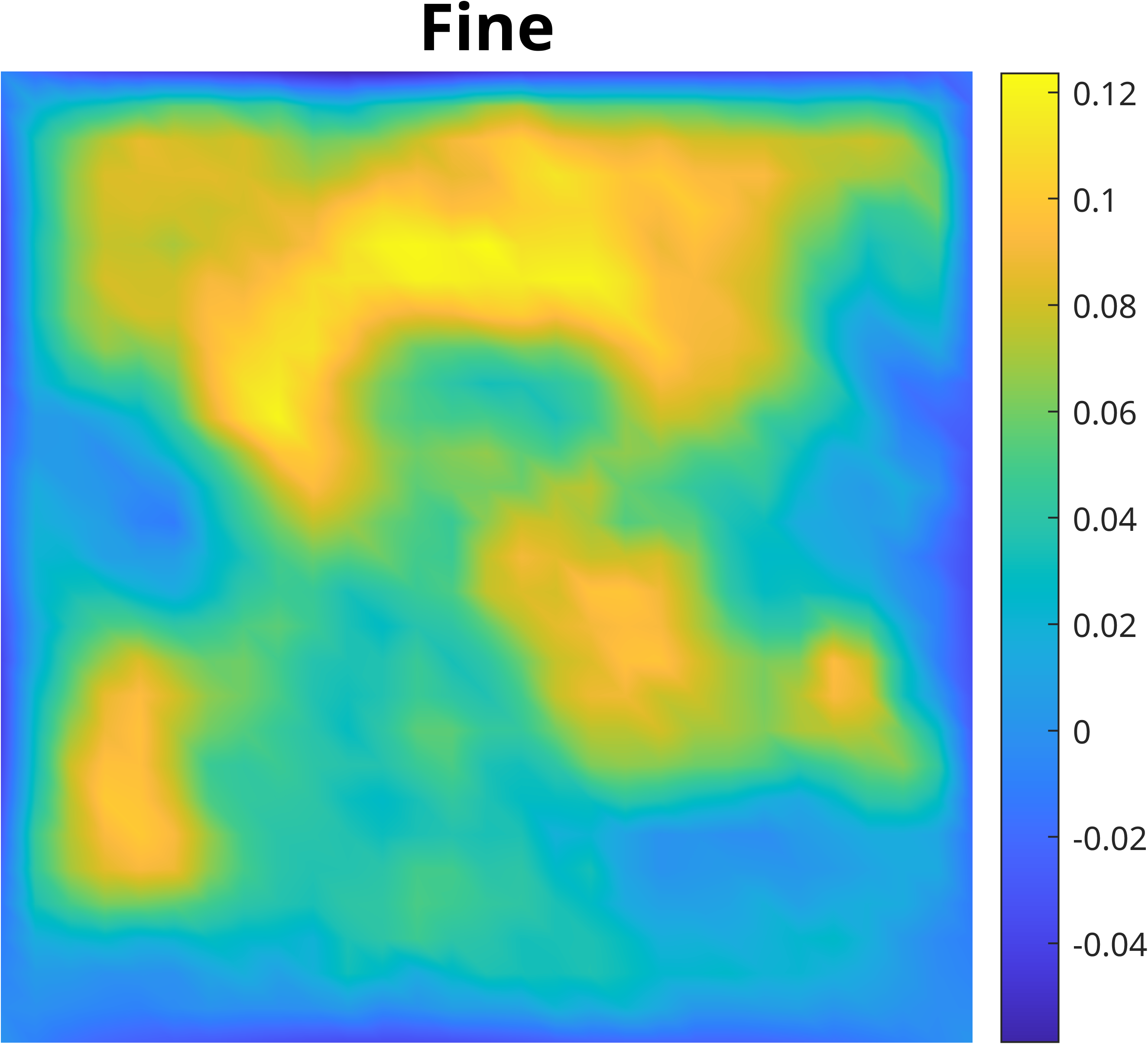}
        \caption{}
    \end{subfigure}
    \hfill
    \begin{subfigure}[b]{0.49\textwidth}
        \centering
        \includegraphics[width=\textwidth]{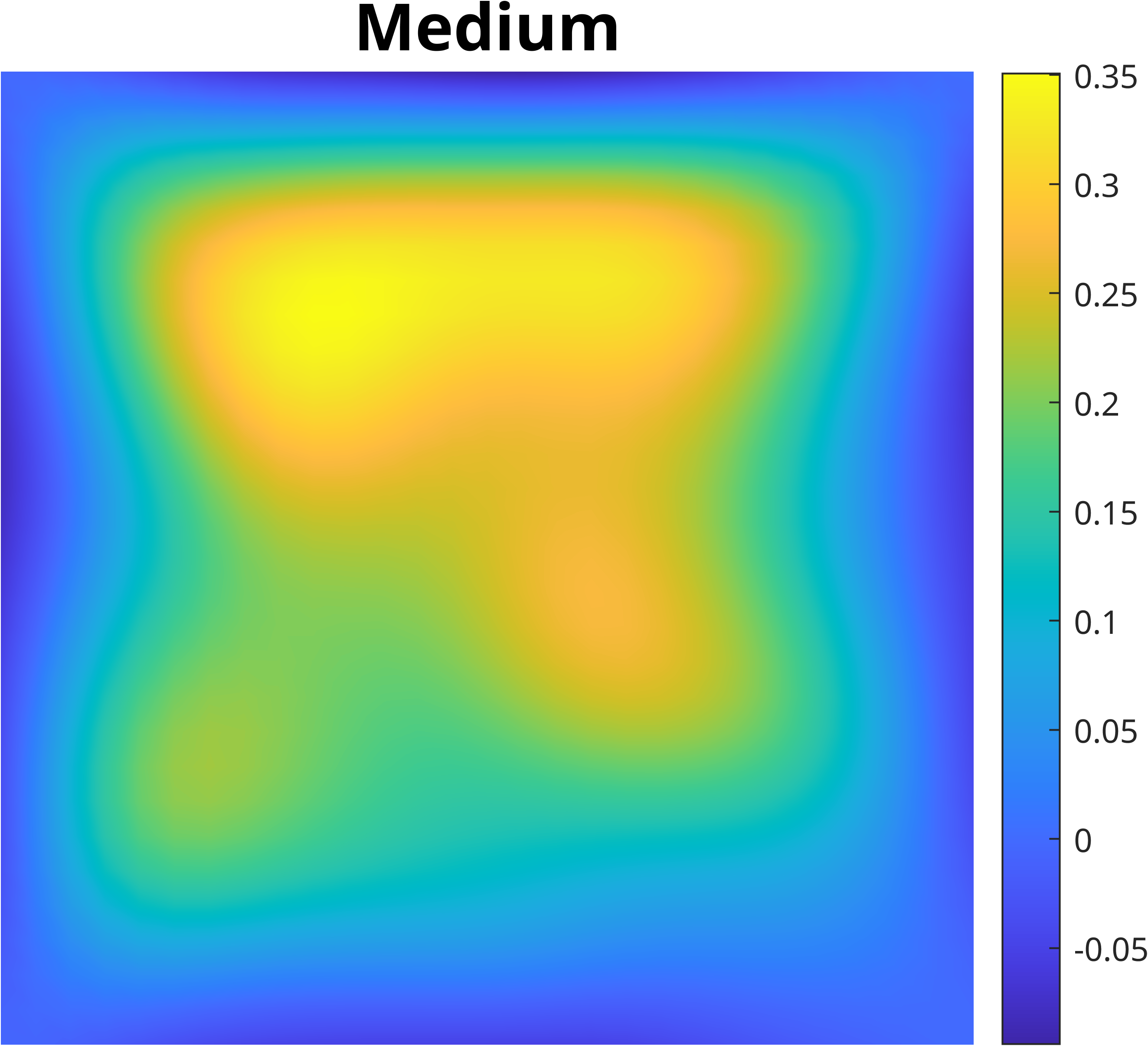}
        \caption{}
    \end{subfigure}
    \hfill
    \begin{subfigure}[b]{0.49\textwidth}
        \centering
        \includegraphics[width=\textwidth]{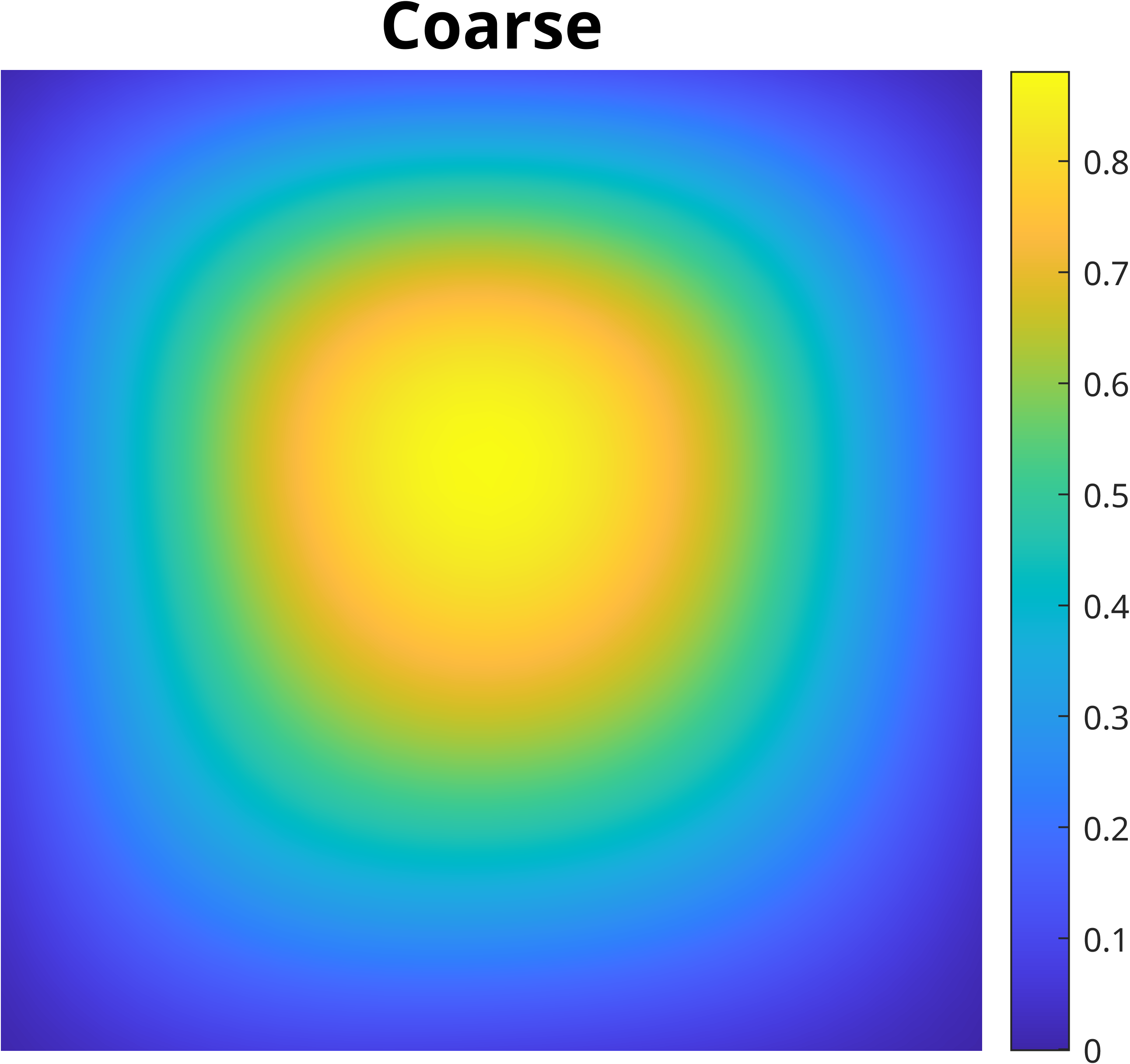}
        \caption{}
    \end{subfigure}
    \hfill
    \begin{subfigure}[b]{0.49\textwidth}
        \centering
        \includegraphics[width=\textwidth]{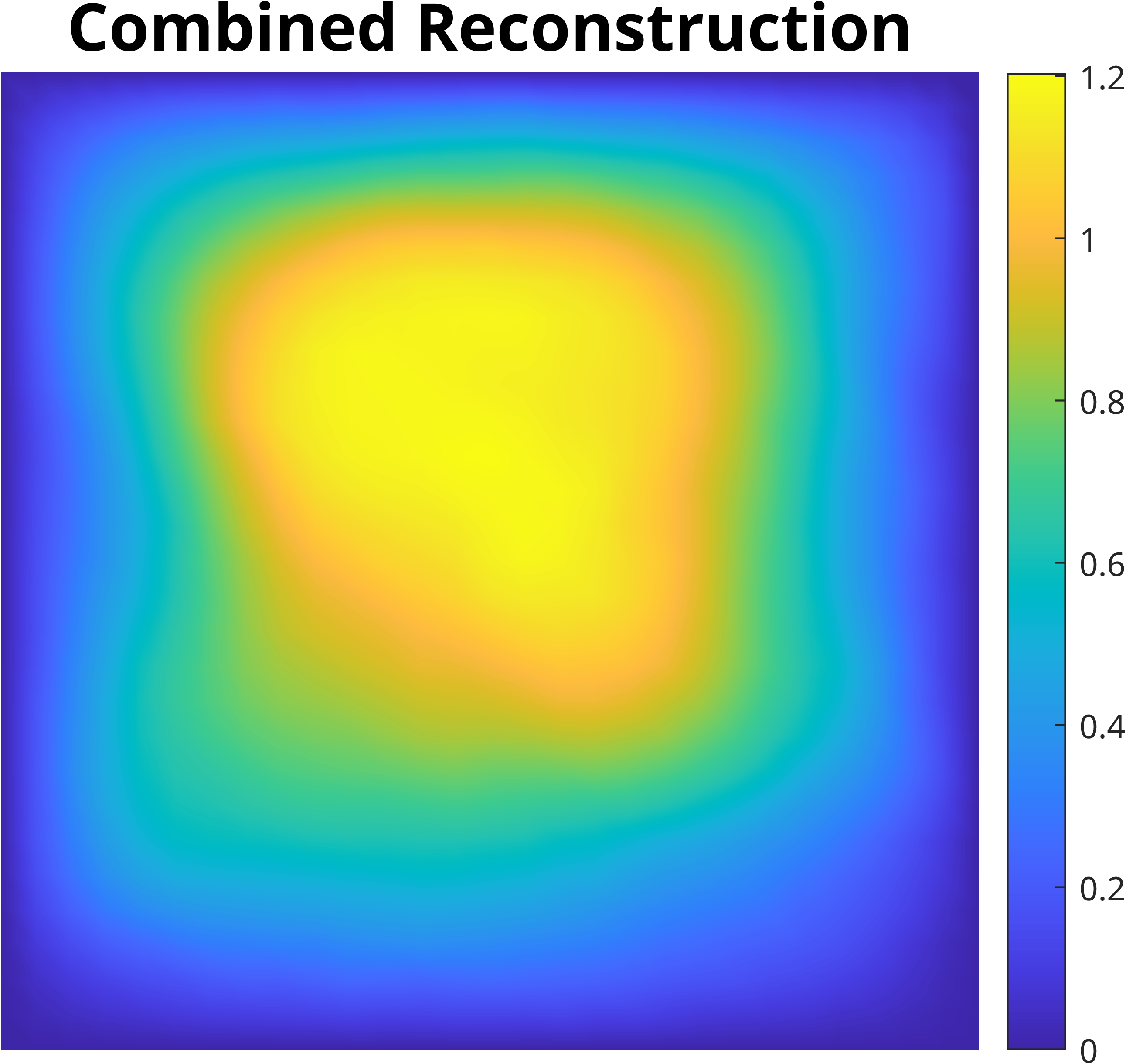}
        \caption{}
    \end{subfigure}

    \caption{Multiscale decomposition of the piecewise-constant Darcy flow problem at three different scales. The bottom right image shows the combined reconstruction.}
    \label{fig:darcy_pwc_scales}
\end{figure}
\FloatBarrier
\paragraph{Darcy PWC}
In Figure~\ref{fig:darcy_pwc_scales}, the coarse scale captures the overall smooth global trend trend of the solution, while the intermediate and fine scales progressively refine its shape by resolving the localized features needed to accurately reproduce the full solution.

\begin{figure}[htbp]
    \centering

    \begin{subfigure}[b]{0.49\textwidth}
        \centering
        \includegraphics[width=\textwidth]{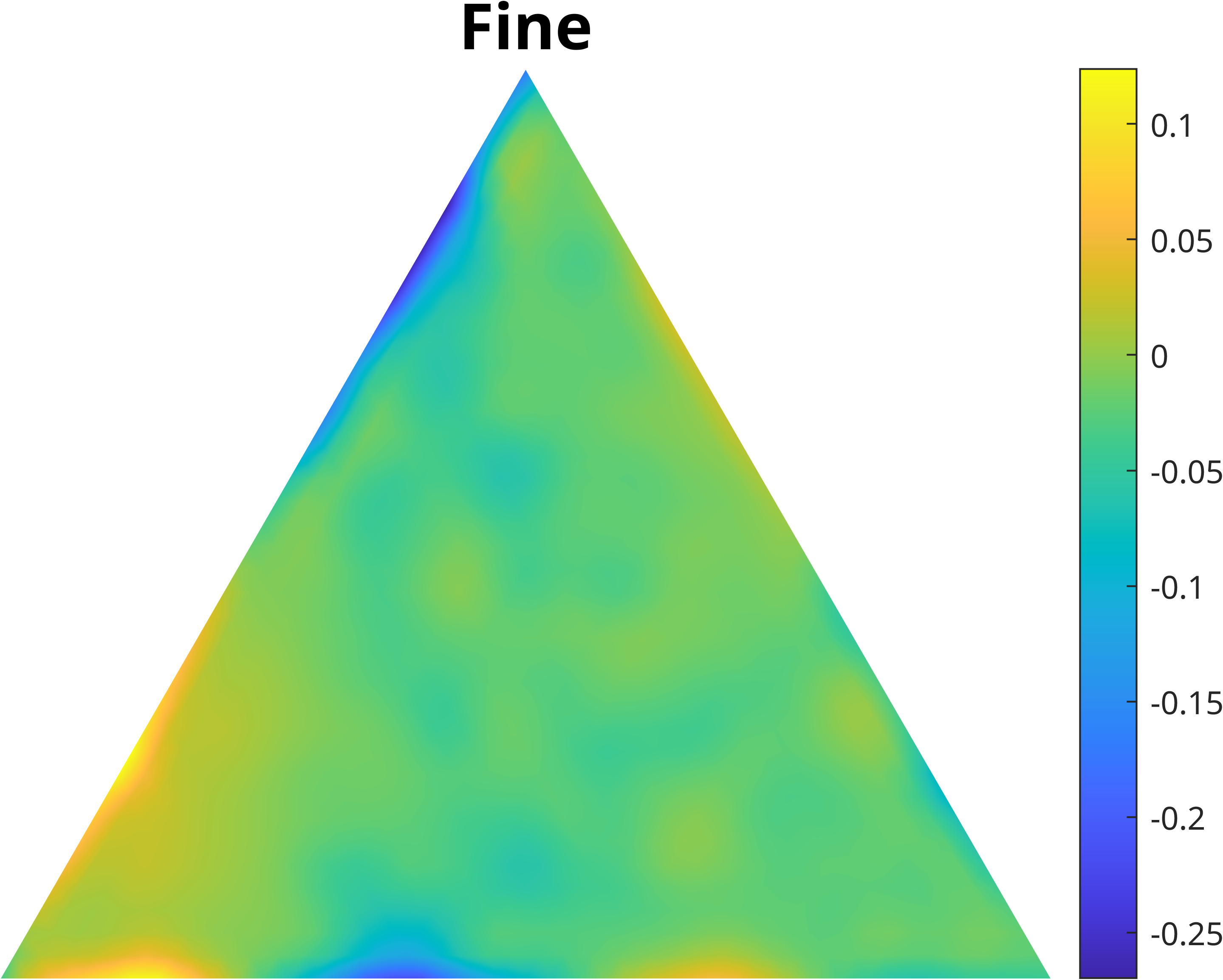}
        \caption{}
    \end{subfigure}
    \hfill
    \begin{subfigure}[b]{0.49\textwidth}
        \centering
        \includegraphics[width=\textwidth]{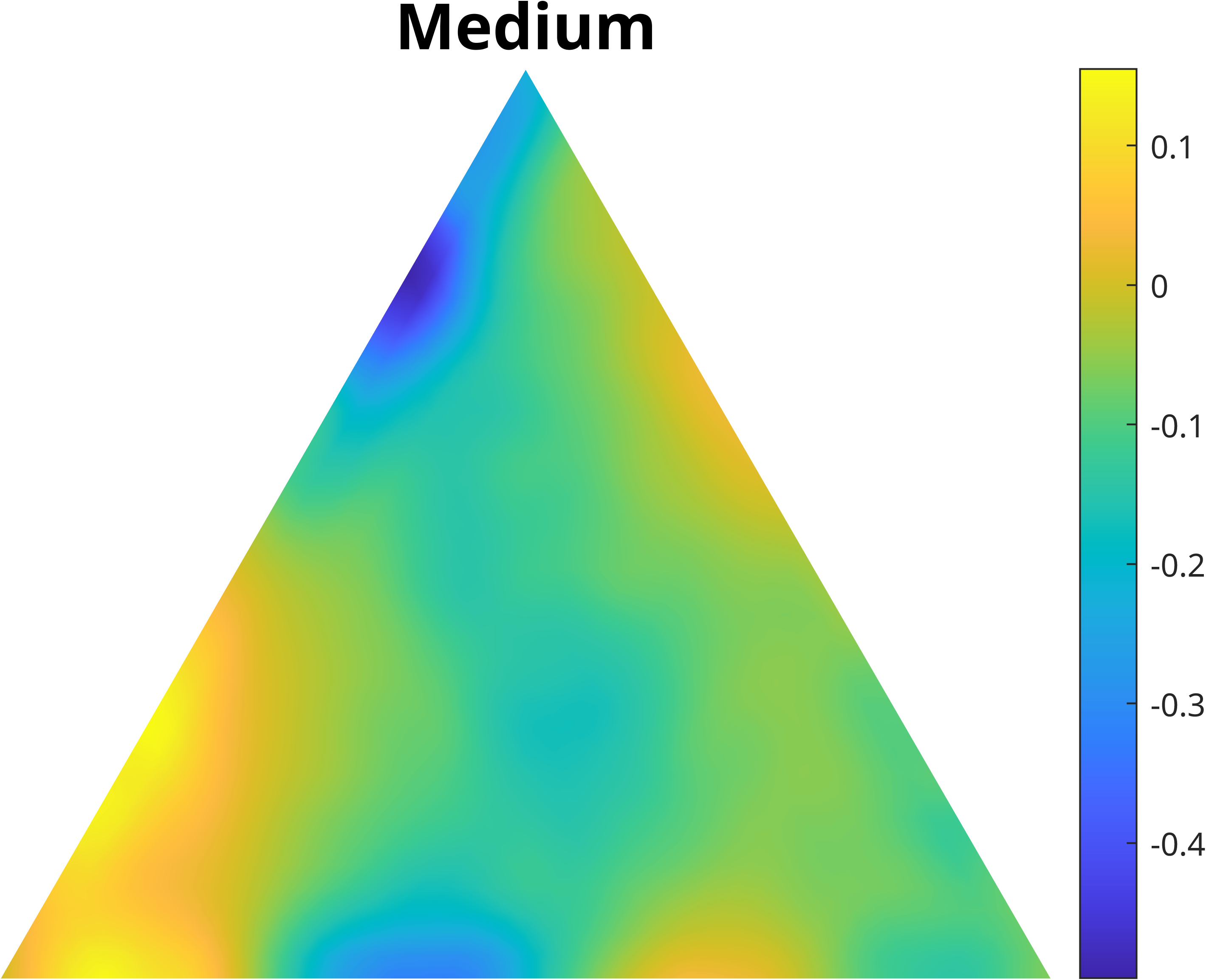}
        \caption{}
    \end{subfigure}
    \hfill
    \begin{subfigure}[b]{0.49\textwidth}
        \centering
        \includegraphics[width=\textwidth]{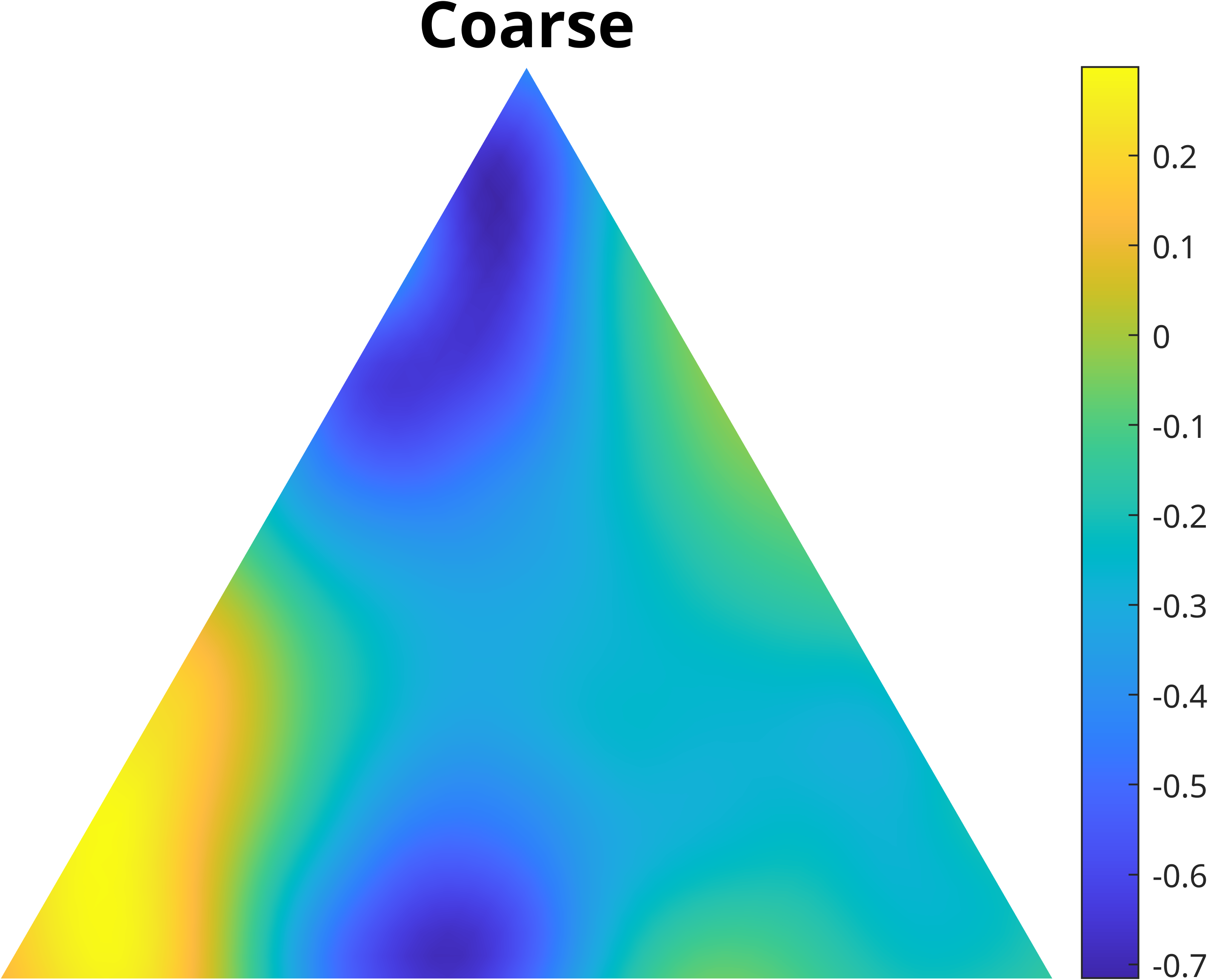}
        \caption{}
    \end{subfigure}
    \hfill
    \begin{subfigure}[b]{0.49\textwidth}
        \centering
        \includegraphics[width=\textwidth]{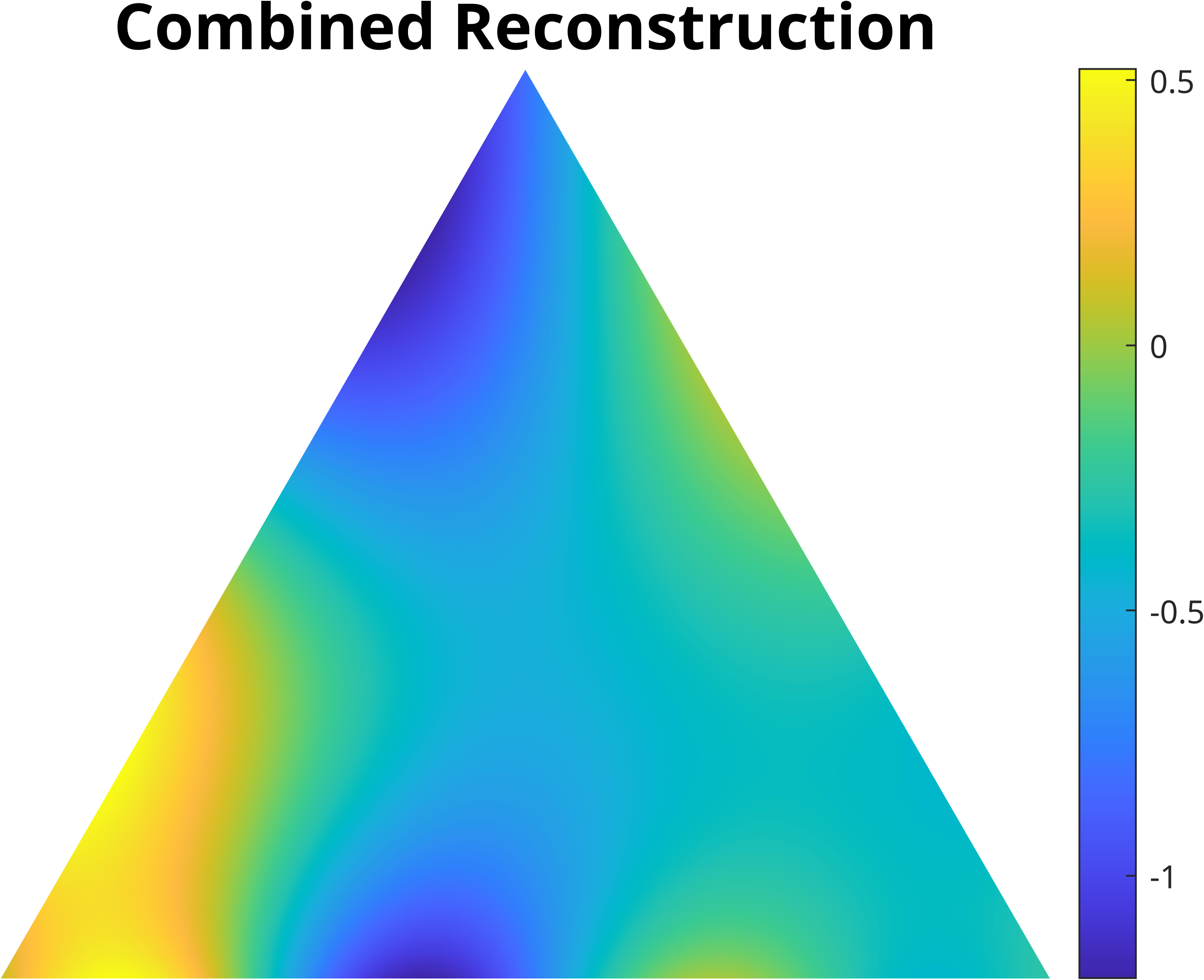}
        \caption{}
    \end{subfigure}
    \caption{Multiscale decomposition of the triangular-domain Darcy flow problem at three different scales. The bottom right image shows the combined reconstruction.}
    \label{fig:darcy_triangle_scales}
\end{figure}
\FloatBarrier
\paragraph{Darcy Triangle}
In Figure~\ref{fig:darcy_triangle_scales}, the coarse scale captures the dominant behavior throughout the interior of the domain, while the medium scale resolves localized boundary effects and emerging interior features. The fine scale further refines these localized structures, capturing the smallest variations near the boundary and throughout the interior.

\begin{figure}[htbp]
    \centering

    \begin{subfigure}[t]{0.49\textwidth}
        \centering
        \includegraphics[width=\linewidth]{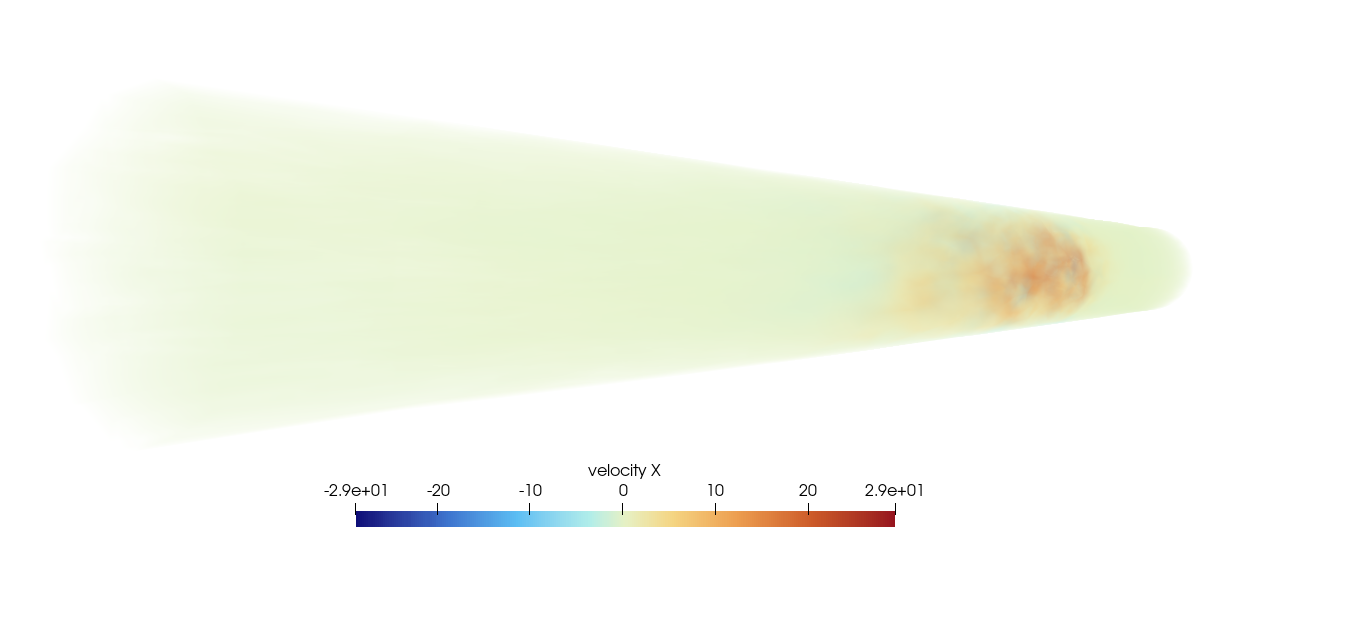}
        \caption{Fine}
    \end{subfigure}
    \hfill
    \begin{subfigure}[t]{0.49\textwidth}
        \centering
        \includegraphics[width=\linewidth]{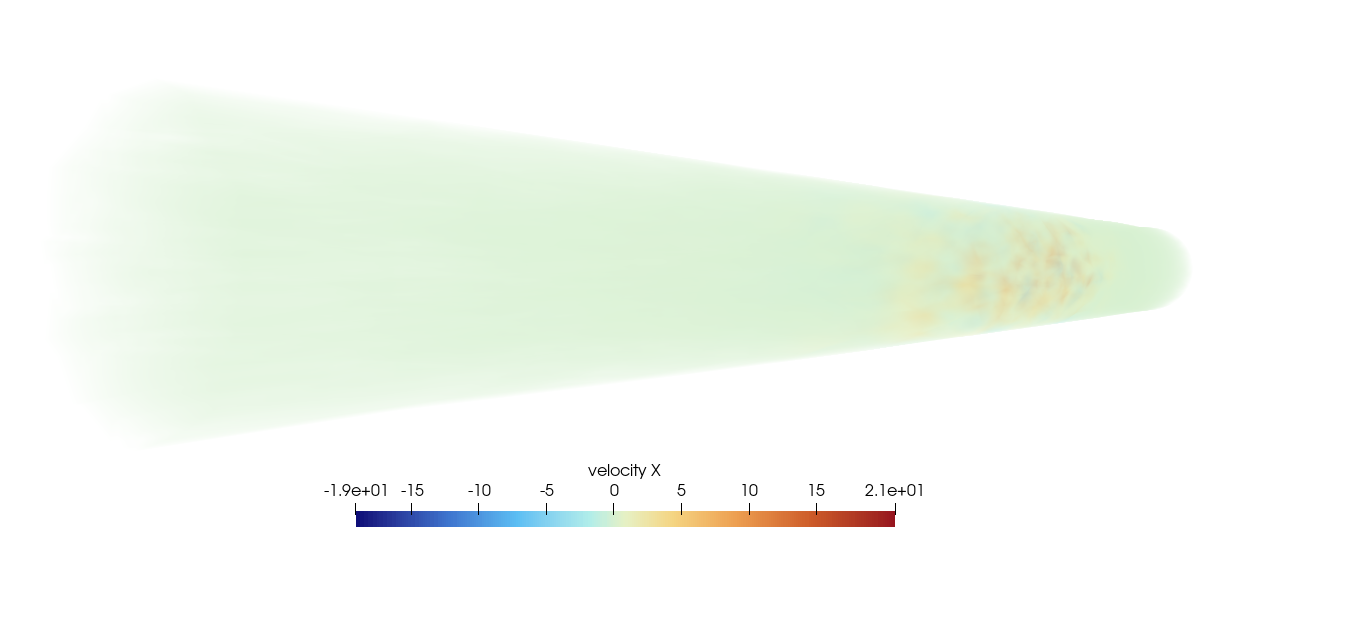}
        \caption{Medium}
    \end{subfigure}

    \begin{subfigure}[t]{0.49\textwidth}
        \centering
        \includegraphics[width=\linewidth]{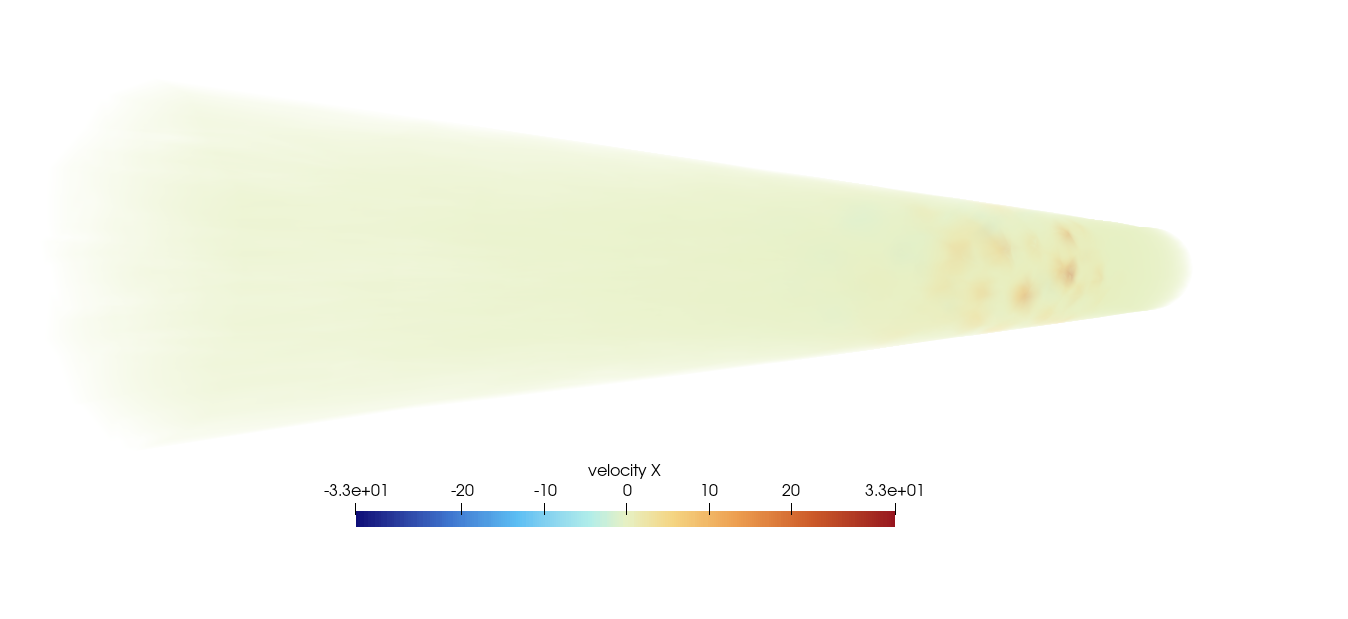}
        \caption{Coarse}
    \end{subfigure}
    \hfill
    \begin{subfigure}[t]{0.49\textwidth}
        \centering
        \includegraphics[width=\linewidth]{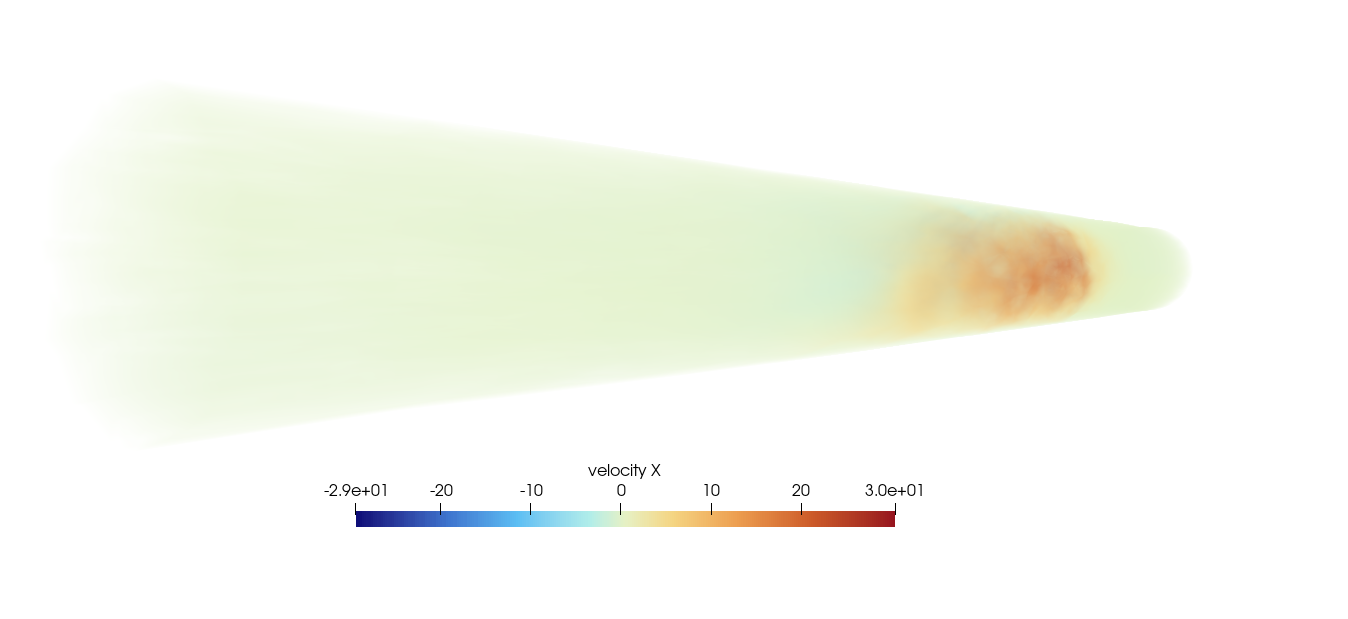}
        \caption{Combined}
    \end{subfigure}

    \caption{Multiscale decomposition of the species transport velocities in the $x$ direction at three different scales. The bottom right image shows the combined reconstruction.}
    \label{fig:multiscale_x}
\end{figure}
\FloatBarrier
\paragraph{Species Transport Along $x$} 
In Figure~\ref{fig:multiscale_x}, the coarse scale captures only weak large-scale variations, while most of the solution structure is resolved at the finest scale. This behavior reflects the relatively small extent of the flow in the $x$-direction, for which the coarse scale support is comparatively large. Despite this, the finer scales clearly capture the localized mixing induced by the blades as the gases are transported through the pipe.
\begin{figure}[htbp]
\centering

    \begin{subfigure}[t]{0.49\textwidth}
        \centering
        \includegraphics[width=\linewidth]{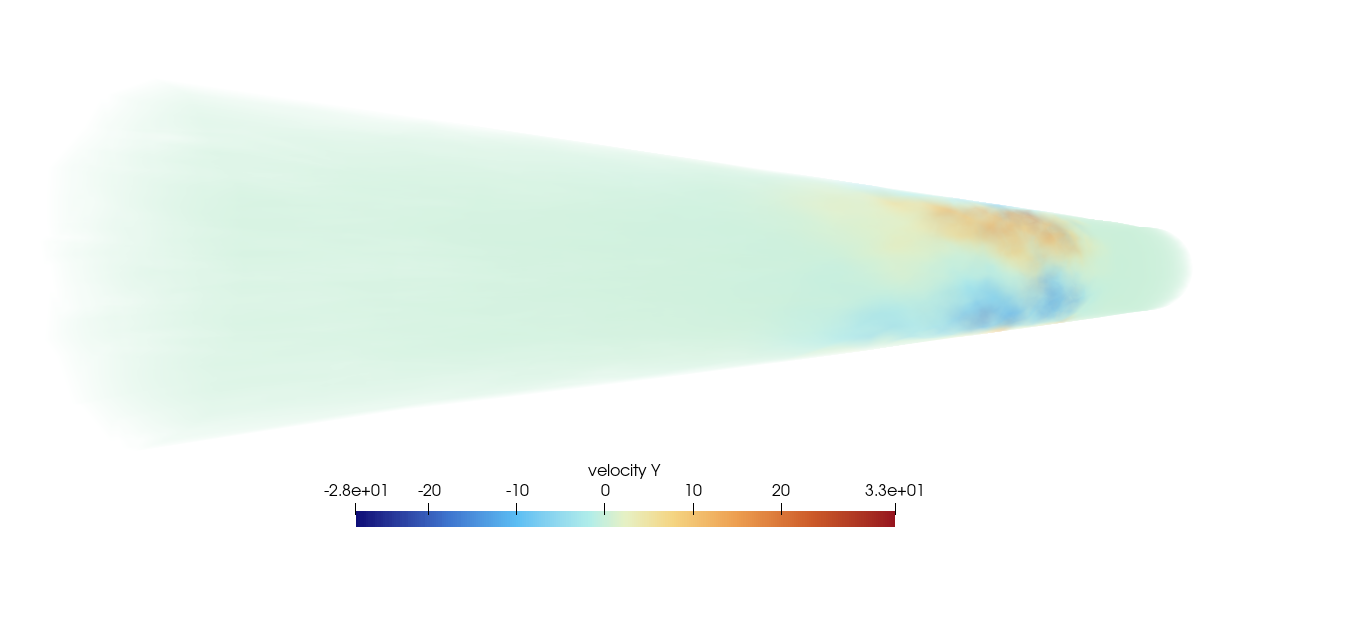}
        \caption{Fine}
    \end{subfigure}
    \hfill
    \begin{subfigure}[t]{0.49\textwidth}
        \centering
        \includegraphics[width=\linewidth]{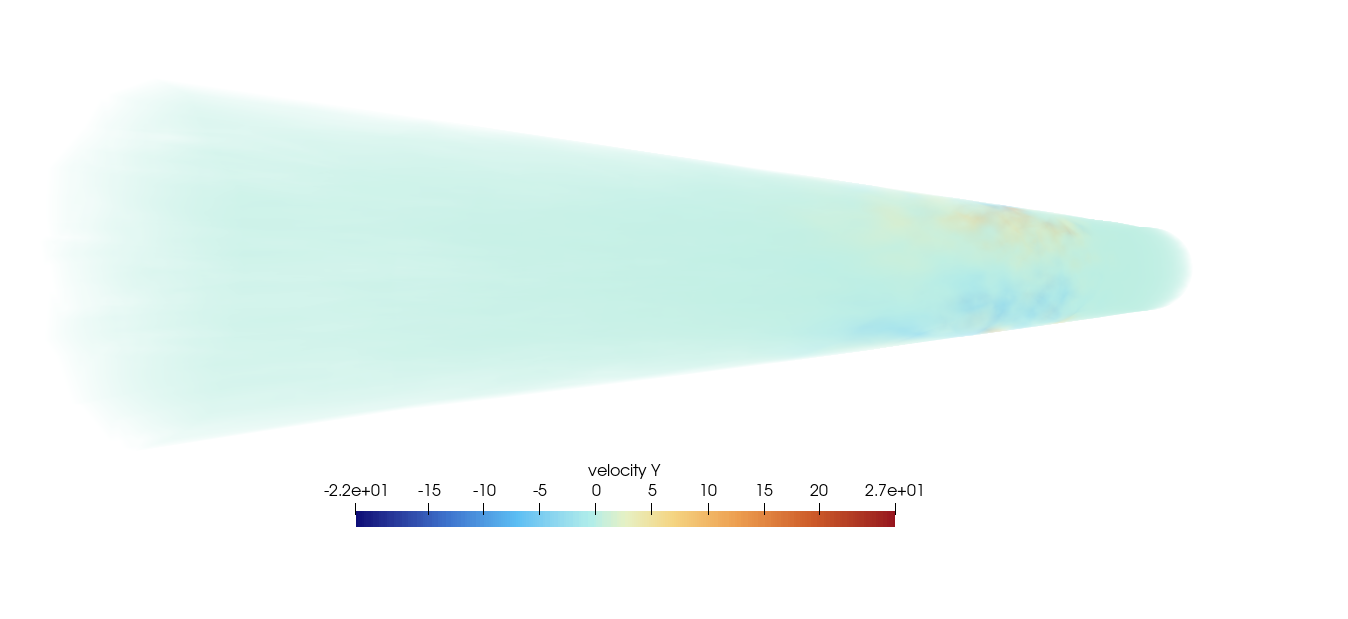}
        \caption{Medium}
    \end{subfigure}

    \begin{subfigure}[t]{0.49\textwidth}
        \centering
        \includegraphics[width=\linewidth]{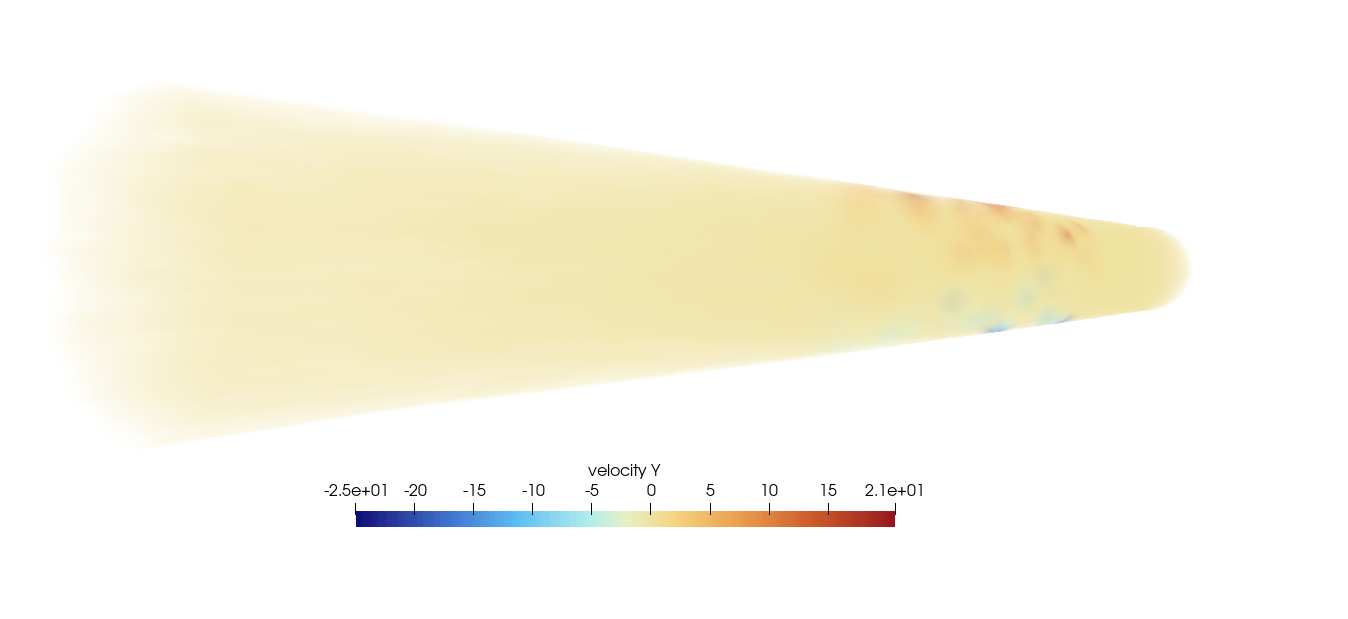}
        \caption{Coarse}
    \end{subfigure}
    \hfill
    \begin{subfigure}[t]{0.49\textwidth}\paragraph{Species transport along $y$}
        \centering
        \includegraphics[width=\linewidth]{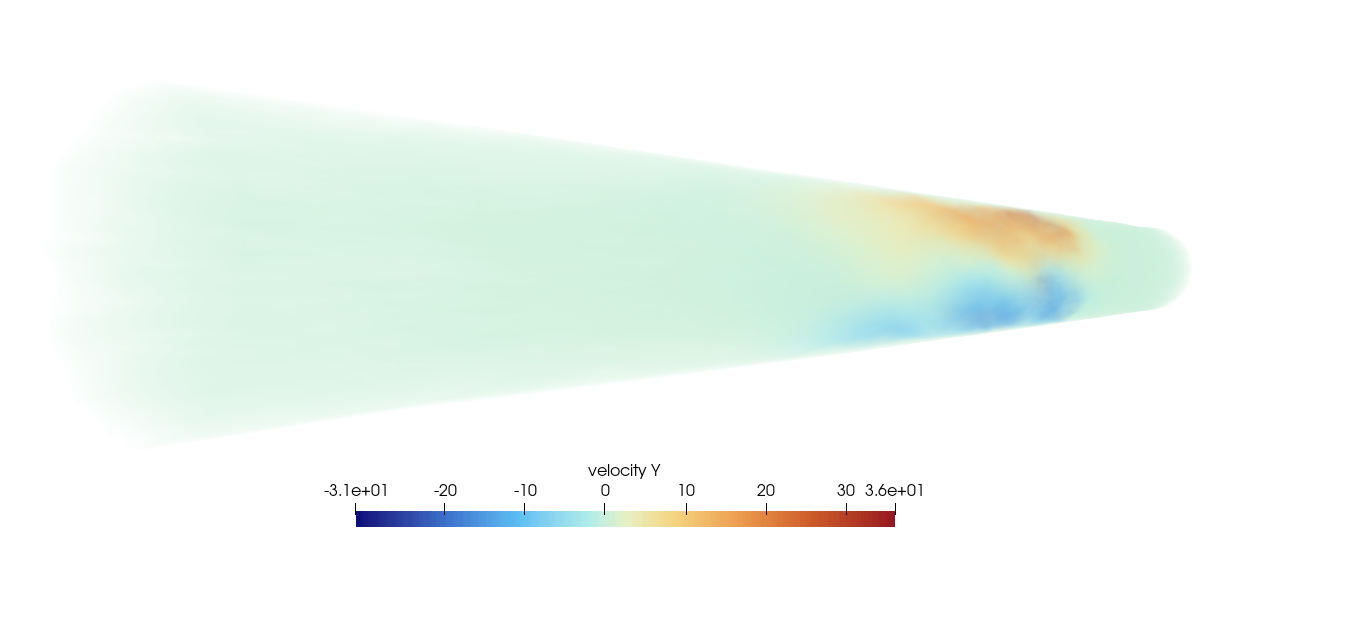}
        \caption{Combined}
    \end{subfigure}

    \caption{Multiscale decomposition of the species transport velocities in the $y$ direction at three different scales. The bottom right image shows the combined reconstruction}
    \label{fig:multiscale_y}
\end{figure}
\FloatBarrier
\paragraph{Species Transport Along $y$} 
In Figure~\ref{fig:multiscale_y}, the coarse scale captures only a few broad disturbances, primarily along the center of the pipe where the largest spatial scales are present. The intermediate and fine scales resolve the flow induced by the blades, capturing the localized mixing as the gases are driven upward from the bottom of the pipe and downward from the top, producing the characteristic rotational motion within the cross-section.

\end{document}